\documentclass[]{bytedance_seed}

\usepackage{amsmath}
\usepackage{amssymb}
\usepackage{amsthm}
\usepackage{mathtools}
\usepackage{bbm}
\usepackage{booktabs}
\usepackage{bm}
\usepackage{algorithm}
\usepackage{algpseudocode}
\usepackage{xspace}
\usepackage{marvosym}

\numberwithin{equation}{section}
\allowdisplaybreaks[2]
\theoremstyle{plain}
\newtheorem{theorem}{Theorem}[section]
\newtheorem{lemma}[theorem]{Lemma}

\theoremstyle{definition}

\theoremstyle{remark}

\newcommand{\Lcal}{\mathcal{L}}
\newcommand{\E}{\mathbb{E}}
\newcommand{\R}{\mathbb{R}}
\newcommand{\KL}{\operatorname{KL}}
\newcommand{\Normal}{\mathcal{N}}
\newcommand{\data}{p_{\mathrm{data}}}
\newcommand{\dd}{\,\mathrm{d}}
\newcommand{\ind}{\mathbbm{1}}
\newcommand{\sg}{\operatorname{sg}}
\newcommand{\method}{$\mathcal{C}ola2$\xspace}
\fancypagestyle{firststyle}{%
  \fancyhf{}%
  \fancyhead[L]{\vskip 4mm\includegraphics[width=53.3mm,height=6mm]{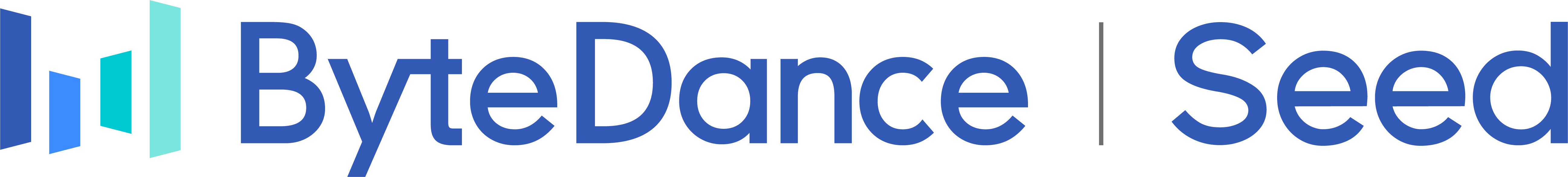}}%
  \fancyhead[R]{\vskip 1mm\raisebox{-5mm}[0pt][0pt]{\includegraphics[height=15mm]{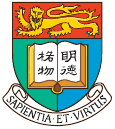}}}%
}

\title{\raisebox{-0.3ex}{\includegraphics[height=2.3ex]{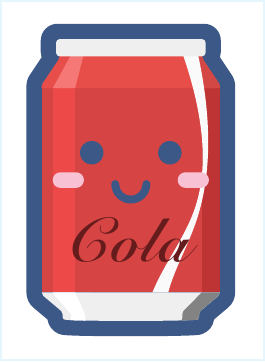}} Cola2: Co-Adaptive Hierarchical Language Modeling with Continuous Semantic Compression and\\ Balanced Discrete Generation}

\author[1,2,\ddagger]{Hongcan Guo}
\author[2,3]{Yian Zhao}
\author[2]{Feng Wang}
\author[4]{Shen Nie}
\author[2,\ddagger]{Guoqiang Wei}
\author[2,\dagger,\ddagger]{\\Yan Zeng}
\author[1,\text{\Letter}]{Hengshuang Zhao}

\affiliation[1]{The University of Hong Kong}
\affiliation[2]{ByteDance Seed}
\affiliation[3]{Peking University}
\affiliation{\\$^{4}$Renmin University of China}

\contribution[\dagger]{Project Lead}
\contribution[\ddagger]{Major Contributor}
\contribution[\text{\Letter}]{Corresponding author}

\abstract{
Hierarchical language generation separates continuous semantic modeling from discrete text realization, but its effectiveness depends on what information the latent space should compress and how the discrete generator should use it. We propose \method, which jointly studies continuous semantic compression, the discrete reconstruction--generation balance, and their co-adaptation. \method dynamically segments text into semantic spans, controls the compression rate with an auxiliary-loss-free negative-feedback mechanism, and combines reconstruction and generation supervision to learn compact and predictive latent representations. A block-causal masked diffusion model (MDM) decoder is jointly trained under clean, noisy, and dropped latent conditions, and classifier-free guidance (CFG) adjusts the balance between semantic reconstruction and autonomous generation. During training, token-prediction gradients update the encoder through the latents, driving the continuous representation and discrete generation to co-adapt, while Flow Matching prior learning is decoupled from representation learning via stop-gradient. At inference, the generated text is re-encoded under the fixed span layout, keeping the latent history consistent with the actual text context. Systematic component analyses and controlled comparisons on five downstream tasks with about 210B training tokens validate the core designs and show competitive, unsaturated scaling. With only 346.03B source tokens, the unconverged checkpoint reaches a 60.10 eight-task average and surpasses Llama-3-8B on HellaSwag and StoryCloze despite its over $43{\times}$ larger reported pretraining budget. These results indicate effective transfer of the Qwen-initialized causal decoder's generation capability and substantial scaling headroom for hierarchical language generation. More broadly, continuous-latent hierarchical modeling can combine effectively with established token-space generation paradigms, including autoregressive and masked-diffusion decoders.
}

\date{\today}
\projectlead{Yan Zeng at \email{zengyan.yanne@bytedance.com}}
\correspondence{Hengshuang Zhao at \email{hszhao@cs.hku.hk}}

\checkdata[Project Page]{\url{https://hongcanguo.github.io/Cola2}}

\begin{document}
\maketitle

\section{Introduction}
\label{sec:intro}

Large language models (LLMs) have made remarkable progress under the autoregressive (AR) paradigm~\citep{brown2020language,touvron2023llama}, but the fixed left-to-right generation order imposes an inherent serial dependency on inference. Diffusion language models (DLMs) \citep{austin2021structured,sahoo2024simple,nie2026large,arriola2025block} offer a new way to break this constraint. Cola DLM~\citep{guo2026continuous} further decomposes text generation into semantic organization in a continuous latent space and text realization in a discrete space; through latent prior modeling and conditional decoding, it provides a viable path toward hierarchical language generation beyond direct token modeling.

However, a structural division between continuous and discrete spaces does not automatically yield an effective division of labor. The key questions are what information the latent space should compress and how the discrete generator should use it. Shortening the latent sequence is not equivalent to obtaining a compact semantic representation, and a reconstruction-only objective does not guarantee that the latents can predict missing or future text~\citep{meshchaninov2026cosmos,jiang2026textldm,liang2026aurora}. Meanwhile, the decoder must not only faithfully realize the latent semantics, but also generate the remaining content from the text context when the latent condition is incomplete or imprecise~\citep{uziel2026crocodil}. Semantic compression and discrete generation therefore cannot be designed independently: the information the latents should retain must be determined jointly with the information the decoder can complete.

This interdependence further raises the question of how continuous and discrete representations should be learned jointly. Fixing the encoder in advance determines the information carried by the latents before the discrete generator is fully learned, limiting the continual adaptation of the representation to decoding needs. At the same time, letting the latent-prior fitting objective directly shape the encoder may push the latents toward directions that are easier to fit but not necessarily more beneficial for discrete generation. Hierarchical language modeling thus requires not only learning continuous representations and a discrete generator, but also specifying which objectives should jointly shape the representation and which learning processes should remain decoupled.

To address these issues, we propose \method, which jointly studies continuous semantic compression, the discrete reconstruction--generation balance, and their co-adaptation within a unified hierarchical language modeling framework. Its core idea is that, under limited latent capacity, the actual demands of text reconstruction and generation jointly determine what the continuous representation should retain, while the discrete generator handles the residual uncertainty that the latents do not explain. Specifically, \method dynamically segments text into semantic spans, controls the compression rate with an auxiliary-loss-free negative-feedback mechanism, and combines reconstruction and generation supervision to learn compact and predictive latent representations. On the discrete side, a block-causal masked diffusion model (MDM) decoder \citep{sahoo2024simple,arriola2025block} is jointly trained under clean, noisy, and dropped latent conditions, balancing semantic reconstruction with context-based autonomous generation, and classifier-free guidance (CFG) \citep{ho2022classifier} adjusts this balance at inference.

During training, the token-prediction loss is backpropagated through the latents to the encoder, so that the continuous representation and discrete generation co-adapt; the Diffusion Transformer (DiT) \citep{peebles2023scalable} learns the current latent distribution via Flow Matching \citep{lipman2023flow} without backpropagating gradients to the encoder, thereby decoupling prior fitting from representation learning. At inference, the model re-encodes the generated text block by block under the fixed span layout, keeping the latent history consistent with the actual text context and aligning the two spaces throughout generation.

Building on these ideas, we systematically study co-adaptive hierarchical language generation across continuous and discrete spaces from three perspectives: probabilistic modeling, representation learning, and generation performance. Our main contributions are as follows:
\begin{itemize}
    \item \textbf{We propose \method, a co-adaptive hierarchical language modeling framework.} From the perspective of conditional information decomposition, we analyze semantic compression, the reconstruction--generation balance, and continuous--discrete co-adaptation in a unified manner, clarifying the relation between the information retained by the latents and the residual left to discrete generation.
    \item \textbf{We develop a generation-oriented joint learning mechanism for continuous and discrete representations.} It combines dynamic semantic compression, generation-oriented supervision, and training under latent conditions of varying reliability, and coordinates representation learning with prior fitting through gradient decoupling, so that compact semantic representations and discrete generation capability co-evolve.
    \item \textbf{We validate the core designs and their scaling potential through systematic experiments.} Controlled five-task scaling to about 210B tokens and a separate 346.03B-source-token pretraining run show competitive, unsaturated scaling. The latter reaches a 60.10 average over eight tasks and surpasses Llama-3-8B on HellaSwag and StoryCloze despite over $43{\times}$ fewer reported tokens, demonstrating decoder capability transfer and compatibility with autoregressive and masked-diffusion token-space generators.
\end{itemize}

\section{Related Work}
\label{sec:related}

\subsection{Semantic Compression and Hierarchical Language Models}

Fixed-granularity methods shorten the modeled sequence through chunking or downsampling~\citep{nawrot2022hierarchical,yu2023megabyte}. Adaptive compression methods dynamically aggregate bytes or tokens according to content~\citep{pagnoni2025byte,hwang2026dynamic,qu2025dynamic}. Concept-level methods organize generation through sentence representations or concept prediction~\citep{barrault2024large,cao2026ncp}. In contrast, \method maintains the target compression rate through negative-feedback control without an auxiliary compression-rate loss, and combines reconstruction and generation supervision to learn compact and predictive continuous semantic representations.

\subsection{Token-Space Language Models}

Autoregressive methods generate text serially, token by token~\citep{brown2020language,grattafiori2024llama,yang2025qwen3}. Discrete diffusion methods predict multiple positions in parallel through iterative denoising~\citep{austin2021structured,lou2023discrete,sahoo2024simple,nie2026large,ye2508dream,sahoo2025diffusion}, and have been further extended to block-level generation and token editing~\citep{arriola2025block,bie2025llada2,bie2026llada2}. Continuous-augmented methods use paired continuous states to assist discrete denoising~\citep{zheng2026continuously,zhou2025coevolutionary,uziel2026crocodil}. In contrast, \method learns a semantic prior in a compressed latent space and then performs conditional generation with a discrete decoder, handling semantic modeling and text realization hierarchically.

\subsection{Continuous Representations and Latent-Space Language Models}

Token-aligned methods perform diffusion or flow modeling on continuous representations~\citep{li2022diffusion,gong2022diffuseq,gulrajani2023likelihood,chen2026langflow,yang2026continuous,lee2026flow,hu2026elf}. Autoencoder-based latent-space methods separate latent generation from text decoding~\citep{lovelace2023latent,zhang2023planner,meshchaninov2026cosmos}. Joint-learning methods further co-optimize the latent space and the generative model~\citep{guo2026continuous,meshchaninov2026train}. In contrast, \method drives representation learning with discrete reconstruction and generation supervision while isolating the prior-fitting gradient, so that semantic compression and the decoder's reconstruction--generation balance are optimized jointly.

\section{Cola2: Continuous Latent Diffusion for Hierarchical Text Modeling}
\label{sec:method}

We formulate hierarchical text generation as a continuous latent-variable model and, from an information-decomposition perspective, discuss semantic compression, the reconstruction--generation balance, and continuous--discrete co-adaptation (Section~\ref{sec:theory}). We then present the unified model, training procedure, latent-feedback inference procedure, and computational cost (Sections~\ref{sec:workflow}--\ref{sec:cost}).

\subsection{Probabilistic and Information-Theoretic Perspective}
\label{sec:theory}

\subsubsection{Hierarchical Probabilistic Modeling}

Let $C$ denote the observed context, $Y$ the text to be generated, and $Z=(z_1,\ldots,z_K)$ a continuous latent sequence with $z_k\in\R^{d_z}$. If $Y$ contains $T$ tokens, the compression rate is $\rho=K/T$. \method factorizes the text probability into a continuous latent prior and discrete conditional generation:
\begin{equation}
    p(Y\mid C)
    =
    \int p_\theta(Z\mid C)\,
    p_\psi(Y\mid C,Z)\dd Z.
    \label{eq:main-prob}
\end{equation}
During training, introducing the encoder $q_\phi(Z\mid C,Y)$ as an approximate posterior yields the conditional evidence lower bound (ELBO)
\begin{equation}
\begin{aligned}
    \Lcal_{\mathrm{cELBO}}
    ={}&
    \E_{q_\phi} [\log p_\psi(Y\mid C,Z)]\\
    &-\KL\!\left(q_\phi(Z\mid C,Y)\,\|\,p_\theta(Z\mid C)\right)
    \le \log p(Y\mid C).
\end{aligned}
\label{eq:main-celbo}
\end{equation}
Defining the conditional aggregated posterior
\begin{equation}
    \bar q_\phi(Z\mid C)
    =
    \int q_\phi(Z\mid C,Y)\,\data(Y\mid C)\dd Y,
    \label{eq:main-agg}
\end{equation}
the average negative ELBO decomposes as
\begin{equation}
\begin{aligned}
    -\E\Lcal_{\mathrm{cELBO}}
    ={}&
    \E[-\log p_\psi(Y\mid C,Z)]
    +I(Y;Z\mid C)\\
    &+\E_C\KL\!\left(\bar q_\phi(Z\mid C)\,\|\,p_\theta(Z\mid C)\right).
\end{aligned}
\label{eq:main-elbo-decomp}
\end{equation}
Thus, the encoder determines which information is retained within the limited latent capacity, the discrete generator decoder implements $p_\psi(Y\mid C,Z)$, and the latent DiT only needs to fit the $p_\theta(Z\mid C)$ induced by the online encoder.

\subsubsection{Semantic Compression and Information Density}

Given $C$, the uncertainty of the target text satisfies
\begin{equation}
    H(Y\mid C)
    =I(Y;Z\mid C)+H(Y\mid C,Z).
    \label{eq:main-info}
\end{equation}
If $S$ denotes a high-level semantic variable determined by $Y$, the chain rule of mutual information gives
\begin{equation}
\begin{aligned}
    H(Y\mid C)
    ={}&\underbrace{I(S;Z\mid C)}_{\text{semantic information}}
    +\underbrace{I(Y;Z\mid S,C)}_{\text{local residual information}}\\
    &+\underbrace{H(Y\mid C,Z)}_{\text{discrete generation residual}}.
\end{aligned}
\label{eq:main-decomp}
\end{equation}
Under the compression constraint $K=\rho T$, whether the latents are worth further modeling in continuous space depends not only on whether $K$ is sufficiently short, but more importantly on whether the information retained in the limited latent positions is semantic and predictive. To characterize this, we define the analytical quantity
\begin{equation}
    \mathcal D_{\mathrm{sem}}
    :=\frac{I(S;Z\mid C)}{K}
    =\frac{I(S;Z\mid C)}{\rho T},
    \label{eq:main-sem-density}
\end{equation}
which measures the density of high-level semantic information carried per latent position. This quantity characterizes the design objective rather than serving as a loss estimated directly during training.

Furthermore, partitioning $Y$ into a visible part $Y^{\mathrm{vis}}$ and a target part $Y^{\mathrm{tar}}$ gives
\begin{equation}
\begin{aligned}
    I(Y^{\mathrm{tar}};Z\mid C,Y^{\mathrm{vis}})
    ={}&H(Y^{\mathrm{tar}}\mid C,Y^{\mathrm{vis}})\\
    &-H(Y^{\mathrm{tar}}\mid C,Y^{\mathrm{vis}},Z).
\end{aligned}
\label{eq:main-predictive-info}
\end{equation}
Therefore, beyond local reconstructability, a high-quality latent space should also reduce the conditional uncertainty of future or missing tokens, i.e., concentrate more generative information within its limited capacity. Section~\ref{sec:workflow} instantiates this principle through dynamic semantic compression and generation-oriented latent supervision.

\subsubsection{Reconstruction--Generation Balance and Continuous--Discrete Co-adaptation}

Let $\widetilde Z$ be the latent condition obtained by passing $Z$ through a stochastic degradation channel, satisfying the conditional Markov chain $Y\rightarrow Z\rightarrow\widetilde Z$ given $C$. By the data processing inequality,
\begin{equation}
    I(Y;Z\mid C)
    \ge I(Y;\widetilde Z\mid C),
    \qquad
    H(Y\mid C,Z)
    \le H(Y\mid C,\widetilde Z).
    \label{eq:main-recon-gen-dpi}
\end{equation}
When the latent condition is intact, the decoder's main task is to realize the continuous semantics as the corresponding discrete tokens; as the condition is perturbed or removed, $H(Y\mid C,\widetilde Z)$ increases, and more uncertainty must be handled by the decoder's own discrete prediction capability. This gives an information-theoretic form of the reconstruction--generation trade-off.

On the other hand, since $H(Y\mid C)$ is determined by the data distribution, changing the encoder effectively reallocates
\begin{equation}
    H(Y\mid C)
    =I(Y;Z_\phi\mid C)+H(Y\mid C,Z_\phi).
    \label{eq:main-coadapt-info}
\end{equation}
If $E_\phi$ is fixed before the decoder is trained, this allocation is determined before the discrete generator is learned. \method instead jointly optimizes
\begin{equation}
    (\phi^\star,\psi^\star)
    =\arg\min_{\phi,\psi}
    \E_{\data(C,Y)}
    \E_{q_\phi(Z\mid C,Y)}
    [-\log p_\psi(Y\mid C,Z)]
    \quad
    \text{s.t.}\quad \frac{K}{T}\approx\rho^\star,
    \label{eq:main-coadapt-opt}
\end{equation}
so that token prediction retains a direct gradient to the encoder:
\begin{equation}
    \nabla_\phi\Lcal_{\mathrm{tok}}
    =
    \frac{\partial\Lcal_{\mathrm{tok}}}{\partial Z}
    \frac{\partial Z}{\partial\phi}
    \neq0.
    \label{eq:main-coadapt-grad}
\end{equation}
This lets the continuous latent space and the discrete realization co-adapt: the encoder learns semantic representations better suited to reconstruction and generation by the actual decoder, while latent-prior fitting is decoupled from this representation learning. The concrete mechanisms for compression, reliability training, gradient routing, and inference alignment are given in the next section.

\subsection{Workflow}
\label{sec:workflow}

Figure~\ref{fig:method_overview} instantiates the above information decomposition with three coordinated modules: the compression variational autoencoder (VAE) encoder $E_\phi$ learns continuous latents, the discrete generator decoder $D_\psi$ realizes the latent condition as discrete tokens, and the block-causal DiT $G_\theta$ learns the generative distribution of the detached online latents. Their relationship can be summarized as
\begin{equation}
    (C,Y)\xrightarrow{\ E_\phi\ } Z,
    \qquad
    (C,Z)\xrightarrow{\ D_\psi\ }Y,
    \qquad
    \sg(Z)\xrightarrow{\ \Lcal_{\mathrm{FM}}\ }G_\theta.
    \label{eq:workflow-overview}
\end{equation}
Here $D_\psi$ is a \emph{block-causal masked-diffusion decoder} (MDM), which is causal across blocks and performs bidirectional token denoising within the current block. Stage~1 jointly learns $E_\phi$ and $D_\psi$; Stage~2 retains this pathway and additionally trains a detached latent DiT. Full implementation details are given in Appendix~\ref{app:workflow-details}.

\subsubsection{Model Architecture}
\label{subsec:workflow-arch}

\paragraph{Compression VAE encoder.}
The target sequence $Y=(y_1,\ldots,y_T)$ is dynamically partitioned into $K$ contiguous spans. Let $\mathcal S_k$ denote the token index set of the $k$-th span and $L_k=|\mathcal S_k|$ its length. Then
\begin{equation}
    \bigcup_{k=1}^{K}\mathcal S_k=\{1,\ldots,T\},
    \qquad
    1\le L_k\le L_{\max},
    \qquad
    \rho=\frac{K}{T}.
    \label{eq:workflow-spans}
\end{equation}
The encoder $E_\phi$ first performs dynamic segmentation over token representations, then aggregates bidirectionally within each span and models causally along the latent axis, yielding a diagonal Gaussian posterior
\begin{equation}
    q_\phi(z_k\mid C,Y)
    =\Normal\!\left(\mu_k,\operatorname{diag}(\sigma_k^2)\right),
    \qquad
    z_k=\mu_k+\sigma_k\odot\epsilon_k,
    \quad \epsilon_k\sim\Normal(0,I).
    \label{eq:workflow-posterior}
\end{equation}
The compression rate is maintained near the target $\rho^\star$ by an independent, gradient-free rate-bias controller rather than learned through an additional differentiable rate loss. The dynamic boundaries, span-local pooling, and convergence properties of the controller are given in Appendix~\ref{app:segmentation}.

\begin{figure}[!t]
    \centering
    \includegraphics[width=\linewidth]{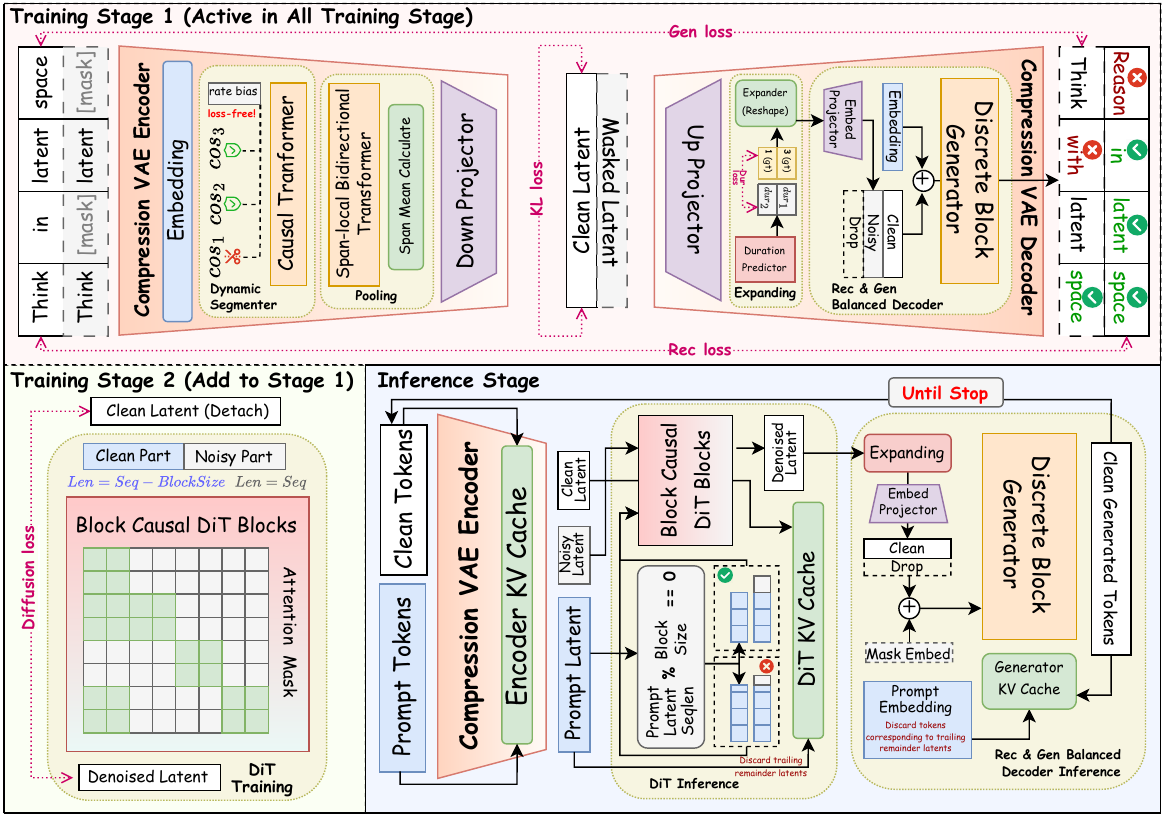}
    \caption{\textbf{Method overview.} Stage~1 jointly trains the compression encoder and the reconstruction--generation balanced decoder; Stage~2 additionally trains a DiT on detached latents; at inference, the DiT generates latents block by block and the decoder decodes them into tokens.}
    \label{fig:method_overview}
\end{figure}

\paragraph{Latent-to-token interface and block-causal MDM decoder.}
Each latent is first projected into the decoder hidden space, and a duration head predicts the number of tokens it covers:
\begin{equation}
    p_\psi(L_k\mid z_k)
    =\operatorname{softmax}\!\big(\operatorname{MLP}_{\mathrm{dur}}(W_{\mathrm{in}}z_k)\big).
    \label{eq:workflow-duration}
\end{equation}
Given the span lengths, the latents are expanded into a token-aligned condition $H^Z=(h^Z_1,\ldots,h^Z_T)$. If $g(t)=k$ indicates that token $t$ belongs to span $k$, then
\begin{equation}
    h^Z_t
    =\operatorname{Expand}\!\left(z_{g(t)},j_t,L_{g(t)}\right),
    \label{eq:workflow-expand}
\end{equation}
where $j_t$ is the relative position of token $t$ within its span. The block-causal MDM decoder $D_\psi$ then predicts the target tokens in token space:
\begin{equation}
    p_\psi(y_t\mid \mathcal K_t,H^Z_t)
    =\operatorname{softmax}\!\big(\operatorname{Head}(D_\psi(\mathcal K_t),H^Z_t)\big),
    \label{eq:workflow-discrete-generator}
\end{equation}
where $\mathcal K_t$ contains the historical clean blocks and the current masked block; attention is causal across blocks and bidirectional denoising is allowed within the current block, which cannot read its own clean copy. The attention mask and training objective are detailed in Appendix~\ref{app:discrete-generator-details}.

\paragraph{Latent generator.}
The latent sequence is partitioned into contiguous blocks of fixed size $B$,
\begin{equation}
    \mathcal B_j
    =\{(j-1)B+1,\ldots,\min(jB,K)\},
    \qquad
    J=\left\lceil\frac{K}{B}\right\rceil .
    \label{eq:workflow-blocks}
\end{equation}
The generator $G_\theta$ is a block-causal DiT: causal across blocks and bidirectional within the current block. It learns $p_\theta(Z\mid C)$ in continuous latent space via Flow Matching. High-level generation therefore iterates along a latent axis of length $K\approx\rho T$, while discrete details are recovered by $D_\psi$ in token space.

\subsubsection{Training Procedure and Objectives}
\label{subsec:workflow-train}

\paragraph{Stage 1: semantic compression and reconstruction--generation co-adaptation.}
Stage~1 jointly optimizes $E_\phi$, the latent-to-token interface, and $D_\psi$. To cover the spectrum in Eq.~\eqref{eq:main-recon-gen-dpi} from strong-condition reconstruction to weak-condition generation, we sample one of three mutually exclusive condition-reliability states for each latent block. Let $A\in\{\textsf{cond},\textsf{noise},\textsf{drop}\}$ and define
\begin{equation}
    \widetilde Z_A=
    \begin{cases}
        Z, & A=\textsf{cond},\\
        (1-u^{\mathrm c})Z+u^{\mathrm c}\varepsilon,
        & A=\textsf{noise},\\
        \varnothing, & A=\textsf{drop},
    \end{cases}
    \qquad
    \varepsilon\sim\Normal(0,I),
    \label{eq:workflow-reliability}
\end{equation}
where $\varnothing$ removes only the latent condition of the current block while retaining the same discrete token context. Let $\Lcal_{\mathrm{tok}}^{A}$ denote the MDM prediction objective under state $A$. The three states form
\begin{equation}
    \Lcal_{\mathrm{bal}}
    =\lambda_{\mathrm c}\Lcal_{\mathrm{tok}}^{\textsf{cond}}
    +\lambda_{\mathrm n}\Lcal_{\mathrm{tok}}^{\textsf{noise}}
    +\lambda_{\mathrm d}\Lcal_{\mathrm{tok}}^{\textsf{drop}},
    \label{eq:workflow-balance-loss}
\end{equation}
where $\Lcal_{\mathrm{tok}}^{A}$ is the masked denoising loss under the corresponding state, defined in Appendix~\ref{app:discrete-generator-details}.

To increase the generative information in Eq.~\eqref{eq:main-predictive-info}, we additionally introduce
\begin{equation}
    \Lcal_{\mathrm{gen}}=\Lcal_{\mathrm{encmask}},
    \label{eq:workflow-gen-loss}
\end{equation}
which masks part of the encoder input under a fixed span layout and directly supervises the encoder through the corresponding token prediction. Thus,
\begin{equation}
    \Lcal_{\mathrm{DGD}}
    =\Lcal_{\mathrm{bal}}
    +\lambda_{\mathrm{gen}}\Lcal_{\mathrm{gen}}.
    \label{eq:workflow-dg-loss}
\end{equation}

The KL regularization of the diagonal Gaussian posterior is
\begin{equation}
    \Lcal_{\mathrm{KL}}
    =\frac{K_{\mathrm{ref}}d_{\mathrm{ref}}}{T_{\mathrm{ref}}}
    \frac{1}{Kd_z}\sum_{k=1}^{K}
    \KL\!\left(q_\phi(z_k\mid C,Y)\,\|\,\Normal(0,I)\right),
    \label{eq:workflow-kl}
\end{equation}
and the duration predictor is supervised with ground-truth span lengths, giving $\Lcal_{\mathrm{dur}}$. The unified Stage~1 objective is
\begin{equation}
    \Lcal^{(1)}
    =\Lcal_{\mathrm{DGD}}
    +\beta_{\mathrm{KL}}\Lcal_{\mathrm{KL}}
    +\lambda_{\mathrm{dur}}\Lcal_{\mathrm{dur}}.
    \label{eq:workflow-stage1}
\end{equation}
Its key gradient relation is
\begin{equation}
    \nabla_\phi\Lcal_{\mathrm{DGD}}
    =\frac{\partial\Lcal_{\mathrm{DGD}}}{\partial Z}
     \frac{\partial Z}{\partial\phi}
    \neq0,
    \label{eq:workflow-dec-to-enc-grad}
\end{equation}
i.e., discrete token prediction directly shapes the encoder latents, so that $E_\phi$ and $D_\psi$ jointly determine the information allocation in Eq.~\eqref{eq:main-coadapt-info}. This pathway remains active in Stage~2.

\paragraph{Stage 2: detached latent diffusion co-training.}
Stage~2 retains Eq.~\eqref{eq:workflow-stage1} and adds only detached latent-prior fitting. Let $\bar Z=\sg(Z)$; for the $j$-th latent block, we sample
\begin{equation}
    u_j=\sigma(\xi_j),
    \qquad
    \xi_j\sim\Normal(\mu_u,\sigma_u^2),
    \label{eq:workflow-logitnormal}
\end{equation}
and all latents within the block share the same $u_j$. Under the linear Flow Matching path,
\begin{equation}
    z_u=(1-u)\bar z+u\epsilon,
    \qquad
    v^\star=\epsilon-\bar z,
    \qquad
    \epsilon\sim\Normal(0,I),
    \label{eq:workflow-flow}
\end{equation}
which gives
\begin{equation}
    \Lcal_{\mathrm{FM}}
    =\E\!\left[\left\|G_\theta(z_u,u;\bar Z_{<j})-v^\star\right\|_2^2\right].
    \label{eq:workflow-fm}
\end{equation}
The total Stage~2 objective is therefore
\begin{equation}
    \Lcal^{(2)}
    =\Lcal^{(1)}+\lambda_{\mathrm{FM}}\Lcal_{\mathrm{FM}},
    \label{eq:workflow-stage2}
\end{equation}
and hence
\begin{equation}
    \nabla_\phi\Lcal_{\mathrm{FM}}=0,
    \qquad
    \nabla_\phi\Lcal_{\mathrm{DGD}}\neq0,
    \label{eq:workflow-gradient-routing}
\end{equation}
i.e., prior fitting is decoupled from representation learning, while the continuous representation and the discrete realization remain co-adapted end to end. The complete gradient routing is given in Appendix~\ref{app:stage2-details}.

\subsubsection{Inference Procedure}
\label{subsec:workflow-infer}

\paragraph{Propose--decode--re-encode--commit.}
Given a prompt $C$, the encoder takes the posterior mode to obtain $K_{\mathrm p}$ prompt latents. Writing
\begin{equation}
    K_{\mathrm p}
    =B\left\lfloor\frac{K_{\mathrm p}}{B}\right\rfloor+r,
    \qquad 0\le r<B,
    \label{eq:workflow-prompt}
\end{equation}
only the complete block-aligned prefix is persistently committed; the trailing $r$ latents and their tokens serve as the clean prefix of the first generated block, and the remaining $B-r$ latent slots are initialized from Gaussian noise.

The $j$-th generated block follows a unified transaction
\begin{equation}
    Z_j^{\mathrm{prop}}
    \xrightarrow{\ D_\psi\ }
    Y_j
    \xrightarrow{\ E_\phi(\,\cdot\,;\widehat{\mathcal S}_j)\ }
    Z_j^{\mathrm{corr}}
    \xrightarrow{\ \mathrm{commit}\ }
    \mathcal K_{j+1},
    \label{eq:workflow-feedback}
\end{equation}
where $\widehat{\mathcal S}_j$ is the span layout already determined by the duration predictor; feedback does not rerun the dynamic segmenter. The proposal serves only the current token block, and the persistent latent history always stores $Z_j^{\mathrm{corr}}$.

This design first preserves train--inference consistency. During training, the clean history is obtained by encoding ground-truth tokens; at inference,
\begin{equation}
    Z_{\le j}^{\mathrm{hist}}
    =E_\phi\!\left(Y_{\le j};\widehat{\mathcal S}_{\le j}\right),
    \label{eq:workflow-train-infer-consistency}
\end{equation}
so the latent history read by the next block is still produced by the same encoder path as in training.

It also keeps the continuous latent context aligned with the discrete token context:
\begin{equation}
    \mathcal K^{Z}_{\le j}
    \equiv
    E_\phi\!\left(\mathcal K^{Y}_{\le j};\widehat{\mathcal S}_{\le j}\right).
    \label{eq:workflow-context-alignment}
\end{equation}
If $Z_j^{\mathrm{prop}}$ were committed directly, the subsequent DiT history and the token history actually generated by the decoder could stem from different semantic states, causing cumulative context bias; fixed-span feedback projects both back onto the same committed history. The cache protocol is detailed in Appendix~\ref{app:inference-details}.

\paragraph{Two-level CFG.}
The latent DiT modulates the historical latent condition in velocity space:
\begin{equation}
    \hat v_\gamma=v_{\mathrm u}+\gamma_{\mathrm{gen}}(v_{\mathrm c}-v_{\mathrm u}),
    \label{eq:workflow-cfg}
\end{equation}
where $v_{\mathrm c}$ and $v_{\mathrm u}$ are the velocity outputs with and without the historical latent condition, respectively, and $\gamma_{\mathrm{gen}}$ is the guidance scale. Its local probabilistic interpretation is given in Appendix~\ref{app:cfg}.

The MDM decoder uses \texttt{masked\_only} conditioning, so the historical token KV contains no latent condition, and applies prefix--latent two-axis CFG in logit space. Fixing the current masked block and the span layout, let $\ell_{ab}$ denote the logits obtained by keeping ($1$) or masking ($0$) the historical token prefix and the current latent condition, where $a,b\in\{0,1\}$ correspond to the two conditions, respectively. With two-axis scales $\gamma_{\mathrm p},\gamma_{\mathrm z}$, we define
\begin{equation}
\begin{aligned}
    \ell_{\gamma_{\mathrm p},\gamma_{\mathrm z}}
    ={}&\gamma_{\mathrm p}\gamma_{\mathrm z}\ell_{11}
      +\gamma_{\mathrm p}(1-\gamma_{\mathrm z})\ell_{10}\\
      &+(1-\gamma_{\mathrm p})\gamma_{\mathrm z}\ell_{01}
      +(1-\gamma_{\mathrm p})(1-\gamma_{\mathrm z})\ell_{00}.
\end{aligned}
\label{eq:workflow-prefix-cfg}
\end{equation}
Setting $\gamma_{\mathrm p}=1$ modulates only the latent, setting $\gamma_{\mathrm z}=1$ modulates only the prefix, and setting both to $1$ recovers the fully conditioned branch. This rule changes neither the training objective nor the feedback protocol. The four-branch definition and cache implementation are given in Appendices~\ref{app:prefix-cfg} and~\ref{app:prefix-cfg-implementation}.

Let $S_{\mathrm{gen}}$ be the number of DiT denoising steps. On the uniform time grid $u_i=1-i/S_{\mathrm{gen}}$, the explicit Euler update is
\begin{equation}
    Z_j^{(i+1)}
    =Z_j^{(i)}
    -\frac{1}{S_{\mathrm{gen}}}
    \hat v_\gamma\!\left(Z_j^{(i)},u_i\right),
    \qquad i=0,\ldots,S_{\mathrm{gen}}-1.
    \label{eq:workflow-euler}
\end{equation}
Given the proposal, $D_\psi$ completes the current token block with a few iterative demasking steps, followed by the feedback correction in Eq.~\eqref{eq:workflow-feedback}.

\subsection{Computational Cost}
\label{sec:cost}

Let $D_s$ be the training token budget of Stage~$s$, $c_{\mathrm{rep}}^{(s)}$ the average per-token cost of the encoder--decoder path in that stage, and $c_G^{(2)}$ the average per-token cost of the added DiT, both including the corresponding attention and output projections. Then
\begin{equation}
    \mathcal C_{\mathrm{train}}
    \approx
    D_1c_{\mathrm{rep}}^{(1)}
    +D_2\big(c_{\mathrm{rep}}^{(2)}+c_G^{(2)}\big).
    \label{eq:cost-train-total}
\end{equation}
Under the same prior architecture and a comparable layout, $K=\rho T$ reduces the dominant linear-layer and attention-pair terms to $\mathcal O(\rho)$ and $\mathcal O(\rho^2)$ of their token-axis counterparts, respectively, but the representation path persists in both stages.

Inference cost consists of DiT proposal/commit, discrete decoding (including condition preparation), and fixed-span feedback, denoted by $\mathcal C_G^{\mathrm{infer}}$, $\mathcal C_D^{\mathrm{infer}}$, and $\mathcal C_{\mathrm{fb}}$, respectively. Taking as reference the AR cost $\mathcal C_{\mathrm{AR}}^{\mathrm{infer}}$ under the same prompt/output budget and accounting convention,
\begin{equation}
    R_{\mathrm{infer}}
    =\frac{\mathcal C_G^{\mathrm{infer}}
           +\mathcal C_D^{\mathrm{infer}}
           +\mathcal C_{\mathrm{fb}}}
          {\mathcal C_{\mathrm{AR}}^{\mathrm{infer}}}.
    \label{eq:cost-infer-ratio}
\end{equation}
Both sides exclude the one-time prefill but retain the cost of subsequently reading the prompt KV.

Let $\rho_{\mathrm{gen}}$ be the realized latent/token ratio of the generated text, and let $H_D$ and $H_F$ be the numbers of per-block call rounds for condition preparation/decoding/clean-token commit and for feedback/corrected-cache commit, respectively. Ignoring boundary effects and assuming ideally parallel CFG branches, the number of serial call rounds $\mathcal H$ satisfies
\begin{equation}
    \frac{\mathcal H_{\text{\method}}}{\mathcal H_{\mathrm{AR}}}
    \approx
    \frac{\rho_{\mathrm{gen}}}{B}
    \big(S_{\mathrm{gen}}+H_D+H_F\big).
    \label{eq:cost-depth}
\end{equation}
This is a forward-round proxy rather than a latency guarantee. The end-to-end cost also depends on the actual number of CFG branches, the token decoder, and feedback; per-term counts, the AR comparison, and the KV analysis are given in Appendix~\ref{app:compute-details}.

\section{Experiments}
\label{sec:experiments}

\begingroup
\setcounter{topnumber}{2}
\setcounter{totalnumber}{4}
\renewcommand{\topfraction}{0.92}
\renewcommand{\textfraction}{0.06}
\renewcommand{\floatpagefraction}{0.85}
\setlength{\floatsep}{7pt plus 2pt minus 2pt}
\newcounter{expobs}
\newcommand{\obs}{\refstepcounter{expobs}Obs.~\theexpobs}

Following the training pipeline in Section~\ref{sec:workflow}, our experiments address three questions: (1) how to build a compressed latent space that is semantically rich and easy to model; (2) how to train the latent generator; and (3) how the decoder should allocate its capacity between conditional reconstruction and autonomous generation.

\subsection{Experimental Setup}
\label{subsec:exp-setup}

\textbf{Datasets.} For the component analyses and controlled scaling experiments, we train on the open-source OLMo~2 pretraining corpus~\citep{olmo20242}. The component analyses are conducted mainly on LAMBADA and examine the construction of the semantic latent space, the learning of the latent generator, and the reconstruction--generation balance of the decoder. The controlled scaling experiments are conducted on five downstream tasks, RACE~\citep{lai2017race}, OBQA~\citep{mihaylov2018can}, Story Cloze~\citep{mostafazadeh2016corpus}, LAMBADA~\citep{paperno2016lambada}, and SQuAD~\citep{rajpurkar2016squad}, and compare how different generation paradigms improve as training scales. The post-pretraining comparison additionally evaluates HellaSwag~\citep{zellers2019hellaswag}, NarrativeQA~\citep{kovcisky2018narrativeqa}, ARC-Easy~\citep{clark2018think}, TruthfulQA~\citep{lin2022truthfulqa}, and MMLU~\citep{hendrycks2020measuring}.

\textbf{Baselines.} Component studies compare \method variants and DLCM~\citep{qu2025dynamic} for rate control. Controlled scaling compares AR~\citep{brown2020language}, LLaDA~\citep{nie2026large}, Cola DLM~\citep{guo2026continuous}, and the uncompressed Cola-MDM ($\rho=1$) under matched model size, data, and evaluation; Cola-MDM shares the \method-MDM decoder to isolate the effect of compression. Post-pretraining comparisons use Llama-3-8B~\citep{grattafiori2024llama}, Qwen2.5-7B~\citep{hui2024qwen2}, Qwen3-8B-Base~\citep{yang2025qwen3}, and LLaDA-8B-Base~\citep{nie2026large}.

\textbf{Metrics.} Answers are extracted from model outputs according to each task format, and accuracy (\%) is computed by exact match. All tasks except LAMBADA use few-shot prompting. The internal ablations report LAMBADA accuracy by default, and the controlled scaling experiments report the accuracy on all five tasks and their average.

\textbf{Settings.} In the component analyses and controlled scaling experiments, we use the OLMo~2 (dolma2) tokenizer and the AdamW optimizer with $\beta=(0.9,0.95)$, weight decay $0.01$, and a maximum sequence length of $512$. Stage~1 jointly trains the encoder and decoder; Stage~2 trains the DiT while continuing to update the VAE, and blocks the Flow Matching gradient from reaching the encoder. The learning rates of the Stage~1 VAE and the Stage~2 DiT both start at $10^{-6}$, increase linearly over $5{,}000$ warmup steps to $10^{-4}$ and $1.5\times10^{-4}$, respectively, and then follow cosine decay to $10^{-6}$ at step $50{,}000$ and $10^{-5}$ at step $1{,}000{,}000$, respectively. In Stage~2, the VAE learning rate continues from Stage~1 with a $5{,}000$-step transition, and the encoder learning rate is $0.1\times$ that of the decoder. Unless otherwise stated, each latent covers $2$ tokens on average, the latent dimension is $64$, and the maximum span length is $8$. DiT noise times follow a logit-normal distribution ($\mathrm{loc}=2.0$, $\mathrm{scale}=1.0$); the decoder samples clean, noisy, and dropped latent conditions per block with probabilities $0.2$, $0.4$, and $0.4$, respectively. Inference uses $8$ DiT denoising steps and $4$ MDM demasking steps by default, and the controlled scaling experiments train on about $210$B tokens.

\subsection{Building the Semantic Latent Space}
\label{subsec:exp-stage1}

\subsubsection{Dynamic Segmentation and Rate Control}

\begin{figure}[t]
    \centering
    \includegraphics[width=\textwidth]{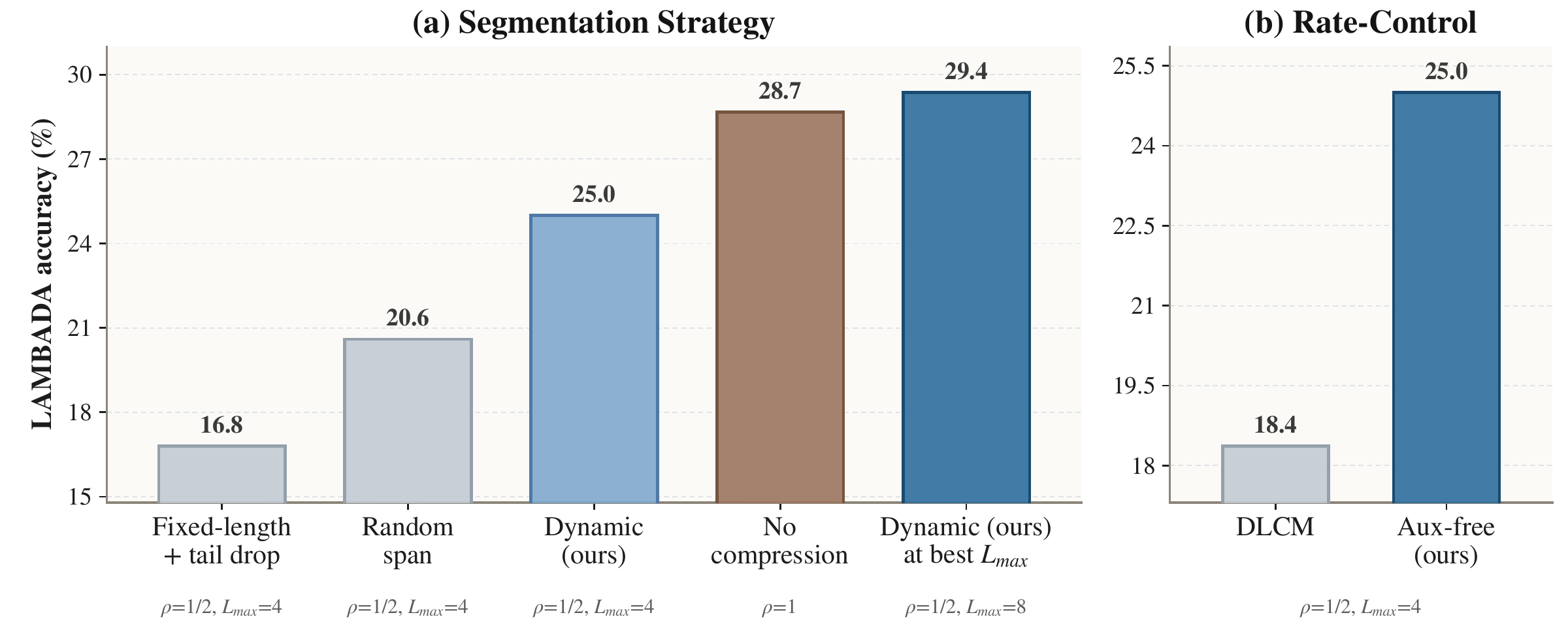}
    \caption{\textbf{Segmentation and rate control.} Dynamic segmentation outperforms fixed-length and random segmentation; auxiliary-free negative-feedback control outperforms DLCM~\citep{qu2025dynamic}.}
    \label{fig:exp-stage1-segmentation}
\end{figure}

\begin{figure}[t]
    \centering
    \includegraphics[width=\textwidth]{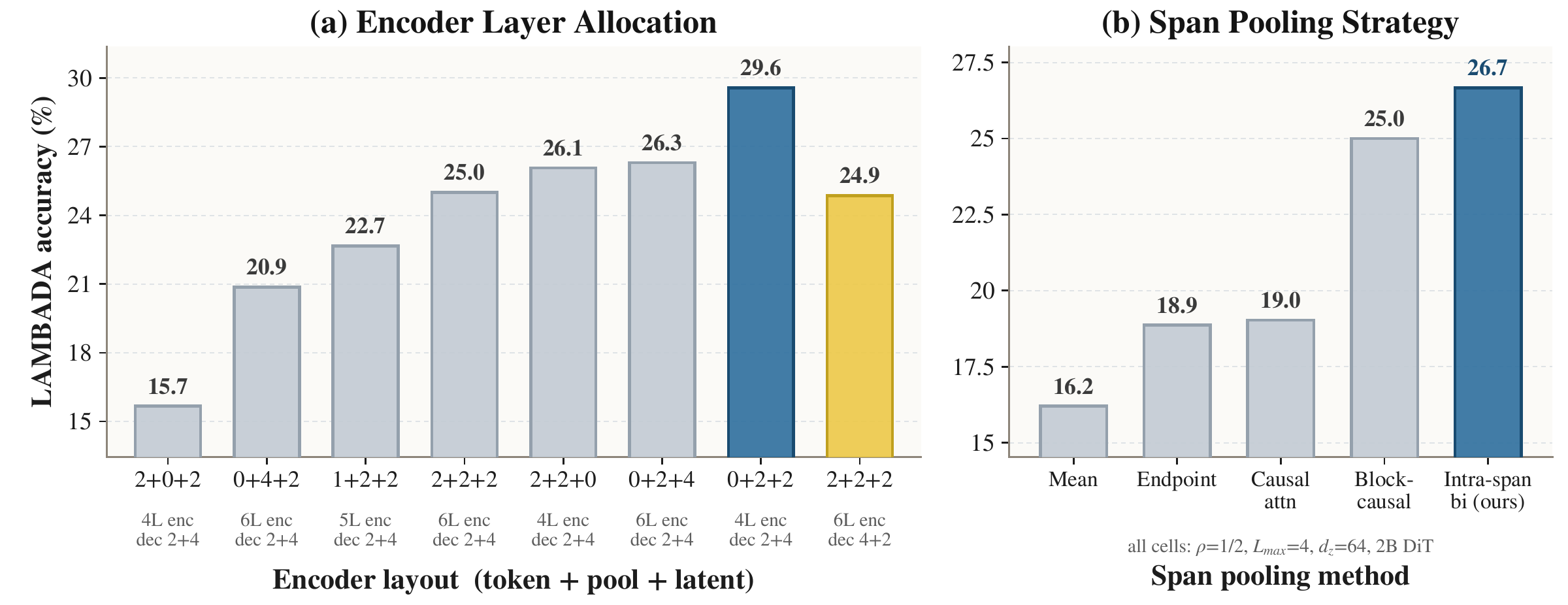}
    \caption{\textbf{VAE architecture.} Left: the allocation of encoder layers matters more than depth, with $0{+}2{+}2$ performing best. Right: intra-span bidirectional attention is the best pooling operator.}
    \label{fig:exp-stage1-architecture}
\end{figure}

\textbf{\obs{} Dynamic segmentation is essential.} With $\rho=1/2$ and $L_{\max}=4$, dynamic segmentation ($25.0\%$) outperforms random spans and fixed-length segmentation by $4.4$ and $8.2$ points (Figure~\ref{fig:exp-stage1-segmentation}); relaxing $L_{\max}$ to $8$ reaches $29.4\%$, exceeding the uncompressed model ($28.7\%$), which shows that content-based compression is not only lossless but also beneficial.

\textbf{\obs{} Feedback rate control beats an auxiliary rate loss.} Under the same setting, the former reaches $25.0\%$ whereas DLCM~\citep{qu2025dynamic} reaches only $18.4\%$: directly regulating the hard compression rate avoids the competition between the rate loss and representation learning.

\subsubsection{VAE Architecture}

\textbf{\obs{} Encoder layout matters more than size.} The four-layer $0{+}2{+}2$ encoder achieves the best result of $29.6\%$, outperforming all six-layer variants ($20.9$--$26.3\%$); removing the pooling layers ($2{+}0{+}2$) yields only $15.7\%$, whereas reallocating decoder layers has almost no effect ($25.0\%$ vs.\ $24.9\%$; Figure~\ref{fig:exp-stage1-architecture}, left). An oversized encoder makes the latent distribution more complex and harder for the DiT to model. Intra-span bidirectional attention performs best ($26.7\%$), followed by block-causal attention ($25.0\%$), while fully causal attention, endpoint pooling, and mean pooling all stay at or below $19.0\%$ (Figure~\ref{fig:exp-stage1-architecture}, right).

\subsubsection{Compression and Latent Capacity}

\begin{figure}[t]
    \centering
    \includegraphics[width=\textwidth]{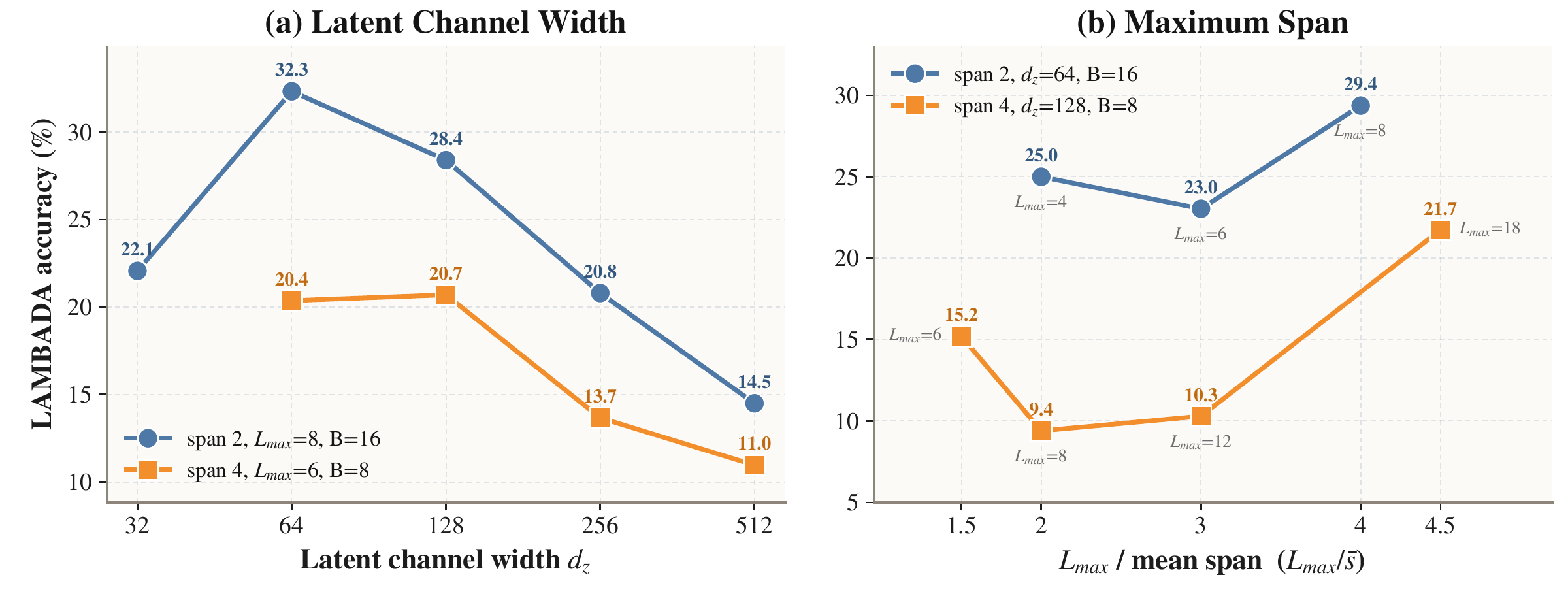}
    \caption{\textbf{Latent width and maximum span length.} Left: $d_z$ has an interior optimum. Right: the effect of $L_{\max}$ is non-monotonic.}
    \label{fig:exp-stage1-capacity}
    \label{fig:exp-stage1-maxspan}
\end{figure}

\begin{figure}[t]
    \centering
    \includegraphics[width=\textwidth]{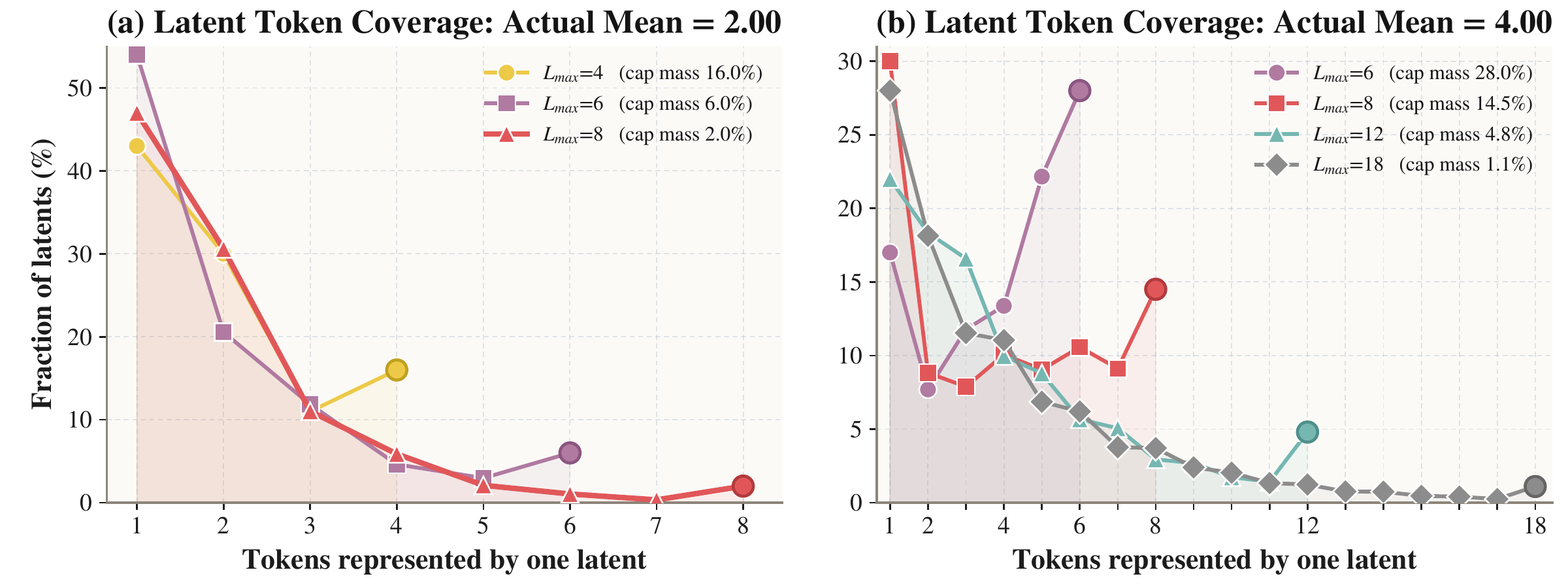}
    \caption{\textbf{Distribution of the number of tokens covered by each latent.} Increasing $L_{\max}$ reduces the number of forcibly truncated spans.}
    \label{fig:exp-stage1-span-distribution}
\end{figure}

\begin{table}[t]
    \centering
    \caption{\textbf{Best configuration at each compression level.} Accuracy decreases as compression becomes stronger.}
    \label{tab:exp-stage1-compression}
    \small
    \setlength{\tabcolsep}{10pt}
    \renewcommand{\arraystretch}{1.10}
    \begin{tabular}{cccc}
        \toprule
        \textbf{Mean span length} & \textbf{Latent width $d_z$} & \textbf{$L_{\max}$} & \textbf{LAMBADA (\%)} \\
        \midrule
        $2$ & $64$  & $8$  & \textbf{32.3} \\
        $3$ & $128$ & $8$  & 26.9 \\
        $4$ & $256$ & $18$ & 25.2 \\
        \bottomrule
    \end{tabular}
\end{table}

\begin{figure}[t]
    \centering
    \includegraphics[width=\textwidth]{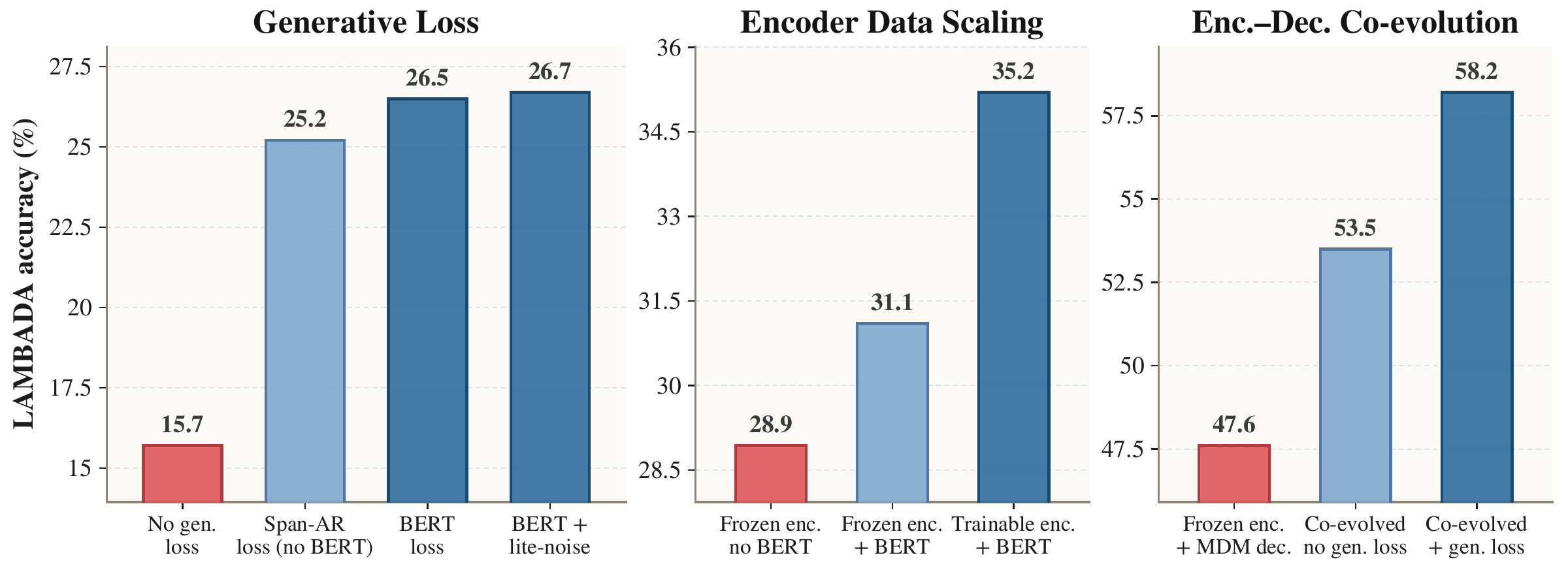}
    \caption{\textbf{Generative supervision.} Left: generative objectives substantially outperform pure reconstruction. Middle: the gain is larger when the encoder is trainable. Right: co-evolving the encoder and decoder with the generative loss performs best.}
    \label{fig:exp-stage1-generative}
\end{figure}

\textbf{\obs{} Wider latents are not always better.} With mean span length $2$, accuracy peaks at $32.3\%$ at $d_z=64$ and drops to $14.5\%$ at $d_z=512$ (Figure~\ref{fig:exp-stage1-capacity}, left). A larger $d_z$ adds capacity, but also increases the complexity of the distribution that the DiT must fit.

\textbf{\obs{} Compression rate, $d_z$, and $L_{\max}$ must be tuned jointly.} Even with the best configuration at each compression level, accuracy still falls from $32.3\%$ (span length $2$) to $25.2\%$ (span length $4$), as shown in Table~\ref{tab:exp-stage1-compression}. The effect of $L_{\max}$ (Figure~\ref{fig:exp-stage1-maxspan}, right) is governed by the truncation ratio (Figure~\ref{fig:exp-stage1-span-distribution}): when truncation is frequent, a smaller $L_{\max}$ keeps each latent simpler; once the cap mass drops to about $1\%$ (span length $4$, $L_{\max}=18$), preserving complete semantic spans dominates and accuracy rises to $21.7\%$.

\subsubsection{Generative Supervision of the Encoder}

\begin{lemma}[Generative Supervision Promotes Predictive Representations]
\label{lem:semantic}
When the supervised tokens are hidden from the encoder, the generative objective cannot be satisfied by copying the current targets. When the encoder is trainable, reducing this objective encourages the latent to retain information predictive of missing or future tokens. If the encoder is frozen, the objective only improves the decoder's use of information already present in the latent. See Appendix~\ref{app:semantic} for the proof.
\end{lemma}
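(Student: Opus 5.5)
We formalize the generative objective as a conditional cross-entropy and read each of the three claims off the decomposition of the optimal loss. Partition the target into a visible part $Y^{\mathrm{vis}}$ and a supervised part $Y^{\mathrm{tar}}$, as in Eq.~\eqref{eq:main-predictive-info}. Under $\Lcal_{\mathrm{encmask}}$ the encoder sees only $(C,Y^{\mathrm{vis}})$ under the fixed span layout, so $Z\sim q_\phi(Z\mid C,Y^{\mathrm{vis}})$. The loss is
\begin{equation*}
\Lcal_{\mathrm{gen}}(\phi,\psi)=\E\big[-\log p_\psi(Y^{\mathrm{tar}}\mid C,Y^{\mathrm{vis}},Z)\big].
\end{equation*}

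\textbf{Step 1 (no copying).} Given $(C,Y^{\mathrm{vis}})$, $Z$ is generated without access to $Y^{\mathrm{tar}}$, so we have the conditional Markov chain $Y^{\mathrm{tar}}\to (C,Y^{\mathrm{vis}})\to Z$. The data processing inequality (the same argument used for Eq.~\eqref{eq:main-recon-gen-dpi}) then gives
\begin{equation*}
I(Y^{\mathrm{tar}};Z\mid C,Y^{\mathrm{vis}})\le I(Y^{\mathrm{tar}};Y^{\mathrm{vis}}\mid C,Y^{\mathrm{vis}})=0 .
\end{equation*}
So, at the level of the aggregated training distribution, $Z$ carries no target-specific information beyond what the visible context implies. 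The loss is therefore bounded below by $H(Y^{\mathrm{tar}}\mid C,Y^{\mathrm{vis}})$ minus nothing coming from $Z$, and it cannot approach zero by the encoder copying targets. More usefully, when the decoder $p_\psi$ reads only part of the visible context (e.g.\ block-causal history) while the encoder reads all of $Y^{\mathrm{vis}}$, the same chain shows that the only way $Z$ reduces the loss is by encoding predictive features of the visible context, never the targets themselves.

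\textbf{Step 2 (trainable encoder).} Use the standard Gibbs decomposition. For any $\psi$,
\begin{equation*}
\Lcal_{\mathrm{gen}}=H(Y^{\mathrm{tar}}\mid C,\mathcal V,Z)+\E\,\KL\big(p(Y^{\mathrm{tar}}\mid C,\mathcal V,Z)\,\|\,p_\psi\big),
\end{equation*}
where $\mathcal V$ is the decoder-visible context. Rewrite the first term as $H(Y^{\mathrm{tar}}\mid C,\mathcal V)-I(Y^{\mathrm{tar}};Z\mid C,\mathcal V)$, which mirrors Eq.~\eqref{eq:main-predictive-info}. The term $H(Y^{\mathrm{tar}}\mid C,\mathcal V)$ is independent of $(\phi,\psi)$. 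Hence at an optimal decoder the objective depends on $\phi$ only through $-I(Y^{\mathrm{tar}};Z_\phi\mid C,\mathcal V)$, and by Eq.~\eqref{eq:workflow-dec-to-enc-grad} the gradient reaches $\phi$. Minimizing the loss jointly therefore increases the predictive information that the latent retains about missing or future tokens.

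\textbf{Step 3 (frozen encoder).} If $\phi$ is fixed, the mutual-information term is a constant. The only quantity the loss can reduce is the KL term, i.e.\ how well $p_\psi$ exploits the information already in $Z$. This is exactly the third claim.

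The main obstacle is Step 2's reliance on an optimal or near-optimal decoder. For a suboptimal $\psi$, reducing the loss in $\phi$ could in principle shrink the KL term rather than raise the mutual information. I would handle this by stating the claim at the decoder optimum, or more simply by noting the variational bound $\Lcal_{\mathrm{gen}}\ge H(Y^{\mathrm{tar}}\mid C,\mathcal V)-I(Y^{\mathrm{tar}};Z_\phi\mid C,\mathcal V)$: this shows that any achievable reduction below the context-only entropy requires positive predictive information in $Z_\phi$. That justifies the word "encourages" rather than a strict monotonicity claim.
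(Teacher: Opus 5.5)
Your argument uses the same two ingredients as the paper's proof in Appendix~\ref{app:semantic}. The first is the Markov chain from the hidden targets, through the encoder-visible text, to the masked-input latent, combined with data processing. The second is the split of the cross-entropy into conditional entropy plus a KL term, so that at the optimal predictor the loss is $H-I$ and only the information term depends on the encoder.

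Where you differ is the conditioning. The paper's predictor reads the visible text only through the latent, $p_\eta(Y_M\mid C,Z^M,M)$. Its no-copying statement is therefore the nontrivial bound \eqref{eq:app-mask-dpi}, $I(Y_M;Z^M\mid C,M)\le I(Y_M;Y_{\bar M}\mid C,M)$, and the optimum \eqref{eq:app-mask-final} contains a generally positive information term for a trainable encoder to raise. You instead introduce a decoder-visible context $\mathcal V$. It interpolates between the paper's abstraction ($\mathcal V$ trivial) and a decoder that reads all of $Y^{\mathrm{vis}}$, where your Step~1 shows the latent is worthless. This is more general. It makes explicit when the second claim has content at all: only when the encoder sees visible text the decoder does not, e.g.\ the unmasked spans of touched blocks, which the decoder receives fully masked. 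You also spell out the frozen-encoder case, which the paper leaves implicit in the same decomposition. Your treatment of the optimal-decoder caveat via $\KL\ge 0$ matches the paper's ``conditionally optimal predictor'' assumption.

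Two repairs are needed for your version to be internally consistent. First, you define $\Lcal_{\mathrm{gen}}$ with $p_\psi(Y^{\mathrm{tar}}\mid C,Y^{\mathrm{vis}},Z)$. Under that definition Step~1 forces $I(Y^{\mathrm{tar}};Z\mid C,Y^{\mathrm{vis}})=0$, so the optimal loss no longer depends on $\phi$ and Step~2's conclusion is vacuous. Define the loss with $\mathcal V$ from the start, and state the no-copying step as $I(Y^{\mathrm{tar}};Z\mid C,\mathcal V)\le I(Y^{\mathrm{tar}};Y^{\mathrm{vis}}\mid C,\mathcal V)$. This follows from the same chain, since $Z$ depends only on $(C,Y^{\mathrm{vis}})$ and independent noise. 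It recovers the paper's bound when $\mathcal V$ is trivial and your zero when $\mathcal V=Y^{\mathrm{vis}}$. Your lower bound $\Lcal_{\mathrm{gen}}\ge H(Y^{\mathrm{tar}}\mid C,Y^{\mathrm{vis}})$ still holds for any $\mathcal V$ determined by $(C,Y^{\mathrm{vis}})$.

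Second, condition throughout on the mask $M$ and the fixed span layout, as the paper does with $M$. The layout is computed in the clean forward pass from the full $Y$, targets included, and the encoder then reads the masked input under that layout. Without this conditioning, the chain $Y^{\mathrm{tar}}\to(C,Y^{\mathrm{vis}})\to Z$ need not hold, because boundary information about the targets leaks through the layout.
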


\textbf{\obs{} Generative supervision is key; its form is secondary.} Pure reconstruction reaches only $15.7\%$, and adding the Span-AR or BERT-style objective raises accuracy to $25.2\%$ and $26.5\%$, respectively (Figure~\ref{fig:exp-stage1-generative}). With a frozen encoder, the BERT-style objective yields a $2.2$-point gain ($28.9\%\to31.1\%$), and unfreezing the encoder reaches $35.2\%$. Consistent with Lemma~\ref{lem:semantic}, the generative objective improves both the completion ability of the decoder and the latent itself.

\textbf{\obs{} Encoder and decoder should co-evolve.} With the MDM decoder, a frozen encoder yields $47.6\%$, end-to-end co-evolution raises accuracy to $53.5\%$, and further adding the generative loss reaches $58.2\%$ (Figure~\ref{fig:exp-stage1-generative}, right). What the latent should retain must be determined jointly with what the decoder can complete.

\subsection{Learning the Latent Generator}
\label{subsec:exp-stage2}

\subsubsection{VAE Update Policy and Gradient Detachment}

\begin{lemma}[Prior Fitting and Representation Learning Should Be Decoupled]
\label{lem:hacking}
The DiT Flow Matching objective only requires the current latent to be easier for the prior to fit; its gradient is not guaranteed to align with the token-side representation objective. Applying stop-gradient to the DiT path ensures that prior fitting does not directly rewrite the encoder, while preserving end-to-end encoder updates from the decoder token loss. See Appendix~\ref{app:hacking} for the proof.
\end{lemma}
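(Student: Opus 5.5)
The plan is a direct gradient computation on the Stage~2 objective $\Lcal^{(2)}=\Lcal^{(1)}+\lambda_{\mathrm{FM}}\Lcal_{\mathrm{FM}}$ in Eq.~\eqref{eq:workflow-stage2}, together with an explicit example showing that the two objectives can disagree. We first write the Flow Matching loss without stop-gradient as a function of the encoder parameters. Along the path $z_u=(1-u)z+u\epsilon$ with target $v^\star=\epsilon-z$, where $z=z_\phi(C,Y)$, the chain rule gives
\begin{equation*}
\nabla_\phi\Lcal_{\mathrm{FM}}^{\mathrm{attached}}
=\E\!\left[2\big(G_\theta(z_u,u;Z_{<j})-v^\star\big)^{\!\top}
\Big((1-u)\,\partial_{z}G_\theta+\partial_{Z_{<j}}G_\theta\,\tfrac{\partial Z_{<j}}{\partial z}+I\Big)\frac{\partial z}{\partial\phi}\right],
\end{equation*}
which is generically nonzero. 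The stationary points of this term in $\phi$ are exactly the latent configurations that minimize the prior-fitting error. This error is essentially $\E_C\KL(\bar q_\phi\|p_\theta)$ from Eq.~\eqref{eq:main-elbo-decomp}, up to the flow-matching/likelihood relation, which we invoke only qualitatively. The key point is that this quantity involves neither $p_\psi$ nor $H(Y\mid C,Z_\phi)$. Hence its descent direction has no reason to align with $\nabla_\phi\Lcal_{\mathrm{DGD}}$.

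To make ``not guaranteed to align'' concrete, we exhibit a degenerate direction. Take the scaling $z\mapsto cz$ with $c\to0$, or any collapse of $z$ toward a fixed point, or toward a simple distribution that $G_\theta$ already represents. The FM loss becomes trivially small, since the velocity target $\epsilon-z$ becomes independent of the data. Meanwhile, by the data processing inequality in Eq.~\eqref{eq:main-recon-gen-dpi}, $I(Y;Z\mid C)$ drops and $H(Y\mid C,Z)$ rises, so $\Lcal_{\mathrm{tok}}$ increases. So there exist directions with $\langle\nabla_\phi\Lcal_{\mathrm{FM}},\nabla_\phi\Lcal_{\mathrm{DGD}}\rangle<0$; this is the ``hacking'' failure mode. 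The KL normalization does not rule this out: it only anchors the posterior scale softly, and we will note that it alone cannot prevent the drift.

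Next, we insert $\bar Z=\sg(Z)$. By definition of stop-gradient, $\partial\bar Z/\partial\phi=0$, and $\Lcal_{\mathrm{FM}}$ depends on $\phi$ only through $\bar Z$ and $\bar Z_{<j}$. The chain rule therefore yields $\nabla_\phi\Lcal_{\mathrm{FM}}=0$ exactly. This gives $\nabla_\phi\Lcal^{(2)}=\nabla_\phi\Lcal^{(1)}$, so the encoder update in Stage~2 coincides with the Stage~1 update. In particular, the term $\nabla_\phi\Lcal_{\mathrm{DGD}}$ in Eq.~\eqref{eq:workflow-dec-to-enc-grad} survives unchanged, which establishes Eq.~\eqref{eq:workflow-gradient-routing}. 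The gradient with respect to $\theta$ is also unaffected, so the DiT still fits the current induced aggregated posterior $\bar q_\phi$.

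The main obstacle is the non-alignment claim, because a precise statement needs an explicit witness rather than an assertion. I would make it rigorous via the scaling argument above: compute both directional derivatives at a generic point and show they have opposite signs. A cleaner alternative is a linear-Gaussian toy model ($z=a y$, $G$ linear), where the FM optimum is at $a\to0$ while reconstruction requires $|a|$ large. Finally, we note the scope of the claim. Decoupling only removes the direct rewrite of the encoder; the DiT still influences the encoder indirectly through shared training dynamics, and we do not claim otherwise.
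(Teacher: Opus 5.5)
Your stop-gradient half is correct and coincides with the paper's Eq.~\eqref{eq:app-full-detach}. Your chain-rule expression is Eq.~\eqref{eq:app-hack-zgrad} with $\alpha_t=1-u$, $a_t=-1$. The gap is in the non-alignment witness. The map $z\mapsto cz$ with $c\neq0$ rescales $\mu_k$ and $\sigma_k\odot\epsilon_k$ together, so it is a bijection of the latent. Hence $I(Y;Z\mid C)$ and $H(Y\mid C,Z)$ are \emph{unchanged}: Eq.~\eqref{eq:main-recon-gen-dpi} holds with equality for an invertible channel, and on the clean-condition path the decoder can absorb the scale into $W_{\mathrm{in}}$. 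Your toy model $z=ay$ has the same defect, since $y=z/a$ is recovered exactly for every $a\neq0$. So neither argument shows that an FM-favoured direction must raise the token loss. Nor can a sign be fixed ``at a generic point'': the $\Lcal_{\mathrm{tok}}$ directional derivative vanishes wherever $\phi$ is stationary for it. Only the degenerate endpoint $c=0$ loses information, and along your path $\Lcal_{\mathrm{KL}}$ diverges there through $-\log\sigma^2$, contrary to your remark that the KL term only anchors the scale softly. Information is genuinely destroyed only if the data-dependent part shrinks while the posterior noise stays fixed. Examples are $z=ay+\sigma\epsilon_0$ with $\sigma$ held fixed, or the paper's pair $q_{\mathrm{sem}}(z\mid S)=\Normal(mSe_1,I)$ versus $q_{\mathrm{col}}=\Normal(0,I)$ in Eq.~\eqref{eq:app-collapse-example}. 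The latter is exactly the minimizer of the KL regularizer, so that term offers no protection against this collapse.

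For such a witness you need the step you replaced by the KL heuristic: the paper's decomposition Eq.~\eqref{eq:app-fm-minrisk}. It says the $\theta$-optimal FM risk is $\E[\operatorname{Var}(v^\star\mid z_u,u,c)]$, an irreducible, $\phi$-dependent quantity that the encoder can lower. Identifying the FM error with $\E_C\KL(\bar q_\phi\|p_\theta)$ points at the wrong mechanism. An expressive DiT could make that KL vanish for every $\phi$, leaving the encoder no incentive at all; your own scaling computation works only because the irreducible denoising risk depends on the latent geometry. Concretely, on the linear path $v^\star=(z_u-\bar z)/u$, so $\operatorname{Var}(v^\star\mid z_u)=\operatorname{Var}(\bar z\mid z_u)/u^2$. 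Write $Z=mSe_1+N$ and condition additionally on $S$. This shows that the denoising MMSE under $q_{\mathrm{sem}}$ is at least that under $q_{\mathrm{col}}$, and strictly larger for $m>0$ and $u\in(0,1)$. That turns the paper's ``it is possible'' into an actual inequality, while $I(S;Z)$ drops to $0$. Finally, ``there exist directions with $\langle\nabla_\phi\Lcal_{\mathrm{FM}},\nabla_\phi\Lcal_{\mathrm{DGD}}\rangle<0$'' is not well formed. That inner product is a single number at each $\phi$, and a direction along which one loss falls while the other rises exists for any two gradients that are not positively collinear. What the witness establishes is that the two objectives have different minimizers, which is what ``not guaranteed to align'' requires. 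The paper states the first-order conflict condition Eq.~\eqref{eq:app-conflict-condition} only conditionally, not as an existence claim.
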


\begin{figure}[t]
    \centering
    \includegraphics[width=\textwidth]{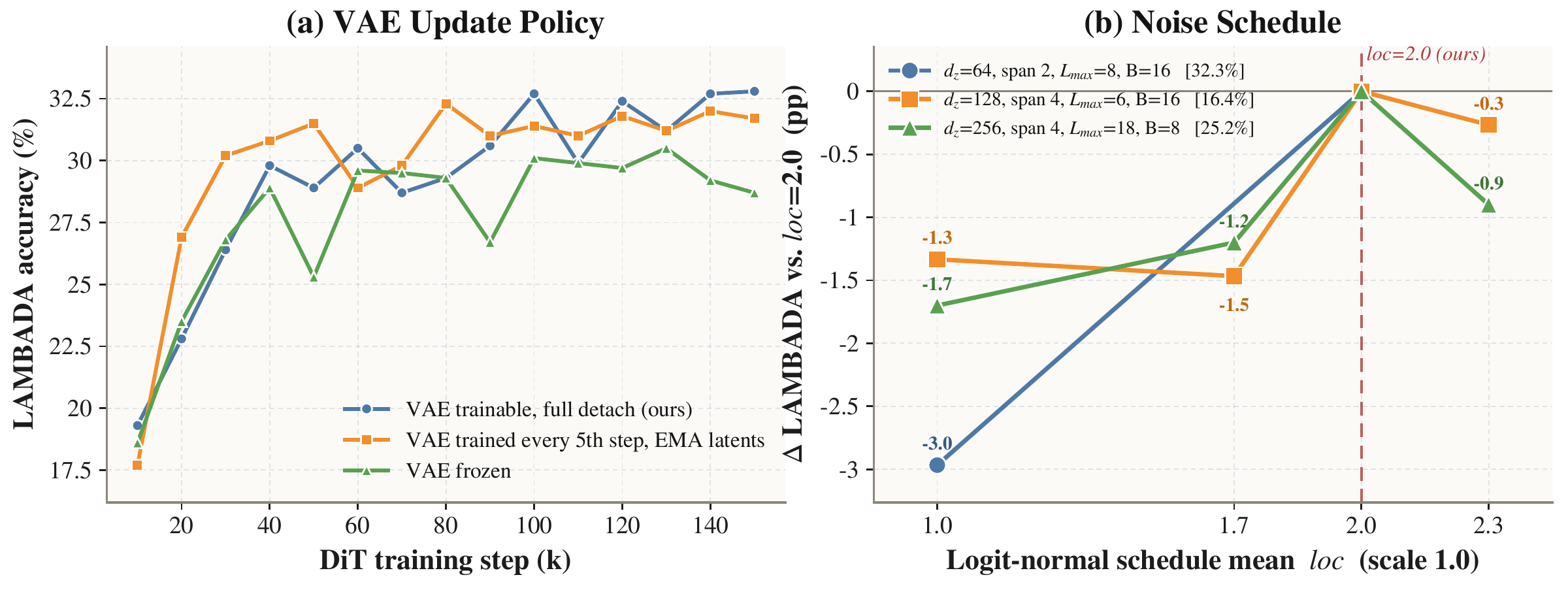}
    \caption{\textbf{VAE update policy and noise schedule.} Left: continuing to train the VAE while detaching the DiT gradient performs best. Right: differences relative to $\mathrm{loc}=2.0$; all three configurations are optimal at $2.0$.}
    \label{fig:exp-stage2-update}
    \label{fig:exp-stage2-noise}
\end{figure}

\textbf{\obs{} Keep training the VAE, but detach the DiT gradient.} A trainable VAE reaches $32.8\%$, whereas a frozen VAE reaches only $28.7\%$ (Figure~\ref{fig:exp-stage2-update}, left). Allowing the DiT gradient to flow into the encoder yields a $2.2$-point lead early on, but this is overtaken after $80$k steps and becomes a $1.0$-point deficit at $140$k steps (Table~\ref{tab:exp-stage2-gradient}). This supports Lemma~\ref{lem:hacking}: the DiT gradient rewrites the latent to be easier to fit rather than more useful for text modeling, and detaching the gradient blocks this latent hacking.

\subsubsection{Noise-Schedule Calibration}

\begin{lemma}[Effective Noise Is Jointly Determined by the Latent and Diffusion]
\label{lem:noise}
At the same DiT timestep, the available latent signal is affected by both VAE posterior uncertainty and diffusion corruption. Consequently, changing the noise scale of the Stage~1 latent changes the effective signal-to-noise ratio induced by the same DiT schedule, so the noise schedule must be calibrated jointly with the latent distribution. See Appendix~\ref{app:noise} for the proof.
\end{lemma}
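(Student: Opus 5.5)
I will make the statement precise by modeling the observed latent at DiT time $u$ under the linear path of Eq.~\eqref{eq:workflow-flow}. Write the Stage~1 latent as $z=\mu+\sigma\odot\epsilon_{\mathrm{vae}}$ with $\epsilon_{\mathrm{vae}}\sim\Normal(0,I)$, as in Eq.~\eqref{eq:workflow-posterior}. Since $\bar Z=\sg(Z)$ carries the posterior sample and not the mean, the DiT input is
\[
z_u=(1-u)\mu+(1-u)\sigma\odot\epsilon_{\mathrm{vae}}+u\,\epsilon .
\]
The key step is to treat the posterior mean $\mu$ (the semantically meaningful signal, which fixes what the decoder realizes) as the signal. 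Both noise terms are then nuisance. Since $\epsilon_{\mathrm{vae}}$ and $\epsilon$ are independent Gaussians, they combine per coordinate into one Gaussian with variance $(1-u)^2\sigma^2+u^2$.

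This gives a per-coordinate effective signal-to-noise ratio
\[
\mathrm{SNR}_{\mathrm{eff}}(u)=\frac{(1-u)^2\,\E[\mu^2]}{(1-u)^2\E[\sigma^2]+u^2},
\]
where the averages are taken per coordinate or over the latent distribution. I will contrast this with the nominal schedule ratio $(1-u)^2\E[z^2]/u^2$, which the DiT schedule implicitly assumes. Two monotonicity facts then give the claim:
\begin{itemize}
\item[(i)] $\mathrm{SNR}_{\mathrm{eff}}$ is strictly decreasing in $\E[\sigma^2]$ for fixed $u<1$.
\item[(ii)] Rescaling the latent, $\mu\mapsto c\mu$ and $\sigma\mapsto c\sigma$, changes $\mathrm{SNR}_{\mathrm{eff}}(u)$ for every fixed $u$, because only the diffusion noise term $u^2$ does not scale.
\end{itemize}
So the same schedule, meaning the same logit-normal distribution of $u$ from Eq.~\eqref{eq:workflow-logitnormal}, induces a different distribution of effective SNR levels. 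To be concrete, I will write the time $u'$ at which a reference latent reaches the same effective SNR as the rescaled one. This $u'$ is given by the invertible map $\mathrm{SNR}\mapsto u$ and differs from $u$, so a recalibration of $\mu_u$ is needed to restore the original distribution of effective SNR.

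To connect with training, I will also note that the Flow Matching target $v^\star=\epsilon-\bar z$ contains $\sigma\odot\epsilon_{\mathrm{vae}}$. The irreducible Flow Matching error at each $u$ therefore grows with $\sigma$, which supports the same conclusion from the loss side.

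The main obstacle is that "available latent signal" is informal. The Gaussian variance-ratio notion of SNR only fully characterizes information when the signal is Gaussian. I will first state the result for that notion, and optionally strengthen it by bounding the mutual information $I(\mu;z_u)$: it is at most $\tfrac12\log(1+\mathrm{SNR}_{\mathrm{eff}})$ per coordinate, by the Gaussian-channel maximum-entropy bound. Since this bound is monotone in the same quantities, it gives the information-theoretic version without distributional assumptions beyond a finite second moment.
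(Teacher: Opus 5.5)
Your argument is essentially the paper's own: Appendix~\ref{app:noise} writes $Z=G+N_0$ (your $\mu$ and $\sigma\odot\epsilon_{\mathrm{vae}}$) and merges the VAE and diffusion noise into the effective covariance $\alpha_t^2\Sigma_0+\sigma_t^2I$. It then reads off $I(G;Z_t)=\tfrac12\log\det[I+\alpha_t^2\Sigma_G(\alpha_t^2\Sigma_0+\sigma_t^2I)^{-1}]$, whose argument is exactly your $\mathrm{SNR}_{\mathrm{eff}}$ with $\alpha_t=1-u$, $\sigma_t=u$.

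One caveat concerns your optional strengthening. The bound $I(\mu;z_u)\le\tfrac12\log(1+\mathrm{SNR}_{\mathrm{eff}})$ requires the combined noise to be independent of $\mu$, i.e.\ an input-independent $\sigma$; the paper builds this in by assuming $G\perp N_0$. With heteroscedastic $\sigma(x)$, the variance of $z_u$ can itself leak information about $\mu$, so the bound can fail. Moreover, an upper bound alone does not show that the information actually changes. The paper's Gaussian equality delivers that, as would a data-processing argument, since raising a homoscedastic $\sigma$ adds independent Gaussian noise.
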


\textbf{\obs{} The noise schedule must match the representation.} All three configurations achieve their best results at $\mathrm{loc}=2.0$, and deviating from it costs $0.3$--$3.0$ points, with the magnitude varying with $d_z$ and the compression level (Figure~\ref{fig:exp-stage2-noise}, right). By Lemma~\ref{lem:noise}, the schedule should be recalibrated when the representation changes.

\subsubsection{Block Size and Generator Capacity}

\begin{figure}[t]
    \centering
    \includegraphics[width=\textwidth]{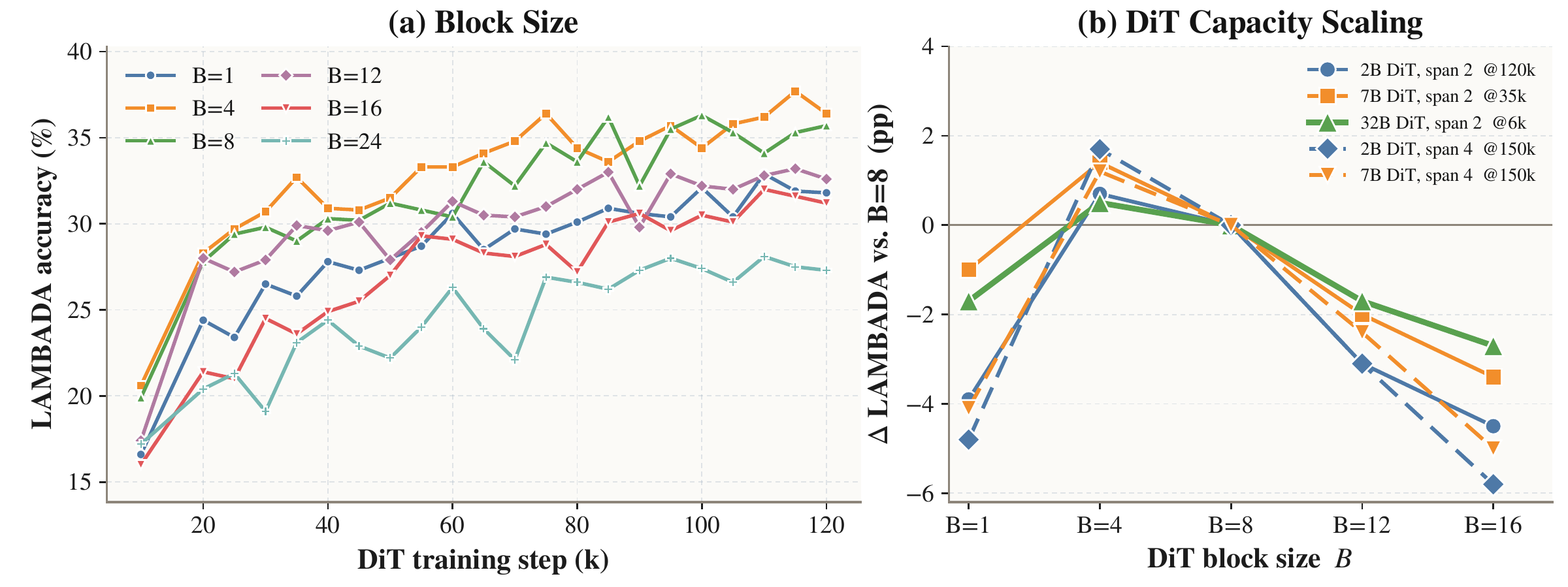}
    \caption{\textbf{DiT block size and capacity.} Left: moderate block sizes are optimal. Right: larger DiTs yield flatter curves.}
    \label{fig:exp-stage2-block}

    \vspace{0.5em}
    \includegraphics[width=\textwidth]{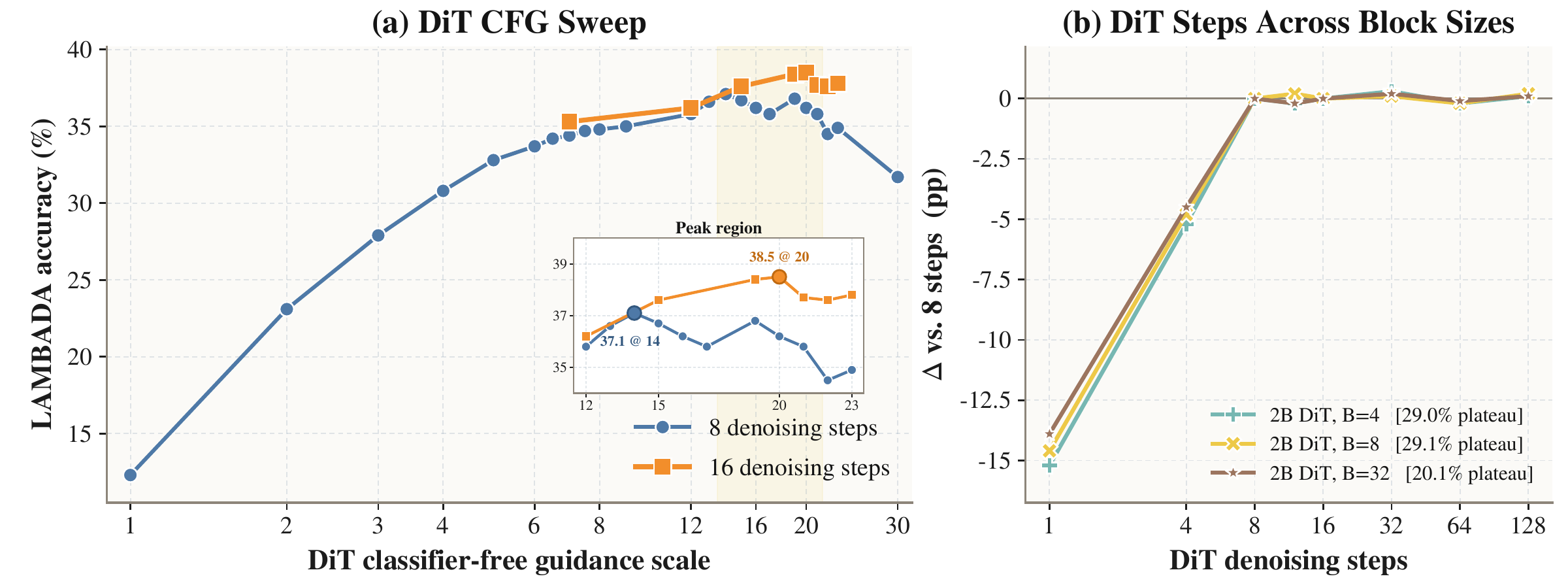}
    \caption{\textbf{DiT inference.} Left: generator CFG sweep. Right: the DiT saturates at $8$ steps.}
    \label{fig:exp-stage2-steps}
\end{figure}

\textbf{\obs{} Moderate blocks are best, and larger DiTs tolerate larger blocks.} For the 2B DiT, $B=4$ is optimal ($36.4\%$), whereas $B=1$ and $B=24$ reach only $31.8\%$ and $27.3\%$, respectively (Figure~\ref{fig:exp-stage2-block}, left). In the right panel of Figure~\ref{fig:exp-stage2-block}, the slope of the decline for $B>8$ becomes monotonically flatter with DiT size: the loss from $B=8\!\to\!16$ decreases from $4.5$ points for 2B to $3.4$ points for 7B and $2.7$ points for 32B, and the same trend holds at mean span length $4$. A larger DiT can jointly model more latents within a single block, which enlarges the range of block sizes it can use effectively.

\subsubsection{Generator Guidance and Inference Steps}

\textbf{\obs{} The generator needs strong CFG and about $8$ steps.} At $8$ steps, accuracy rises from $12.2\%$ at $\gamma_{\mathrm{gen}}=1$ to $37.1\%$ at $\gamma_{\mathrm{gen}}=14$; at $16$ steps, the peak moves to $\gamma_{\mathrm{gen}}=20$ ($38.5\%$) (Figure~\ref{fig:exp-stage2-steps}, left). Using $1$ and $4$ steps loses about $14$ and $5$ points, respectively, whereas results between $8$ and $128$ steps differ by at most $0.3$ points (Figure~\ref{fig:exp-stage2-steps}, right).

\subsection{Balancing the Reconstruction--Generation Decoder}
\label{subsec:exp-stage3}

\begin{lemma}[Latent-Condition Strength Determines Residual Generation]
\label{lem:decoder}
Given the same discrete context, a more reliable latent condition provides more information and leaves less residual uncertainty for the decoder to generate on its own; when the latent is corrupted or removed, the residual uncertainty increases accordingly. Thus, mixed \textsf{cond}/\textsf{noise}/\textsf{drop} training covers the continuum from conditional reconstruction to autonomous generation. See Appendix~\ref{app:decoder} for the proof.
\end{lemma}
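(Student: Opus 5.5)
We read Lemma~\ref{lem:decoder} as a statement about the three channels in Eq.~\eqref{eq:workflow-reliability}, all taken with the same discrete context. Write $X$ for the discrete context visible to the decoder, namely $C$ together with the historical clean blocks and the unmasked tokens of the current block. Write $Y^{\mathrm{tar}}$ for the masked targets. The plan is to put the three reliability states on a single degradation chain and then apply the data processing inequality, Eq.~\eqref{eq:main-recon-gen-dpi}, conditionally on $X$.

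\textbf{Step 1: Markov structure.} The noisy condition is $\widetilde Z_{\textsf{noise}}=(1-u^{\mathrm c})Z+u^{\mathrm c}\varepsilon$, with $\varepsilon$ and $u^{\mathrm c}$ drawn independently of $(Y,X)$. The dropped condition $\widetilde Z_{\textsf{drop}}=\varnothing$ is a constant. Together these give the conditional chain
\[
Y^{\mathrm{tar}}\rightarrow Z\rightarrow \widetilde Z_{\textsf{noise}}\rightarrow \widetilde Z_{\textsf{drop}} \quad\text{given } X .
\]
The last arrow holds trivially because the map to a constant is deterministic. For the noise levels themselves, the channel is equivalent, up to the invertible rescaling by $1/(1-u)$, to an additive Gaussian channel with variance $u^2/(1-u)^2$. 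This variance is increasing in $u$, and Gaussian noise stacks, so for $u_1<u_2$ the output at $u_2$ can be obtained from the output at $u_1$ by adding independent noise. That gives a finer chain indexed by the noise level.

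\textbf{Step 2: Information and residual ordering.} We apply the DPI conditionally on $X$ along this chain. This yields
\[
I(Y^{\mathrm{tar}};Z\mid X)\ge I(Y^{\mathrm{tar}};\widetilde Z_{u_1}\mid X)\ge I(Y^{\mathrm{tar}};\widetilde Z_{u_2}\mid X)\ge I(Y^{\mathrm{tar}};\varnothing\mid X)=0 .
\]
We then use the identity $H(Y^{\mathrm{tar}}\mid X,\widetilde Z)=H(Y^{\mathrm{tar}}\mid X)-I(Y^{\mathrm{tar}};\widetilde Z\mid X)$, which is Eq.~\eqref{eq:main-info} with $C$ replaced by $X$. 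Here $H(Y^{\mathrm{tar}}\mid X)$ does not depend on the state, so the residual entropy the decoder must generate is ordered in reverse. It runs from $H(Y^{\mathrm{tar}}\mid X,Z)$ in the \textsf{cond} state up to $H(Y^{\mathrm{tar}}\mid X)$, pure autonomous generation, in the \textsf{drop} state.

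\textbf{Step 3: Link to the loss.} Each $\Lcal_{\mathrm{tok}}^{A}$ is a cross-entropy, and its population minimizer over $\psi$ equals $H(Y^{\mathrm{tar}}\mid X,\widetilde Z_A)$ whenever the decoder family is rich enough. The mixture $\Lcal_{\mathrm{bal}}$ has positive weights and shares $\psi$ across states. Its pointwise minimizer is therefore the family of true conditionals $p(Y^{\mathrm{tar}}\mid X,\widetilde Z_A)$, one for each state. Because $u^{\mathrm c}$ is sampled over an interval, these conditionals span the whole continuum from conditional reconstruction to unconditional generation.

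The main obstacle is Step 1, in making the degradation rigorous as a Markov chain. Two points need care: the noise level must be independent of $Y^{\mathrm{tar}}$ given $X$, and the monotone-in-$u$ claim needs the Gaussian stacking argument above. There is a further subtlety in the co-adaptive setting. When $\phi$ is trainable, $Z$ depends on $\phi$, so we state the result for a fixed encoder and each $\psi$-optimization. That suffices for the ordering claims in Lemma~\ref{lem:decoder}.
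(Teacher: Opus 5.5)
Your Steps~1--2 are essentially the paper's proof. Appendix~\ref{app:decoder} writes $H(Y\mid C)=I(Y;Z_A\mid C)+H(Y\mid C,Z_A)$ for each state and orders the states with the data processing inequality of Eq.~\eqref{eq:main-recon-gen-dpi}. Conditioning on the full discrete context $X$ and proving monotonicity in $u$ by Gaussian stacking are, if anything, sharper than the paper, which only assumes that a stronger perturbation ``constitutes a more degraded stochastic channel.'' The difficulty is not in Step~1 but in Step~3, which fails as written.

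The decoder sees only $(X,\widetilde Z)$, with no indicator of $A$ or $u^{\mathrm c}$. The \textsf{cond} and \textsf{noise} inputs are both continuous with overlapping support. Minimizing $\sum_A\lambda_A\Lcal_{\mathrm{tok}}^{A}$ pointwise over a shared $p_\psi(\cdot\mid x,w)$ therefore yields
\[
    p^\star(y\mid x,w)\propto\sum_{A}\lambda_A\,p_A(x,w)\,p_A(y\mid x,w),
\]
where $p_A(x,w)$ is the density of the decoder input under state $A$. This is not the family of separate conditionals $p(Y^{\mathrm{tar}}\mid X,\widetilde Z_A)$. Since $p(y\mid x,Z=w)$ and its noise-smoothed counterpart differ in general, the shared optimum generically pays a strictly positive KL in the \textsf{cond} state. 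Only \textsf{drop}, whose zero input is almost surely disjoint from the others, separates out. This is not a capacity issue, so ``rich enough'' does not rescue the claim that the pointwise minimizer is ``the family of true conditionals, one for each state.''

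Your continuum claim also overreaches. In the paper $u^{\mathrm c}\in[0,u_{\max}]$ with $u_{\max}\ll1$ (Eqs.~\eqref{eq:app-shared-path} and~\eqref{eq:app-workflow-condition-state}). The noise family therefore covers only a neighborhood of \textsf{cond}, and \textsf{drop} is a separate endpoint with nothing sampled in between. The paper accordingly reads the lemma's last sentence as three regimes: strongly conditioned reconstruction, local robustness, and weakly conditioned generation.

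The fix is what the paper does. Use the exact per-state decomposition
$\Lcal_A(\psi)=H(Y\mid C,Z_A)+\E\,\KL\big(q(Y\mid C,Z_A)\,\|\,p_\psi(Y\mid C,Z_A)\big)$
of Eq.~\eqref{eq:app-mode-ce}, which holds for every $\psi$. Your Step~2 then orders the irreducible floors of the three losses. Mixed training with shared $\psi$ reduces a weighted sum of per-state modeling errors that need not vanish simultaneously, which is why the paper leaves the mixture to be tuned empirically. If you want per-state optimality to be attainable, you would have to feed $A$ and $u^{\mathrm c}$ to the decoder.
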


\subsubsection{Decoder Type}

\textbf{\obs{} Decoder factorization matters little.} At every checkpoint, the gap between them is at most $1.4$ points, and the leader alternates during training (Table~\ref{tab:exp-stage3-decoder-type}): AR-MTP leads by $0.9$ points at $50$k steps, whereas MDM leads by $0.7$ and $1.4$ points at $100$k and $150$k steps. This indicates that, given a shared latent--token interface, the specific decoder factorization is not the decisive factor for performance, and the hierarchical framework applies to both; subsequent experiments use the MDM decoder.

\subsubsection{Decoder Data Partition}

\begin{figure}[t]
    \centering
    \includegraphics[width=\textwidth]{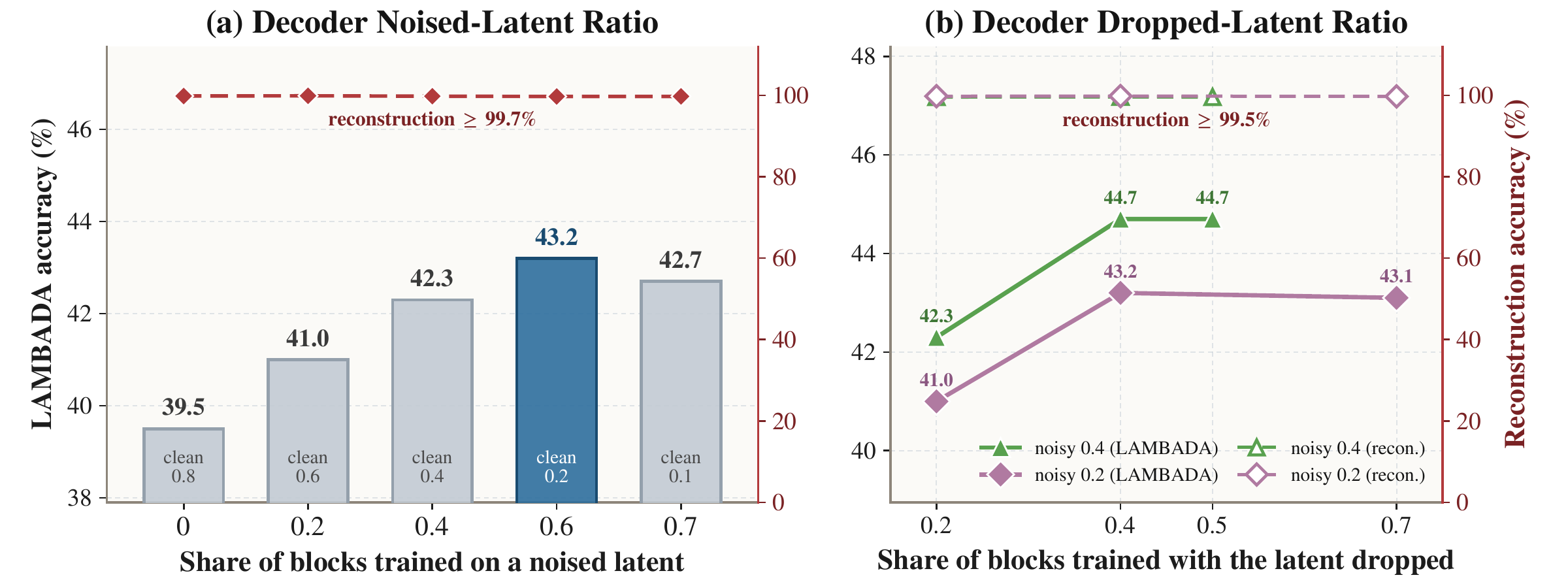}
    \caption{\textbf{Decoder data partition.} Left: varying the noisy-to-clean ratio. Right: varying the drop ratio. Reconstruction accuracy (right axis) stays above $99.5\%$ throughout; $0.4/0.4/0.2$ gives the best balance.}
    \label{fig:exp-stage3-partition}
\end{figure}

\begin{table}[t]
    \centering
    \caption{\textbf{Whether the DiT gradient flows into the encoder.} Gradient injection leads early, but is overtaken by full detachment in the middle and late stages of training.}
    \label{tab:exp-stage2-gradient}
    \small
    \setlength{\tabcolsep}{10pt}
    \renewcommand{\arraystretch}{1.10}
    \begin{tabular}{lccc}
        \toprule
        \textbf{Gradient policy} & \textbf{20k steps} & \textbf{80k steps} & \textbf{140k steps} \\
        \midrule
        Full detachment (ours)                   & 21.4 & \textbf{30.4} & \textbf{33.1} \\
        DiT velocity loss injected every 10 steps & \textbf{23.6} & 29.9 & 32.1 \\
        \midrule
        Difference (ours $-$ injection)          & $-2.2$ & $+0.5$ & $+1.0$ \\
        \bottomrule
    \end{tabular}
\end{table}

\begin{figure}[t]
    \centering
    \includegraphics[width=\textwidth]{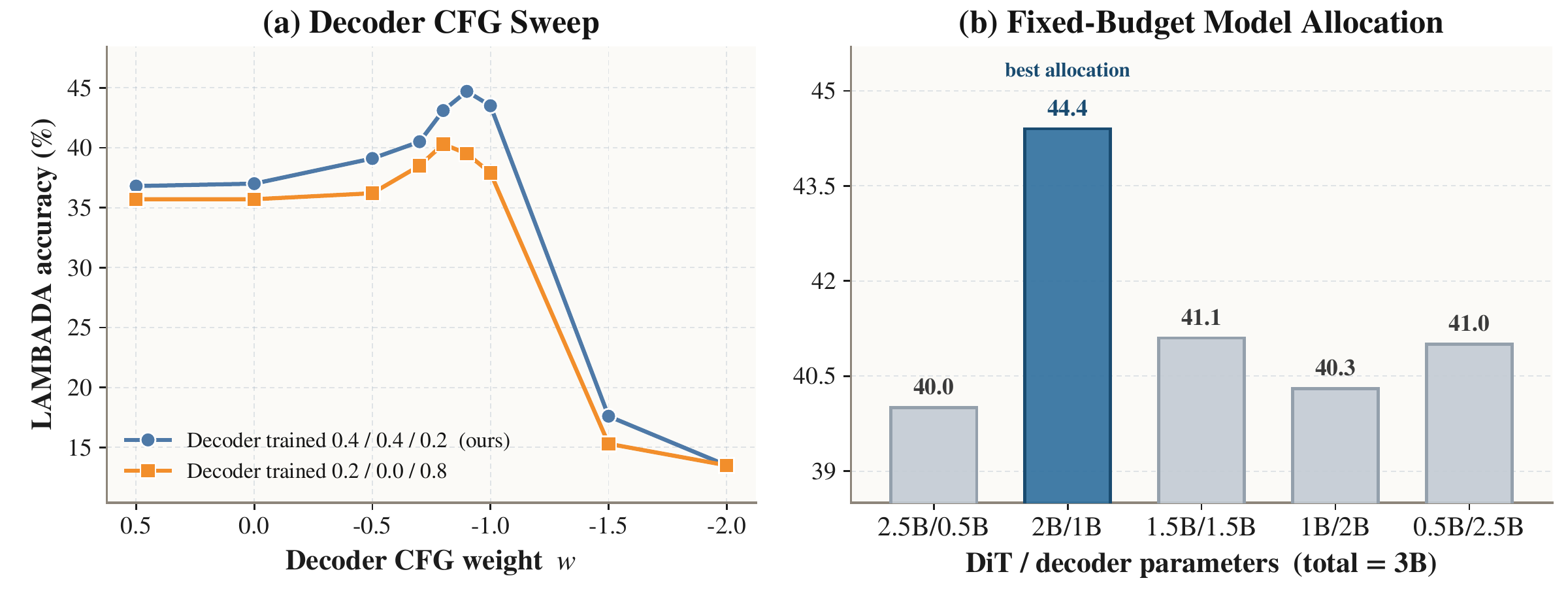}
    \caption{\textbf{Decoder guidance and parameter allocation.} Left: the balanced decoder is optimal at $w_{\mathrm{dec}}=-0.9$. Right: with 3B total parameters, a $2$B DiT with a $1$B decoder is optimal.}
    \label{fig:exp-stage3-cfg}
    \label{fig:exp-stage3-allocation}
\end{figure}

\begin{table}[t]
    \centering
    \caption{\textbf{Decoder factorization.} The two decoders perform similarly.}
    \label{tab:exp-stage3-decoder-type}
    \small
    \setlength{\tabcolsep}{10pt}
    \renewcommand{\arraystretch}{1.10}
    \begin{tabular}{lccc}
        \toprule
        \textbf{Decoder} & \textbf{50k steps} & \textbf{100k steps} & \textbf{150k steps} \\
        \midrule
        AR-MTP decoder & \textbf{39.7} & 43.7 & 44.1 \\
        MDM decoder    & 38.8 & \textbf{44.4} & \textbf{45.5} \\
        \midrule
        Difference (MDM $-$ AR-MTP) & $-0.9$ & $+0.7$ & $+1.4$ \\
        \bottomrule
    \end{tabular}
\end{table}

\textbf{\obs{} Train mostly on noisy and dropped conditions, but keep clean ones.} Reconstruction accuracy stays above $99.5\%$ in all configurations, even when only $10\%$ of blocks are clean (Figure~\ref{fig:exp-stage3-partition}), so most of the data can be devoted to generation under degraded conditions. With the drop ratio fixed at $0.2$, shifting noisy/clean from $0.0/0.8$ to $0.6/0.2$ raises LAMBADA accuracy from $39.5\%$ to $43.2\%$; raising the drop ratio from $0.2$ to $0.4$ adds another $2.2$--$2.4$ points, while pushing further brings no gain (noisy $0.7$: $42.7\%$; drop $0.7$: $43.1\%$). The selected $0.4/0.4/0.2$ (drop/noisy/clean) ties for the best result of $44.7\%$ while retaining more clean data, confirming the reconstruction--generation continuum in Lemma~\ref{lem:decoder}.

\subsubsection{Decoder Guidance}

\textbf{\obs{} Decoder CFG must match the training mixture.} The balanced decoder reaches $44.7\%$ at $w_{\mathrm{dec}}=-0.9$ and collapses rapidly at more negative weights, whereas the clean-heavy decoder reaches at best $40.3\%$ (Figure~\ref{fig:exp-stage3-cfg}, left), indicating that inference-time guidance cannot compensate for an imbalanced training distribution.

\subsubsection{Decoder Capacity}

\begin{figure}[t]
    \centering
    \includegraphics[width=\textwidth]{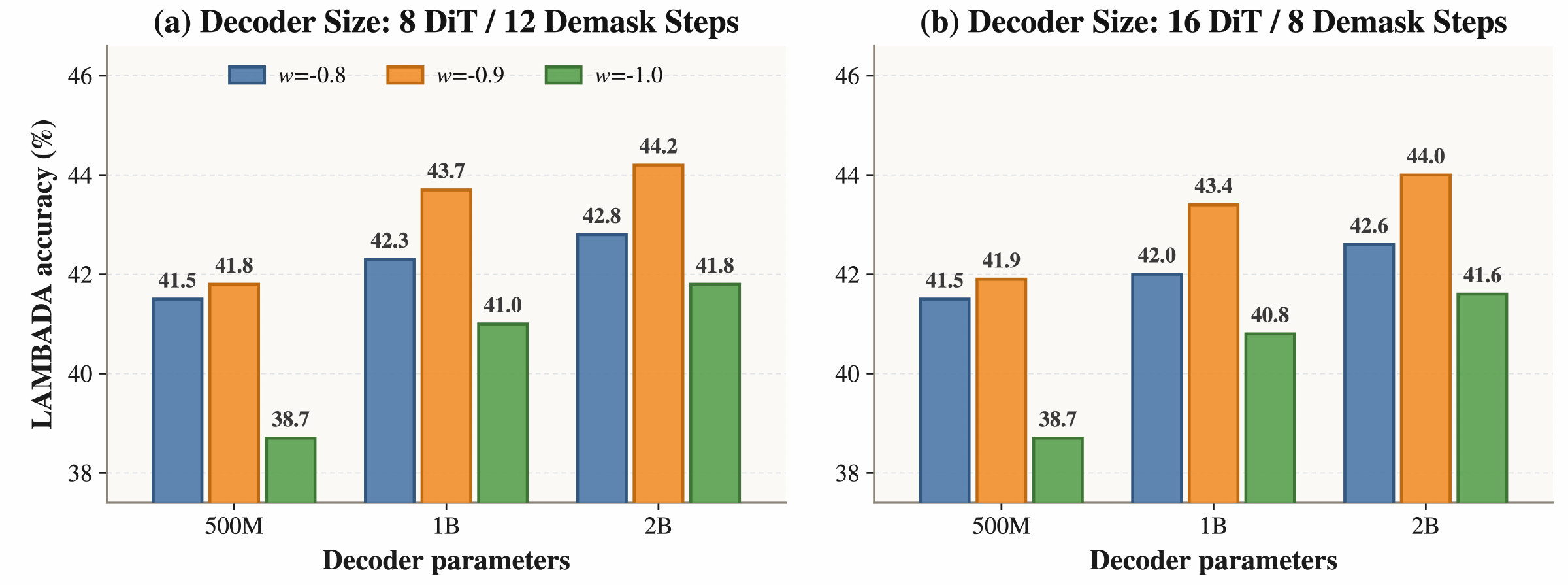}
    \caption{\textbf{Scaling the decoder with a fixed DiT.} Accuracy improves monotonically with decoder size.}
    \label{fig:exp-stage3-size}
\end{figure}

\begin{figure}[t]
    \centering
    \includegraphics[width=\textwidth]{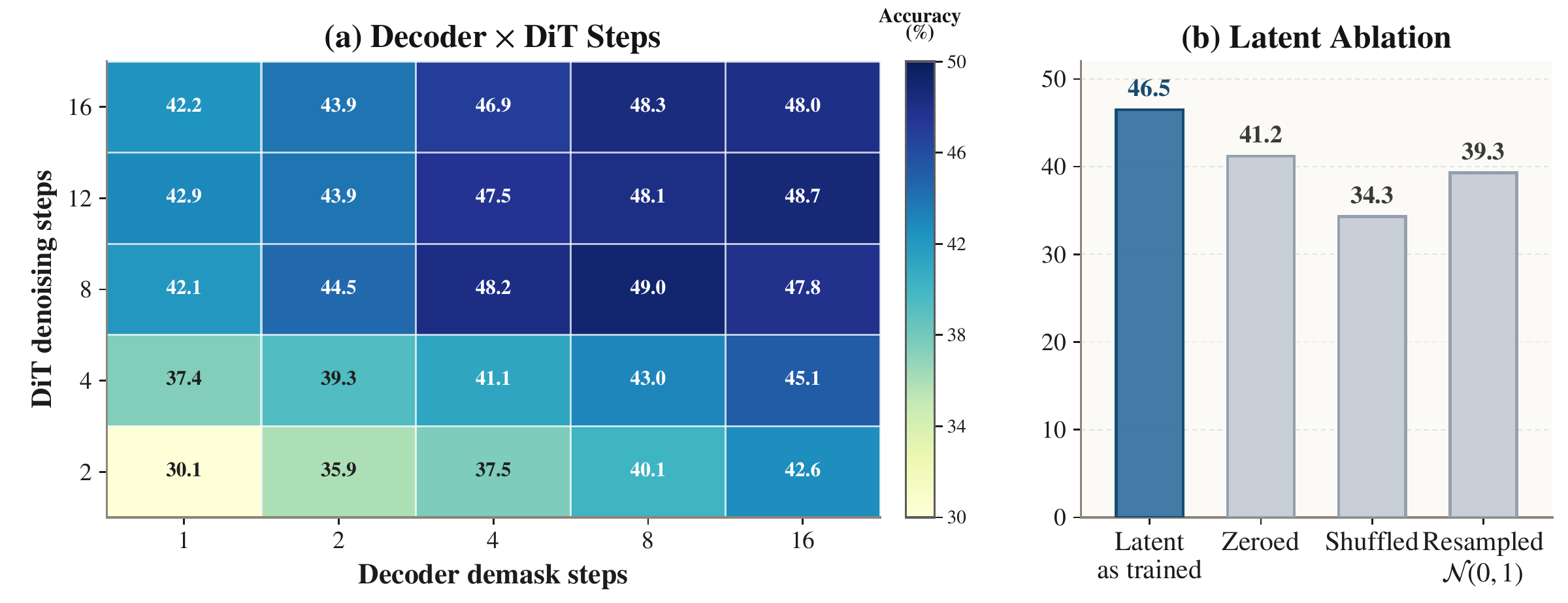}
    \caption{\textbf{Decoder inference and latent ablation.} Left: $8$/$8$ steps are optimal. Right: corrupting the latent substantially degrades performance, with shuffling being the most harmful.}
    \label{fig:exp-stage3-demask}
\end{figure}

\textbf{\obs{} Under a fixed budget, the DiT matters more.} With the DiT fixed, increasing the decoder from 500M to 2B yields monotonic gains of $2.1$--$2.4$ points (Figure~\ref{fig:exp-stage3-size}); under a total budget of 3B, a $2$B DiT with a $1$B decoder ($44.4\%$) outperforms the other allocations by $3.3$--$4.4$ points, and imbalance in either direction degrades performance (Figure~\ref{fig:exp-stage3-allocation}, right).

\subsubsection{Demasking Steps and Latent Use}

\textbf{\obs{} Demasking saturates at about $8$ steps.} $8$ DiT/$8$ demasking steps achieve the best result of $49.0\%$ (Figure~\ref{fig:exp-stage3-demask}, left); fewer than $8$ DiT steps cause a clear drop, while increasing demasking can partially compensate when the DiT uses too few steps. The $8$/$4$ setting ($48.2\%$) is only $0.8$ points below the optimum and serves as the default low-cost setting.

\textbf{\obs{} The decoder relies on aligned latents.} Zeroing, resampling, and shuffling the latent reduce accuracy from $46.5\%$ to $41.2\%$, $39.3\%$, and $34.3\%$, respectively (Figure~\ref{fig:exp-stage3-demask}, right). Zeroing corresponds to the \textsf{drop} state seen in training and therefore degrades more gracefully; shuffled latents are statistically realistic but semantically misaligned and cause the largest damage, showing that the decoder follows the latent content.

\subsection{Scaling Experiments}
\label{subsec:exp-scaling}

\subsubsection{Controlled Scaling Trends}
\label{subsec:exp-scaling-controlled}

\begin{figure}[t]
    \centering
    \includegraphics[width=\textwidth]{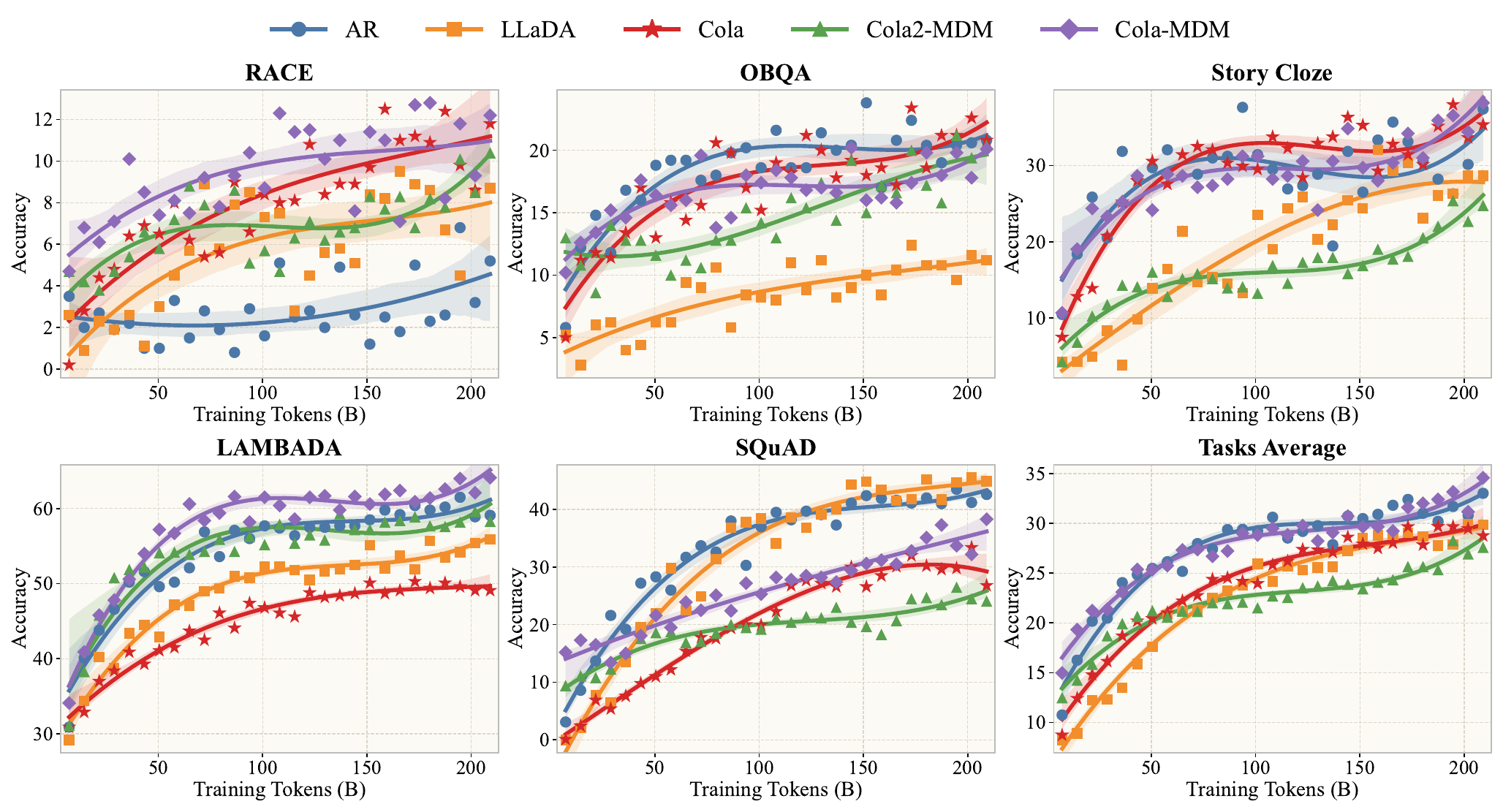}
    \caption{\textbf{Scaling trends with training tokens.} Cola-MDM achieves the strongest overall scaling; \method-MDM also exhibits strong and unsaturated scaling potential under compression.}
    \label{fig:exp-scaling}
\end{figure}

\begin{figure}[t]
    \centering
    \includegraphics[width=\textwidth]{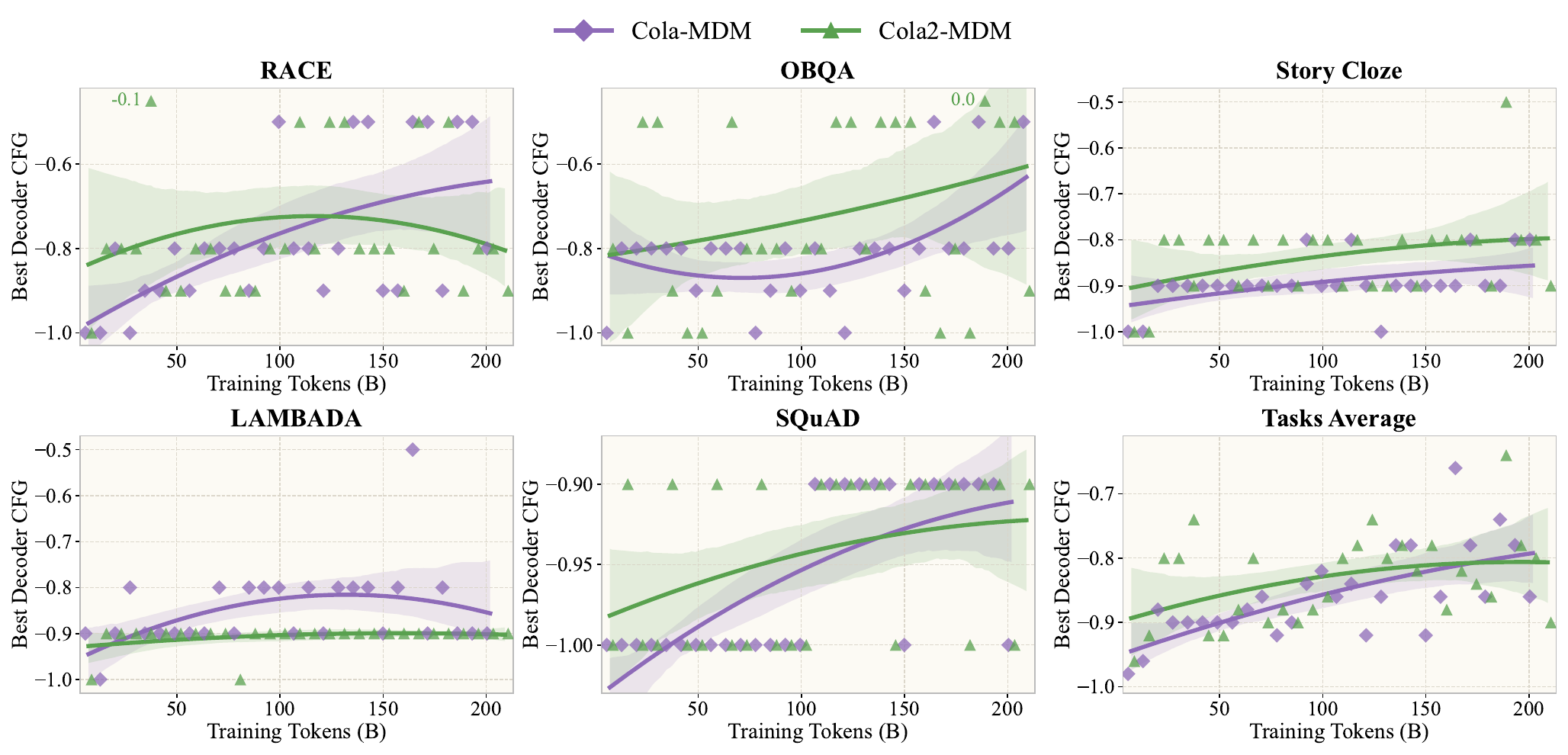}
    \caption{\textbf{Optimal decoder CFG weight during training.} For both models, the optimal weight remains within $[-1.0,-0.8]$, consistent with the $-0.9$ selected in Section~\ref{subsec:exp-stage3}, and drifts slowly toward $0$ during training; the drift of \method-MDM is more gradual, consistent with its slower convergence.}
    \label{fig:exp-scaling-cfg}
\end{figure}

Figure~\ref{fig:exp-scaling} extends training to about $210$B tokens and compares the models on RACE, OBQA, Story Cloze, LAMBADA, and SQuAD. Cola-MDM ($\rho=1$) and \method-MDM (with the compression VAE from Section~\ref{subsec:exp-stage1}) share the MDM decoder selected in Section~\ref{subsec:exp-stage3}, which separates the effect of the decoder design from that of dynamic compression.

\textbf{\obs{} The MDM decoder yields the strongest scaling.} Cola-MDM finally reaches a task average of about $34.5\%$, higher than AR (about $33\%$), and about $64\%$ on LAMBADA, about $15$ points above Cola DLM. The weakness of Cola DLM on generative tasks stems from its decoder rather than from the hierarchical framework.

\textbf{\obs{} Compression preserves semantic tasks, with costs on exact-token tasks.} \method-MDM matches AR on LAMBADA (about $61\%$) and leads substantially on RACE (about $10.5\%$ vs.\ $4.5\%$), with the gap concentrated on SQuAD and Story Cloze, which rely on verbatim reproduction.

\textbf{\obs{} The compressed variant converges more slowly but keeps closing the gap.} Since each latent summarizes more tokens, the latent DiT converges more slowly than the discrete generator. As training continues, the latent condition becomes increasingly useful: after about $150$B tokens, the average gap to LLaDA shrinks from about $4$ to $1.5$ points without saturating. The mixed decoder setting $w_{\mathrm{dec}}=-0.9$ ($44.7\%$) also outperforms both the latent-free ($43.5\%$) and fully conditioned ($37.0\%$) settings, confirming that latent semantics and discrete generation are complementary.

\subsubsection{Post-Pretraining Benchmark Comparison}
\label{subsec:exp-pretrained}

The final checkpoint evaluated here uses a separate large-scale recipe from the controlled scaling study. It is trained on OLMo~2 pretraining data retokenized with the Qwen3-4B tokenizer, using length-$4096$ sequences. The decoder and the VAE encoder embedding are initialized from Qwen3-4B-Base. During decoder training, masked positions are represented by \texttt{[MASK]} inputs, while decoder self-attention remains causal. Stage~I trains the dynamic-compression VAE on $52.43$B source tokens. Stage~II jointly updates the VAE and an approximately 4B-parameter latent DiT, with detached VAE latents, on an additional $125.83$B source tokens. Stage~III freezes the Stage-II EMA VAE and continues training only the DiT on $167.77$B source tokens. The total is $346.03$B source tokens. Architecture, optimization, objective, EMA, and state-transition details are provided in Appendix~\ref{app:final-pretraining}.

Inference uses a low-noise-biased eight-step Euler DiT schedule, $t_i=T(1-i/8)^2$ for $i=0,\ldots,8$, with DiT CFG set to $10$. Decoder CFG is $-0.9$, each decoding call generates one latent span, dynamic duration and latent feedback are enabled, and decoding is greedy.

We compare this checkpoint with representative 7B--8B pretrained models across eight downstream tasks. Each \method task score reported below is the average of three sampled evaluations. Because the models' training recipes, tokenizers, and token-counting conventions differ, this comparison contextualizes post-pretraining performance rather than isolating the effect of the generation paradigm.

\begin{table}[t]
    \centering
    \caption{\textbf{Post-pretraining evaluation.} Accuracy (\%) on eight downstream tasks, with tasks grouped by evaluation focus and models by generation paradigm. \method uses $346.03$B source tokens; the models are not compute-matched.}
    \label{tab:exp-pretrained-comparison}
    \small
    \setlength{\tabcolsep}{4pt}
    \renewcommand{\arraystretch}{1.10}
    \begin{tabular*}{\textwidth}{@{\extracolsep{\fill}}lccccc}
        \toprule
        & \multicolumn{3}{c}{\textbf{Autoregressive}}
        & \multicolumn{1}{c}{\textbf{Discrete diffusion}}
        & \multicolumn{1}{c}{\textbf{Continuous diffusion}} \\
        \cmidrule(lr){2-4}\cmidrule(lr){5-5}\cmidrule(lr){6-6}
        \textbf{Task} & \textbf{Llama-3-8B} & \textbf{Qwen2.5-7B} & \textbf{Qwen3-8B-Base} & \textbf{LLaDA-8B-Base} & \textbf{\method} \\
        \midrule
        \textit{Pretraining tokens} & 15T+ & 18T & ${\sim}$36T & 2.3T & 346B \\
        \midrule
        \multicolumn{6}{l}{\textit{Reading comprehension}} \\
        \quad RACE        & 75.77 & 89.95 & 90.65 & 87.40 & 69.91 \\
        \quad NarrativeQA & 42.70 & 32.90 & 37.30 & 36.00 & 32.70 \\
        \midrule
        \multicolumn{6}{l}{\textit{Commonsense reasoning}} \\
        \quad HellaSwag   & 39.21 & 75.67 & 68.13 & 69.65 & 41.75 \\
        \quad StoryCloze  & 79.49 & 95.09 & 97.44 & 95.94 & 80.42 \\
        \midrule
        \multicolumn{6}{l}{\textit{Language modeling}} \\
        \quad LAMBADA     & 75.70 & 71.82 & 71.20 & 67.90 & 63.00 \\
        \midrule
        \multicolumn{6}{l}{\textit{Knowledge-oriented QA}} \\
        \quad ARC-Easy    & 90.86 & 96.00 & 97.35 & 93.85 & 87.62 \\
        \quad TruthfulQA  & 63.04 & 69.03 & 72.58 & 64.87 & 54.10 \\
        \quad MMLU        & 63.19 & 70.16 & 72.31 & 61.27 & 51.32 \\
        \midrule
        \textbf{8-task average} & \textbf{66.25} & \textbf{75.08} & \textbf{75.87} & \textbf{72.11} & \textbf{60.10} \\
        \bottomrule
    \end{tabular*}
\end{table}

\textbf{\obs{} \method shows strong scaling potential under a limited training budget.} With only $346.03$B source tokens and before convergence, \method reaches an eight-task average of $60.10$, only $6.15$ points behind Llama-3-8B ($66.25$), which uses at least $15$T tokens---over $43{\times}$ more data. \method also surpasses Llama-3-8B on HellaSwag ($41.75$ vs.\ $39.21$) and StoryCloze ($80.42$ vs.\ $79.49$), while reaching $87.62$ on ARC-Easy. These results indicate substantial scaling headroom and successful transfer of the Qwen-initialized causal decoder's text generation capability. More broadly, continuous-latent hierarchical modeling can integrate effectively with established token-space generation paradigms, including autoregressive and masked-diffusion decoders.

\subsection{Summary}

The results support the design of \method in three respects. \textbf{Semantic compression:} dynamic segmentation and generative supervision keep the latents strongly semantic at high compression rates; the former slightly surpasses the uncompressed model with an appropriate $L_{\max}$, and the latter raises accuracy from $15.7\%$ to $26.7\%$. \textbf{Reconstruction--generation balance:} mixed training dominated by noisy and dropped conditions enables the decoder both to faithfully use the latent (shuffling the latent costs $12.2$ points) and to autonomously generate the residual information. \textbf{Continuous--discrete co-adaptation:} the information placed in the latent should be determined by the decoder; co-training the encoder and decoder yields a $5.9$-point gain, and both continue to benefit from more training data, giving the hierarchical model scaling trends comparable to or better than AR and LLaDA.

\FloatBarrier
\endgroup

\section{Limitations \& Future Prospects}
\label{sec:future-work}

Although our results provide initial evidence for the feasibility, competitiveness, and scaling potential of \method, we view the current system as a starting point rather than a finished endpoint. First, the model remains far from convergence, especially the latent DiT, while the present training scale is limited by available resources. Larger models, longer training, and broader data and compute budgets are therefore needed to establish its performance ceiling. Second, compression reduces the dominant linear and attention FLOPs of the latent prior in theory, but end-to-end generation speed also depends on the decoder, CFG branches, latent feedback, and hardware utilization. Infrastructure co-design---including specialized kernels, cache management, and parallel scheduling---is needed to translate theoretical savings into practical speedups. Third, \method already shows compatibility with autoregressive and masked-diffusion decoders, but their co-training objectives, inference trade-offs, and optimal division of labor remain underexplored. Finally, the continuous latent space may provide a shared interface between text and other continuous modalities such as vision, opening a path toward unified latent representations, cross-modal priors, and unified understanding and generation.

\section{Conclusion}
\label{sec:conclusion}

We presented \method, a hierarchical language modeling framework that unifies dynamic semantic compression with balanced discrete reconstruction and generation through continuous--discrete co-adaptation. Dynamic compression and generation-oriented supervision encourage compact, predictive latent representations, while the decoder realizes latent semantics and handles residual uncertainty from context. Token-prediction gradients drive encoder--decoder co-adaptation, whereas latent-prior fitting remains decoupled from representation learning. Systematic component analyses and controlled evaluations on five downstream tasks, scaled to about 210B training tokens, support these design principles and show competitive, unsaturated scaling. In a separate resource-constrained run, the final checkpoint trained on 346.03B source tokens achieves a 60.10 average over eight tasks and surpasses Llama-3-8B on HellaSwag and StoryCloze, although that baseline reports over $43{\times}$ more pretraining tokens. These results indicate successful transfer of the Qwen-initialized causal decoder's generation capability and substantial scaling headroom, especially because the latent DiT remains far from convergence. More broadly, \method shows that continuous-latent hierarchical modeling can integrate effectively with established token-space generators, including autoregressive and masked-diffusion decoders, rather than depending on a single decoder paradigm.

\clearpage

\bibliographystyle{plainnat}
\bibliography{main}

\clearpage

\beginappendix

\section{Detailed Derivations of the Probabilistic Model and Information Decomposition}
\label{app:basic}

\subsection{Conditional ELBO}

During training, the encoder $q_\phi(Z\mid C,Y)$ serves as the approximate posterior, where $\phi$ denotes the encoder parameters. By Jensen's inequality, the conditional log-likelihood in Eq.~\eqref{eq:main-prob} satisfies
\begin{equation}
    \log p(Y\mid C)
    \ge
    \E_{q_\phi(Z\mid C,Y)}
    \left[
      \log p_\psi(Y\mid C,Z)
      -
      \log\frac{q_\phi(Z\mid C,Y)}{p_\theta(Z\mid C)}
    \right].
\label{eq:app-elbo}
\end{equation}

Define the conditional aggregated posterior as
\begin{equation}
    \bar q_\phi(Z\mid C)
    =
    \int q_\phi(Z\mid C,Y)\data(Y\mid C)\dd Y.
    \label{eq:app-agg}
\end{equation}
Adding and subtracting $\log \bar q_\phi(Z\mid C)$ in the average KL term gives
\begin{equation}
\begin{aligned}
&\E\KL\!\left(q_\phi(Z\mid C,Y)\|p_\theta(Z\mid C)\right)\\
&=I(Y;Z\mid C)
+\E_C\KL\!\left(\bar q_\phi(Z\mid C)\|p_\theta(Z\mid C)\right).
\end{aligned}
\label{eq:app-kl-decomp}
\end{equation}

Eq.~\eqref{eq:app-kl-decomp} clarifies the probabilistic role of each of the three modules: the encoder determines what information the latent carries from the target text, the latent generator fits the conditional latent distribution induced by the encoder, and the decoder models tokens conditioned on the latent. Flow Matching is merely the particular solver used in this work to learn $p_\theta(Z\mid C)$, not part of the definition of the generative model itself.

\subsection{Information Identities and Semantic Decomposition}

Conditional mutual information is defined as
\begin{equation}
    I(Y;Z\mid C)
    =
    H(Y\mid C)-H(Y\mid C,Z),
    \label{eq:app-mi-def}
\end{equation}
and rearranging directly yields Eq.~\eqref{eq:main-info} in the main text.

If $S$ is a high-level semantic variable determined by $Y$, then $I(Y;Z\mid C)=I(S,Y;Z\mid C)$. Applying the chain rule of mutual information,
\begin{equation}
\begin{aligned}
    I(Y;Z\mid C)
    &=I(S,Y;Z\mid C)\\
    &=I(S;Z\mid C)+I(Y;Z\mid S,C),
\end{aligned}
\label{eq:app-chain}
\end{equation}
which yields the semantic decomposition in Eq.~\eqref{eq:main-decomp}. Under the compression constraint $K=\rho T$, we further define the semantic information density per latent position as
\begin{equation}
    \mathcal D_{\mathrm{sem}}
    =\frac{I(S;Z\mid C)}{K}.
    \label{eq:app-sem-density}
\end{equation}
This quantity need not be estimated explicitly during training; it distinguishes two different objectives: merely shortening the latent sequence versus concentrating semantic information in a limited number of latent positions.

If the target tokens are partitioned into a visible part $Y^{\mathrm{vis}}$ and a target part $Y^{\mathrm{tar}}$, then
\begin{equation}
\begin{aligned}
    I(Y^{\mathrm{tar}};Z\mid C,Y^{\mathrm{vis}})
    ={}&H(Y^{\mathrm{tar}}\mid C,Y^{\mathrm{vis}})\\
    &-H(Y^{\mathrm{tar}}\mid C,Y^{\mathrm{vis}},Z).
\end{aligned}
\label{eq:app-predictive-info}
\end{equation}
Generation-oriented latent supervision can therefore be interpreted as biasing the limited latent capacity toward predictive semantic information by reducing the conditional uncertainty of missing or future tokens given the latent. Section~\ref{app:semantic} gives a concrete interpretation of the fixed-span encoder-mask objective.

\subsection{KL, Compression Rate, and Latent Information Capacity}

Let $p_0(Z\mid C)$ be the reference distribution used for regularization in Stage~1. For any such reference distribution,
\begin{equation}
\begin{aligned}
&\E_{C,Y}\KL\!\left(q_\phi(Z\mid C,Y)\|p_0(Z\mid C)\right)\\
&=I(Y;Z\mid C)
+\E_C\KL\!\left(\bar q_\phi(Z\mid C)\|p_0(Z\mid C)\right)\\
&\ge I(Y;Z\mid C).
\end{aligned}
\label{eq:app-kl-upper}
\end{equation}

Thus, the VAE KL term not only constrains the latent distribution but also upper-bounds the amount of information the latent can carry. If the posterior approximately factorizes across latents and dimensions, and the average KL per latent dimension stays roughly at $\bar r$, then the total KL scales approximately with $K d_z$. Since $K=\rho T$, the information capacity available per token follows the approximate scaling
\begin{equation}
    \frac{1}{T}I(Y;Z\mid C)
    \lesssim
    \rho d_z\bar r.
    \label{eq:app-rho-capacity}
\end{equation}

\method normalizes the KL per latent element so that the regularization pressure on each latent dimension remains as comparable as possible across different $K$ and $d_z$. Changing $\rho$ can therefore be understood more directly as changing the number of latent channels available per token. Dynamic segmentation further determines where these limited latent positions are allocated under a fixed target compression rate: the compression rate controls \emph{how many} representation positions exist, and semantic boundaries determine \emph{where} they are placed.

\section{Information-Theoretic Interpretation of Generation-Oriented Latent Supervision}
\label{app:semantic}

The $\Lcal_{\mathrm{gen}}$ term in the main text is not an additional inference-time sampler, but a fixed-span encoder-mask objective applied directly to the encoder. Its constraint is that a supervised token cannot simply rely on a complete current latent observation in one-to-one correspondence with itself, but must be predicted from a context in which its input is missing. The objective therefore specifically increases the \emph{predictive} information of the latent rather than merely its local reconstructability.

\subsection{Fixed-Span Encoder-Mask / BERT-Style Objective}

Let $M$ denote a randomly selected set of spans, $Y_M$ the text in these spans that is masked on the encoder side, $Y_{\bar M}$ the remaining visible text, and $Z^M$ the latent obtained by encoding the masked input under the fixed original span layout. Since the encoder cannot observe the ground-truth $Y_M$ in this forward pass, we have, given $(C,M)$, the Markov chain
\begin{equation}
    Y_M
    \longrightarrow
    (C,Y_{\bar M})
    \longrightarrow
    Z^M.
    \label{eq:app-mask-markov}
\end{equation}
By the data processing inequality,
\begin{equation}
    I(Y_M;Z^M\mid C,M)
    \le
    I(Y_M;Y_{\bar M}\mid C,M).
    \label{eq:app-mask-dpi}
\end{equation}
Hence the objective cannot be satisfied by directly copying the masked tokens; it can only exploit information in the visible context that is statistically related to the target spans. For any predictor $p_\eta(Y_M\mid C,Z^M,M)$, the cross-entropy can be written as
\begin{equation}
\begin{aligned}
    \Lcal_{\mathrm{gen}}^{\mathrm{mask}}
    ={}&H(Y_M\mid C,Z^M,M)\\
    &+\E\KL\!\left(
      q(Y_M\mid C,Z^M,M)
      \|p_\eta(Y_M\mid C,Z^M,M)
    \right).
\end{aligned}
\label{eq:app-mask-ce}
\end{equation}
If the predictor is conditionally optimal, then
\begin{equation}
\begin{aligned}
    (\Lcal_{\mathrm{gen}}^{\mathrm{mask}})^\star
    &=H(Y_M\mid C,Z^M,M)\\
    &=H(Y_M\mid C,M)-I(Y_M;Z^M\mid C,M).
\end{aligned}
\label{eq:app-mask-final}
\end{equation}
Therefore, when the encoder is trainable, reducing this objective drives the latent to contain more predictive information about the missing spans. The current MDM implementation uses fixed-span encoder-mask MDM: the first clean forward pass determines the hard spans, and the second pass replaces only the encoder observations without re-segmenting. Functionally, it thus imposes BERT-style generation pressure, but it is not equivalent to arbitrary forms of the span-BERT loss used in earlier implementations.

\section{Latent Hacking and Gradient Decoupling in Compressed Representations}
\label{app:hacking}

This appendix proves Lemma~\ref{lem:hacking}, and first distinguishes two kinds of ``joint training.'' We deliberately retain the gradient from the discrete generator decoder to the encoder, because token reconstruction and generation are precisely the discrete realization that the latent must ultimately serve. The risk discussed here refers only to allowing the DiT Flow Matching prior loss to backpropagate into the encoder. That loss optimizes how easily the current latent can be fit by the prior solver, which is not equivalent to the reconstructability or generability of the discrete semantics. Below, we first analyze the potential conflict between the two gradients, and then discuss representation drift under fixed capacity and local decoder geometry. These analyses do not require compression to necessarily shrink the valid region, nor do they constitute a general guarantee of training collapse.

\subsection{Joint Objective and Gradient Conflict}

Collect the Stage~1 objectives that shape the representation into
\begin{equation}
    \Lcal_{\mathrm{rep}}(\phi,\psi)
    =
    \Lcal_{\mathrm{bal}}
    +\lambda_{\mathrm{gen}}\Lcal_{\mathrm{gen}}
    +\lambda_{\mathrm{dur}}\Lcal_{\mathrm{dur}}
    +\lambda_{\mathrm{KL}}\Lcal_{\mathrm{KL}},
    \label{eq:app-rep-objective}
\end{equation}
where $\phi$ and $\psi$ denote the encoder and decoder parameters, respectively; both $\Lcal_{\mathrm{bal}}$ and $\Lcal_{\mathrm{gen}}$ update the encoder through the token-prediction path of the discrete generator decoder. If the gradient of the DiT Flow Matching loss is fully allowed to reach the encoder, the encoder effectively optimizes
\begin{equation}
    \Lcal_{\mathrm{joint}}
    =
    \Lcal_{\mathrm{rep}}
    +\lambda_{\mathrm{FM}}\Lcal_{\mathrm{FM}}.
    \label{eq:app-joint-objective}
\end{equation}
Define
\begin{equation}
    g_{\mathrm{rep}}=\nabla_\phi\Lcal_{\mathrm{rep}},
    \qquad
    g_{\mathrm{FM}}=\nabla_\phi\Lcal_{\mathrm{FM}}.
    \label{eq:app-two-gradients}
\end{equation}
A single SGD update is
\begin{equation}
    \phi^{+}
    =
    \phi-\eta\big(g_{\mathrm{rep}}+\lambda_{\mathrm{FM}}g_{\mathrm{FM}}\big).
    \label{eq:app-joint-update}
\end{equation}
A first-order expansion of $\Lcal_{\mathrm{rep}}$ gives
\begin{equation}
    \Delta\Lcal_{\mathrm{rep}}
    \approx
    -\eta\|g_{\mathrm{rep}}\|_2^2
    -\eta\lambda_{\mathrm{FM}}
    \langle g_{\mathrm{rep}},g_{\mathrm{FM}}\rangle .
\label{eq:app-gradient-conflict}
\end{equation}
Therefore, whenever
\begin{equation}
    \langle g_{\mathrm{rep}},g_{\mathrm{FM}}\rangle
    <
    -\frac{\|g_{\mathrm{rep}}\|_2^2}{\lambda_{\mathrm{FM}}},
    \label{eq:app-conflict-condition}
\end{equation}
the DiT loss can decrease while the representation objective locally increases. This is the most basic condition for unstable joint training: the two gradients are not inherently aligned.

\subsection{The DiT-to-Encoder Gradient Is Not a Semantic Gradient}

Let the encoder produce the clean latent via reparameterization,
\begin{equation}
    z_0=g_\phi(x,\epsilon_0),
    \qquad
    \epsilon_0\sim\Normal(0,I).
    \label{eq:app-hack-z0}
\end{equation}
A general v-prediction / Flow Matching process can be written as
\begin{equation}
    z_t=\alpha_tz_0+\sigma_t\epsilon,
    \qquad
    v_t^\star=a_tz_0+b_t\epsilon,
    \qquad
    \epsilon\sim\Normal(0,I),
    \label{eq:app-hack-forward}
\end{equation}
and we define
\begin{equation}
    r
    =
    v_\theta(z_t,t,c)-v_t^\star,
    \qquad
    \ell_v=\|r\|_2^2,
    \label{eq:app-hack-residual}
\end{equation}
where $c$ denotes conditions such as previous latent blocks or the prompt representation. If $c$ is detached from $z_0$, the chain rule gives
\begin{equation}
    \nabla_{z_0}\ell_v
    =
    2\big(\alpha_tJ_{v,z_t}^{\top}-a_tI\big)r,
    \label{eq:app-hack-zgrad}
\end{equation}
where $J_{v,z_t}=\partial v_\theta/\partial z_t$. If the condition path also depends on $z_0$ and is not detached, an additional term appears:
\begin{equation}
    2J_{c,z_0}^{\top}J_{v,c}^{\top}r.
    \label{eq:app-hack-condgrad}
\end{equation}
Finally,
\begin{equation}
    \nabla_\phi\ell_v
    =
    \left(\nabla_{z_0}\ell_v\right)^{\top}
    \frac{\partial z_0}{\partial\phi}.
    \label{eq:app-hack-phigrad}
\end{equation}
These gradients are determined jointly by the denoising residual, the DiT Jacobian, the condition path, and the noise schedule; they contain neither a reconstruction likelihood nor a semantic objective. Therefore, $g_{\mathrm{FM}}$ serves to make the current latent representation easier for the prior solver to fit, rather than to guarantee that the decodable semantics of the latent remain intact.

\subsection{Lowering the FM Loss Admits a ``Simpler Latent'' Shortcut}

Let $W=(z_t,t,c)$ and let the target velocity be $u_t$. For any vector field $v(W)$, the squared loss admits the conditional-expectation decomposition
\begin{equation}
\begin{aligned}
    \E\|v(W)-u_t\|_2^2
    ={}&
    \E\|v(W)-\E[u_t\mid W]\|_2^2\\
    &+
    \E\|u_t-\E[u_t\mid W]\|_2^2.
\end{aligned}
\label{eq:app-fm-biasvariance}
\end{equation}
Hence
\begin{equation}
    v^\star(W)=\E[u_t\mid W],
    \qquad
    \Lcal_{\mathrm{FM}}^\star
    =
    \E\!\left[\operatorname{Var}(u_t\mid z_t,t,c)\right].
    \label{eq:app-fm-minrisk}
\end{equation}
Eq.~\eqref{eq:app-fm-minrisk} shows that, when the encoder is also trainable, the FM loss can decrease in two ways: the DiT can better approximate the conditional mean, or the encoder can actively alter the latent geometry so that the conditional variance of the target velocity itself decreases. The latter path directly flattens semantically meaningful modes.

As a simple counterexample, let the semantic variable $S\in\{-1,+1\}$ be equiprobable and construct
\begin{equation}
    q_{\mathrm{sem}}(z\mid S)=\Normal(mSe_1,I),\quad m>0,
    \qquad
    q_{\mathrm{col}}(z\mid S)=\Normal(0,I).
    \label{eq:app-collapse-example}
\end{equation}
The former satisfies $I(S;Z_{\mathrm{sem}})>0$, whereas the latter satisfies $I(S;Z_{\mathrm{col}})=0$. However, in high-noise or weak-conditioning regions, transporting the bimodal semantic distribution is more complex, while the collapsed distribution is closer to the base Gaussian. It is therefore possible that
\begin{equation}
    \Lcal_{\mathrm{FM}}(q_{\mathrm{col}})
    <
    \Lcal_{\mathrm{FM}}(q_{\mathrm{sem}}),
    \qquad
    I(S;Z_{\mathrm{col}})
    <
    I(S;Z_{\mathrm{sem}}).
    \label{eq:app-collapse-shortcut}
\end{equation}
Consequently, a lower FM loss alone cannot rule out semantic collapse; under unrestricted joint training, collapse may even become an easier optimization direction.

\subsection{Information Constraints and Local Decoder Geometry under Compression}

In this subsection, $n$ denotes the number of tokens covered by a single latent $Z$, to distinguish it from the latent sequence length $K$ used elsewhere. Given the data distribution and context $C$, define the minimum conditional information required to reach a total reconstruction-loss threshold $\varepsilon$ as
\begin{equation}
\begin{aligned}
    R_n^\star(\varepsilon)
    ={}&\inf_{q(z\mid X_{1:n},C)}
    \Big\{I(X_{1:n};Z\mid C):\\
       &\qquad\inf_{p_\psi}
    \E[-\log p_\psi(X_{1:n}\mid Z,C)]\le\varepsilon\Big\}.
\end{aligned}
\label{eq:app-required-info}
\end{equation}
If the effective conditional information capacity of this latent is $C_n$, the information slack is
\begin{equation}
    \Delta_n=C_n-R_n^\star(\varepsilon).
    \label{eq:app-compression-slack}
\end{equation}
If additional tokens increase the required predictive information while $C_n$ does not increase accordingly, the slack may decrease. This depends on the data, the context, and the distortion criterion, and does not imply an unconditional monotonic relation among separately retrained models with different compression rates.

To illustrate the geometric constraint, fix the decoder, the target span $x_{1:n}$, and the context $c$. Under the causal factorization, let $p_i(z)=p_\psi(x_i\mid x_{<i},z,c)$, and let $\tau,\tau_i\in(0,1]$ denote probability thresholds. Define
\begin{equation}
    \mathcal A_{1:n}(\tau)
    =\left\{z:\prod_{i=1}^{n}p_i(z)\ge\tau\right\},
    \qquad
    \mathcal A_i(\tau_i)=\{z:p_i(z)\ge\tau_i\}.
    \label{eq:app-valid-region}
\end{equation}
Satisfying every per-token threshold is sufficient for satisfying the joint threshold, so
\begin{equation}
    \mathcal B_n:=\bigcap_{i=1}^{n}\mathcal A_i(\tau_i)
    \subseteq
    \mathcal A_{1:n}\!\left(\prod_{i=1}^{n}\tau_i\right).
    \label{eq:app-valid-intersection}
\end{equation}
With the existing $p_i$ and thresholds unchanged, adding one token constraint gives
\begin{equation}
    \mathcal B_{n+1}
    =\mathcal B_n\cap\mathcal A_{n+1}(\tau_{n+1})
    \subseteq\mathcal B_n.
    \label{eq:app-valid-nesting}
\end{equation}
This only states that the intersection of fixed constraints does not grow; it does not guarantee strict shrinkage, nor does it directly compare relearned latent geometries.

To exclude the trivial scale growth caused by summing losses, we use the per-token average loss below. With $\ell_i(z)=-\log p_i(z)$,
\begin{equation}
    \Lcal_{\mathrm{dec}}^{(n)}(z)
    =\frac1n\sum_{i=1}^{n}\ell_i(z).
    \label{eq:app-dec-k}
\end{equation}
Its local Hessian is
\begin{equation}
    H_n(z)=\nabla_z^2\Lcal_{\mathrm{dec}}^{(n)}(z)
    =\frac1n\sum_{i=1}^{n}\nabla_z^2\ell_i(z).
    \label{eq:app-hessian-k}
\end{equation}
For MDM, the same local analysis can take $\ell_i$ as the token denoising loss under a fixed masked state, without treating the product as the full MDM sequence likelihood. If the loss is twice continuously differentiable near $z$, then for a latent drift $\delta$,
\begin{equation}
\begin{aligned}
    \Lcal_{\mathrm{dec}}^{(n)}(z+\delta)
    -\Lcal_{\mathrm{dec}}^{(n)}(z)
    ={}&\nabla\Lcal_{\mathrm{dec}}^{(n)}(z)^\top\delta
       +\frac12\delta^\top H_n(z)\delta\\
       &+o(\|\delta\|_2^2).
\end{aligned}
\label{eq:app-dec-drift}
\end{equation}
Only when $z$ is close to a stationary point, the first-order and higher-order terms are negligible, and $\|H_n(z)\|_2>0$, the local quadratic perturbation scale corresponding to an allowed average-loss increase $\epsilon_d>0$ is approximately
\begin{equation}
    r_n^{\mathrm{quad}}(\epsilon_d)
    \simeq\sqrt{\frac{2\epsilon_d}{\|H_n(z)\|_2}}.
    \label{eq:app-safe-radius}
\end{equation}
This scale is neither the boundary of the actual valid region nor a guaranteed safe radius. More tokens need not increase the spectral norm of the average Hessian, and the actual loss change also depends on the drift direction. Only when the additional constraints increase the local curvature along the relevant directions can the decoder become more sensitive to drift along those directions.

\subsection{Prior-Induced Local Drift and Gradient Decoupling}

Considering only the DiT gradient component acting on the encoder, one parameter update is
\begin{equation}
    \delta\phi_{\mathrm{FM}}
    =-\eta\lambda_{\mathrm{FM}}g_{\mathrm{FM}}.
    \label{eq:app-fm-phi-drift}
\end{equation}
Let $J_{z,\phi}=\partial z_0/\partial\phi$ be the encoder Jacobian. Taking $g_{\mathrm{FM}}=\nabla_\phi\ell_v$ for the single-sample FM term, the induced first-order latent drift is
\begin{equation}
\begin{aligned}
    \delta z_{\mathrm{FM}}
    &\approx J_{z,\phi}\delta\phi_{\mathrm{FM}}\\
    &=-\eta\lambda_{\mathrm{FM}}
      J_{z,\phi}J_{z,\phi}^{\top}\nabla_{z_0}\ell_v.
\end{aligned}
\label{eq:app-fm-latent-drift}
\end{equation}
For batch updates, $g_{\mathrm{FM}}$ should be replaced by the corresponding aggregated parameter gradient. Fixing the current decoder and setting $z=z_0$, Eq.~\eqref{eq:app-dec-drift} gives
\begin{equation}
\begin{aligned}
    \Delta\Lcal_{\mathrm{dec}}^{\mathrm{FM}}
    :={}&\Lcal_{\mathrm{dec}}^{(n)}(z+\delta z_{\mathrm{FM}})
          -\Lcal_{\mathrm{dec}}^{(n)}(z)\\
    \approx{}&\nabla\Lcal_{\mathrm{dec}}^{(n)}(z)^\top\delta z_{\mathrm{FM}}
      +\frac12\delta z_{\mathrm{FM}}^\top H_n(z)\delta z_{\mathrm{FM}}.
\end{aligned}
\label{eq:app-fm-decoder-drift}
\end{equation}
Hence, the FM-induced drift may increase the token loss along sensitive directions of the decoder, with an effect that depends on the encoder Jacobian, the direction and magnitude of the drift, and the simultaneous adaptation of the decoder. This analysis characterizes the local risk of prior-induced representation drift; it does not establish a general relation between the compression rate and training collapse.

Full detachment directly removes the prior-solver gradient acting on the encoder:
\begin{equation}
    \nabla_\phi\Lcal_{\mathrm{joint}}
    \xrightarrow{\mathrm{full\ detach}}
    \nabla_\phi\Lcal_{\mathrm{rep}},
    \qquad
    \nabla_\theta\Lcal_{\mathrm{joint}}
    =\lambda_{\mathrm{FM}}\nabla_\theta\Lcal_{\mathrm{FM}}.
    \label{eq:app-full-detach}
\end{equation}
The VAE still learns under its own objectives, and the DiT still fits the aggregated posterior using samples from the current encoder. This operation guarantees that the path by which the prior loss directly rewrites the encoder is blocked, but does not guarantee that the full training process is necessarily stable.

\section{Joint Information-Theoretic Interpretation of VAE and DiT Noise}
\label{app:noise}

This appendix proves Lemma~\ref{lem:noise}. To analyze the relation between the Stage~1 latent noise and the Stage~2 diffusion noise, we write the clean latent in the simplified form
\begin{equation}
    Z=G+N_0,
    \label{eq:app-vae-noise}
\end{equation}
where $G$ denotes the signal associated with high-level semantics and $N_0$ denotes the VAE posterior noise. The DiT further constructs
\begin{equation}
    Z_t=\alpha_t Z+\sigma_t\varepsilon,
    \qquad
    \varepsilon\sim\Normal(0,I).
    \label{eq:app-dit-noise}
\end{equation}

Under a Gaussian approximation, assume that $G\sim\Normal(0,\Sigma_G)$ and $N_0\sim\Normal(0,\Sigma_0)$ are independent. Conditioned on $G$, the effective noise covariance of $Z_t$ is $\alpha_t^2\Sigma_0+\sigma_t^2I$, and therefore
\begin{equation}
    I(G;Z_t)
    =
    \frac12\log\det\!\left[
      I+
      \alpha_t^2\Sigma_G
      (\alpha_t^2\Sigma_0+\sigma_t^2I)^{-1}
    \right].
\label{eq:app-gaussian}
\end{equation}

Defining $\lambda_t=\log(\alpha_t^2/\sigma_t^2)$, the same expression can equivalently be written as
\begin{equation}
    I(G;Z_t)
    =
    \frac12\log\det\!\left[
      I+
      \Sigma_G(\Sigma_0+e^{-\lambda_t}I)^{-1}
    \right].
    \label{eq:app-logsnr-mi}
\end{equation}

Eq.~\eqref{eq:app-logsnr-mi} expresses a central fact: the VAE posterior noise and the DiT corruption jointly determine how discriminable the latent semantic signal is. Applying the same DiT noise schedule to VAE latents of different sharpness therefore corresponds to different semantic-information regimes. We accordingly treat the two noise sources as quantities that must be calibrated jointly, rather than claiming that any fixed logSNR or timestep is a model-independent theoretical optimum.

\section{Reconstruction--Generation Balance in the Discrete Generator Decoder}
\label{app:decoder}

This appendix proves Lemma~\ref{lem:decoder}.

\subsection{Information Allocation under Different Latent Conditions}

Let $Z_A$ denote the current-block latent condition actually received by the discrete generator decoder under condition mode $A$, where $A\in\{\mathrm{cond},\mathrm{noise},\mathrm{drop}\}$. For each condition, Eq.~\eqref{eq:main-info} in the main text holds:
\begin{equation}
    H(Y\mid C)
    =
    I(Y;Z_A\mid C)
    +
    H(Y\mid C,Z_A).
    \label{eq:app-decoder-info}
\end{equation}

The \textsf{cond} state uses the complete clean latent and therefore typically provides the most informative condition for the current block. The \textsf{drop} state zeros the latent condition of the current block, which corresponds to masked token prediction that does not depend on that latent. The \textsf{noise} state applies a small random perturbation to the clean latent to train the decoder's robustness to imprecise conditions. Together, the three states cover three training regimes: strongly conditioned reconstruction, local robustness, and weakly conditioned generation.

For any discrete generator decoder $p_\psi(Y\mid C,Z_A)$, the cross-entropy decomposes exactly as
\begin{equation}
\begin{aligned}
    \Lcal_A(\psi)
    ={}&H(Y\mid C,Z_A)\\
    &+\E\KL\!\left(
      q(Y\mid C,Z_A)
      \|p_\psi(Y\mid C,Z_A)
    \right).
\end{aligned}
\label{eq:app-mode-ce}
\end{equation}

The first term is determined by how much uncertainty remains under the given condition; the second is the modeling error of the finite-capacity discrete generator decoder relative to the optimal conditional distribution. Training only under \textsf{cond} mainly optimizes the strongly conditioned region and may make the model overly dependent on precise latents, whereas training only under \textsf{drop} may cause the discrete generator decoder to ignore the latent. Mixing \textsf{cond}, \textsf{noise}, and \textsf{drop} trains the same decoder parameters to reduce the modeling error across different levels of condition reliability. The optimal mixture depends on model capacity, training budget, and the distribution of latent errors at inference, and should therefore be determined empirically.

\subsection{Local Perturbation Path of the Noise Condition}

The \textsf{noise} condition uses the same linear interpolation as the latent generator, but restricts the range of the perturbation coefficient. Let $u\in[0,u_{\max}]$, where $u_{\max}\ll1$ controls the perturbation scale; then
\begin{equation}
    Z_u=(1-u)Z+u\varepsilon,
    \qquad
    \varepsilon\sim\Normal(0,I).
    \label{eq:app-shared-path}
\end{equation}

For a single $d_z$-dimensional latent, let $\Delta Z_u=Z_u-Z$. Independent standard Gaussian noise gives
\begin{equation}
    \E[\|\Delta Z_u\|_2^2\mid Z,u]
    =u^2(d_z+\|Z\|_2^2).
    \label{eq:app-noise-perturbation-scale}
\end{equation}
A small $u$ controls the mean-squared perturbation scale, but Gaussian noise does not have finite support. For each $u$, the decoder receives information $I(Y;Z_u\mid C)$; if a stronger perturbation constitutes a more degraded stochastic channel, this mutual information does not increase, and the corresponding $H(Y\mid C,Z_u)$ does not decrease. The \textsf{noise} branch optimizes the expected token loss under this perturbation distribution to improve adaptation to imprecise latents; it neither assumes that the artificial perturbation shares the distribution of DiT output errors, nor guarantees uniform robustness to all inference-time deviations.

\section{Probabilistic Derivation of Classifier-Free Guidance}
\label{app:cfg}

The single-axis guidance of the latent DiT and each axis of the two-axis MDM guidance are based on the linear operator
\begin{equation}
    f_\gamma
    =f_{\mathrm u}+\gamma\big(f_{\mathrm c}-f_{\mathrm u}\big),
    \label{eq:app-cfg-general}
\end{equation}
where $f_{\mathrm c}$ and $f_{\mathrm u}$ are the branch outputs with and without the selected condition, and $\gamma$ is the guidance scale; the latent generator applies this operator to velocities, whereas the MDM decoder combines logits along the prefix and latent axes. Below, we give a per-noise-time interpretation under exact scores and optimal FM regression fields, and derive the token-probability reweighting under a fixed decoding state; neither automatically yields a power-mixture guarantee for the complete generated distribution.

\subsection{From Bayes' Rule to the Guided Score}

First consider a continuous variable $X_u$ with a differentiable positive density, where $u$ is the noise time and $x$ its value. Let $\mathcal K$ denote the context shared by the conditional and unconditional branches, let $A$ denote the additional condition dropped in CFG, and define
\begin{equation}
\begin{aligned}
    s_{\mathrm c}(x,u)
    &=\nabla_x\log p_u(x\mid \mathcal K,A),\\
    s_{\mathrm u}(x,u)
    &=\nabla_x\log p_u(x\mid \mathcal K).
\end{aligned}
\label{eq:app-cfg-scores}
\end{equation}
By Bayes' rule,
\begin{equation}
    p_u(x\mid \mathcal K,A)
    =\frac{p_u(A\mid x,\mathcal K)p_u(x\mid \mathcal K)}{p(A\mid \mathcal K)},
\end{equation}
and taking the gradient with respect to $x$ yields
\begin{equation}
    s_{\mathrm c}(x,u)-s_{\mathrm u}(x,u)
    =\nabla_x\log p_u(A\mid x,\mathcal K).
    \label{eq:app-cfg-score-difference}
\end{equation}
The standard CFG score therefore satisfies
\begin{equation}
\begin{aligned}
    s_\gamma
    &=s_{\mathrm u}+\gamma(s_{\mathrm c}-s_{\mathrm u})\\
    &=\nabla_x\log p_u(x\mid \mathcal K)
      +\gamma\nabla_x\log p_u(A\mid x,\mathcal K)\\
    &=\nabla_x\log \widetilde p_{\gamma,u}(x\mid \mathcal K,A),
\end{aligned}
\label{eq:app-cfg-guided-score}
\end{equation}
where the unnormalized guided distribution is
\begin{equation}
\begin{aligned}
    \widetilde p_{\gamma,u}(x\mid \mathcal K,A)
    &\propto
    p_u(x\mid \mathcal K)\,p_u(A\mid x,\mathcal K)^\gamma\\
    &\propto
    p_u(x\mid \mathcal K,A)^\gamma
    p_u(x\mid \mathcal K)^{1-\gamma}.
\end{aligned}
\label{eq:app-cfg-guided-density}
\end{equation}
When the normalizing constant is finite, it is independent of $x$ and does not affect the score. For a fixed $u$, in Eq.~\eqref{eq:app-cfg-guided-density}, $\gamma=0$ recovers the base density at that time, $\gamma=1$ recovers the conditional density, $0<\gamma<1$ attenuates the conditional likelihood ratio, and $\gamma>1$ amplifies it. This equivalence describes the instantaneous score field rather than the final sampling distribution.

\subsection{Equivalence of Flow Matching Velocity CFG and Score CFG}

The latent generator of \method predicts velocities rather than scores, so it remains to explain why applying the same linear combination to velocities, as in Eq.~\eqref{eq:workflow-cfg}, is still valid. Consider a general Gaussian corruption path
\begin{equation}
    X_u=\alpha_u X_0+\sigma_u\varepsilon,
    \qquad
    \varepsilon\sim\Normal(0,I),
    \label{eq:app-cfg-gaussian-path}
\end{equation}
where $\alpha_u$ and $\sigma_u$ are differentiable path coefficients and dots denote derivatives with respect to the noise time $u$. The following identities apply at interior times with $\alpha_u\neq0$ and $\sigma_u>0$. For an arbitrary condition $Q$, write
\begin{equation}
    s_Q(x,u)=\nabla_x\log p_u(x\mid Q).
\end{equation}
The score identity for Gaussian convolution gives
\begin{equation}
    \E[\varepsilon\mid X_u=x,Q]
    =-\sigma_u s_Q(x,u).
    \label{eq:app-cfg-eps-score}
\end{equation}
On the other hand, since $x=\alpha_uX_0+\sigma_u\varepsilon$,
\begin{equation}
    \E[X_0\mid X_u=x,Q]
    =\frac{x+\sigma_u^2s_Q(x,u)}{\alpha_u}.
    \label{eq:app-cfg-x0-score}
\end{equation}
The optimal Flow Matching regression target is the conditional expected path velocity, so
\begin{equation}
\begin{aligned}
    v_Q(x,u)
    &=\E[\dot\alpha_u X_0+\dot\sigma_u\varepsilon
      \mid X_u=x,Q]\\
    &=\frac{\dot\alpha_u}{\alpha_u}x
      +\sigma_u\left(
        \frac{\dot\alpha_u\sigma_u}{\alpha_u}
        -\dot\sigma_u
      \right)s_Q(x,u).
\end{aligned}
\label{eq:app-cfg-velocity-score}
\end{equation}
In other words, at a fixed $u$, the velocity and the score are related by an affine transformation shared by all conditions,
\begin{equation}
    v_Q(x,u)=a_u x+b_u s_Q(x,u),
    \label{eq:app-cfg-affine}
\end{equation}
where $a_u=\dot\alpha_u/\alpha_u$ and $b_u=\sigma_u(\dot\alpha_u\sigma_u/\alpha_u-\dot\sigma_u)$. When the conditional and unconditional branches are subtracted, the shared term $a_ux$ cancels exactly, so
\begin{equation}
\begin{aligned}
    v_{\mathrm u}
    +\gamma_{\mathrm{gen}}(v_{\mathrm c}-v_{\mathrm u})
    &=a_ux+b_u
      \left[s_{\mathrm u}
      +\gamma_{\mathrm{gen}}(s_{\mathrm c}-s_{\mathrm u})\right]\\
    &=a_ux+b_us_{\gamma}.
\end{aligned}
\label{eq:app-cfg-velocity-equivalence}
\end{equation}
Therefore, under a shared Gaussian path and optimal branch regression fields, velocity CFG and score CFG correspond pointwise, with $\gamma=\gamma_{\mathrm{gen}}$. For the linear path used in this work,
\begin{equation}
    \alpha_u=1-u,
    \qquad
    \sigma_u=u,
\end{equation}
Eq.~\eqref{eq:app-cfg-velocity-score} further reduces to
\begin{equation}
    v_Q(x,u)
    =-\frac{x}{1-u}
     -\frac{u}{1-u}s_Q(x,u),
    \qquad 0<u<1,
    \label{eq:app-cfg-linear-velocity}
\end{equation}
with the endpoints understood as continuous limits when these exist. In \method, $v_{\mathrm c}$ uses the historical latent cache while $v_{\mathrm u}$ clears it, so $\gamma_{\mathrm{gen}}$ modulates the influence of the historical condition on the instantaneous field.

Let $v_\gamma=v_{\mathrm u}+\gamma(v_{\mathrm c}-v_{\mathrm u})$. If the normalized density in Eq.~\eqref{eq:app-cfg-guided-density} were also the actual marginal of this flow, it would have to satisfy the continuity equation
\begin{equation}
    \partial_u\widetilde p_{\gamma,u}
    +\nabla_x\!\cdot\big(\widetilde p_{\gamma,u}v_\gamma\big)=0.
    \label{eq:app-cfg-path-consistency}
\end{equation}
The per-time score identity alone does not guarantee this path-consistency condition. Hence, even with exact branch fields, one cannot conclude that the endpoint of guided integration follows the data distribution reweighted by the same power.

\subsection{Prefix--Latent Two-Axis CFG for MDM}
\label{app:prefix-cfg}

For the \texttt{masked\_only} MDM, let $\mathcal P$ be the committed pure-token prefix, $\mathcal U$ the token/mask state of the current block together with the fixed span layout, and $H^Z$ the current latent condition refined by the condition stack. Denoting the decoder logits by $\ell(y\mid\cdot)$, the four branches are
\begin{equation}
\begin{aligned}
    \ell_{11}(y)&=\ell(y\mid\mathcal U,\mathcal P,H^Z),&
    \ell_{10}(y)&=\ell(y\mid\mathcal U,\mathcal P,0),\\
    \ell_{01}(y)&=\ell(y\mid\mathcal U,\varnothing,H^Z),&
    \ell_{00}(y)&=\ell(y\mid\mathcal U,\varnothing,0).
\end{aligned}
\label{eq:app-prefix-cfg-branches}
\end{equation}
The first subscript controls the visibility of the prefix token KV, and the second controls the additive injection of the current latent; $0$ is a zero condition tensor rather than a learnable null embedding. All four branches share $\mathcal U$ and the same $H^Z$, so $\ell_{00}$ can still read the known tokens of the current block and is not an uninformed unconditional text model.

Using guidance scales $\gamma_{\mathrm p}$ and $\gamma_{\mathrm z}$ for the prefix and latent axes, respectively, the expansion is
\begin{equation}
\begin{aligned}
    \ell_{\gamma_{\mathrm p},\gamma_{\mathrm z}}
    ={}&\gamma_{\mathrm p}\gamma_{\mathrm z}\ell_{11}
      +\gamma_{\mathrm p}(1-\gamma_{\mathrm z})\ell_{10}\\
      &+(1-\gamma_{\mathrm p})\gamma_{\mathrm z}\ell_{01}
      +(1-\gamma_{\mathrm p})(1-\gamma_{\mathrm z})\ell_{00}.
\end{aligned}
\label{eq:app-prefix-cfg-expanded}
\end{equation}
This bilinear combination is independent of the nesting order of the two axes. Setting $\gamma_{\mathrm p}=1$ modulates only the latent, $\gamma_{\mathrm z}=1$ modulates only the prefix, and $\gamma_{\mathrm z}=0$ does not use the current latent condition. In general, dropping nonzero terms no longer yields this two-axis rule.

Let $c_{ab}$ be the coefficient of $\ell_{ab}$ in Eq.~\eqref{eq:app-prefix-cfg-expanded}, and $p_{ab}=\operatorname{softmax}(\ell_{ab})$ the corresponding token probability. For any candidate token $y$,
\begin{equation}
    \ell_{ab}(y)=\log p_{ab}(y)+\kappa_{ab},
    \label{eq:app-cfg-logit-calibration}
\end{equation}
where $\kappa_{ab}$ is the log-partition constant of each branch and is independent of $y$. For a fixed current state,
\begin{equation}
    \sum_{a,b\in\{0,1\}}c_{ab}=1,
    \qquad
    p_{\gamma_{\mathrm p},\gamma_{\mathrm z}}(y)
    \propto\prod_{a,b\in\{0,1\}}p_{ab}(y)^{c_{ab}}.
    \label{eq:app-prefix-cfg-density}
\end{equation}
This algebraic identity does not require the network to reach the true conditional distribution. The coefficients can be negative, so this is an affine combination of logits rather than a convex mixture of probabilities; coefficients summing to $1$ also do not guarantee that the sharpness of the distribution or the confidence ranking used for demasking is preserved. Prefix-off only masks the token-cache read path and does not remove the historical information retained in $H^Z$, so these operational branches are not directly equated with four marginal conditions of the true joint distribution.

This rule acts on the token distribution of each prediction; since the states and normalizers of multi-step demasking change with the generated results, a global power mixture of the complete text distribution does not follow automatically. The key mask, position indexing, and read-only cache protocol are given in Appendix~\ref{app:prefix-cfg-implementation}.

\section{Implementation Details of the Overall Workflow}
\label{app:workflow-details}

This appendix supplements the dynamic segmentation, the block-causal MDM decoder, the Stage~1/Stage~2 gradient routing, and the complete inference procedure with latent feedback omitted from Section~\ref{sec:workflow}. The main text retains only the core definitions needed for unified modeling; implementation details are expanded here.

\subsection{Dynamic Semantic Segmentation and Compression-Rate Control}
\label{app:segmentation}

Let $h_t=\operatorname{Emb}(y_t)$ denote the token representation before segmentation. The semantic discontinuity between adjacent positions is defined as
\begin{equation}
    \delta_t
    =1-\cos\!\left(\operatorname{LN}(h_t),\operatorname{LN}(h_{t+1})\right),
    \qquad t=1,\ldots,T-1,
    \label{eq:app-workflow-delta}
\end{equation}
and is mapped to a boundary probability by
\begin{equation}
    p_t
    =\sigma\!\left(\alpha(\delta_t-\tau_0)+b\right),
    \qquad
    \operatorname{boundary}_t=\ind[p_t>0.5],
    \label{eq:app-workflow-boundary}
\end{equation}
where $\alpha$ is the slope, $\tau_0$ the threshold, and $b$ a scalar bias controlling the overall boundary density. If the target compression rate is $\rho^\star$ and the measured rate on the current global batch is $\hat\rho$, then $b$ does not participate in backpropagation and is updated as
\begin{equation}
    b\leftarrow
    \operatorname{clip}\!\left(
      b-\eta_b(\hat\rho-\rho^\star),-c_b,c_b
    \right),
    \label{eq:app-workflow-rate-control}
\end{equation}
where $\eta_b$ is the control step size and $c_b$ the clipping range. The upper bound $L_{\max}$ is enforced by forced cuts during a greedy scan.

\subsubsection{Convergence of Compression-Rate Control}
\label{app:rate-convergence}
By Eq.~\eqref{eq:app-workflow-boundary}, a hard boundary satisfies
\begin{equation}
    p_t>\frac12
    \quad\Longleftrightarrow\quad
    \delta_t>\tau_{\mathrm{eff}}(b)
    :=\tau_0-\frac{b}{\alpha}.
    \label{eq:app-effective-threshold}
\end{equation}
Thus, increasing $b$ lowers the effective threshold and increases the boundary density; together with Eq.~\eqref{eq:app-workflow-rate-control}, the sign of $b$ forms a negative-feedback loop. With the encoder and data distribution fixed, define
\begin{equation}
    r(b):=\E[\hat\rho\mid b]
    \label{eq:app-rate-response}
\end{equation}
as the expected compression rate after hard thresholding and length constraints. Length-based forced cuts introduce plateaus and an attainable lower bound into $r(b)$, but as long as the target $\rho^\star$ lies within the attainable range and the local rate response near the equilibrium remains monotone, the following stability result holds.

\begin{lemma}[Local Convergence of Dynamic Compression-Rate Control]
\label{lem:rate-control-convergence}
Assume that there exists $b^\star\in(-c_b,c_b)$ such that $r(b^\star)=\rho^\star$. If $r$ is differentiable in a neighborhood $\mathcal I$ of $b^\star$ and satisfies
\begin{equation}
    0<m\le r'(b)\le L<\infty,
    \qquad b\in\mathcal I,
    \label{eq:app-rate-slope-bound}
\end{equation}
then for any $0<\eta_b<2/L$, the mean-response update map
\begin{equation}
    F(b)=\Pi_{[-c_b,c_b]}\!\left[b-\eta_b\big(r(b)-\rho^\star\big)\right]
    \label{eq:app-rate-map}
\end{equation}
is a contraction in a sufficiently small neighborhood of $b^\star$, with
\begin{equation}
    |b_{n+1}-b^\star|
    \le q|b_n-b^\star|,
    \qquad
    q=\max\{|1-\eta_bm|,|1-\eta_bL|\}<1.
    \label{eq:app-rate-geometric}
\end{equation}
Consequently, $b_n\to b^\star$ and $r(b_n)\to\rho^\star$, with at least a geometric convergence rate $\mathcal O(q^n)$.
\end{lemma}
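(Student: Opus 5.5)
The plan is a one-dimensional contraction argument based on the mean value theorem. The neighborhood is chosen first. Since $b^\star\in(-c_b,c_b)$ is interior, fix $\delta>0$ so that $\mathcal I_\delta:=[b^\star-\delta,b^\star+\delta]\subseteq\mathcal I\cap(-c_b,c_b)$. On $\mathcal I_\delta$ the slope bounds \eqref{eq:app-rate-slope-bound} hold, and $r$ is differentiable there, hence continuous.

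Next, I handle the unprojected map $\tilde F(b)=b-\eta_b(r(b)-\rho^\star)$. For $b\in\mathcal I_\delta$, use $r(b^\star)=\rho^\star$ and the mean value theorem to get some $\xi$ between $b$ and $b^\star$ with
\begin{equation*}
    \tilde F(b)-b^\star=(b-b^\star)-\eta_b\,r'(\xi)(b-b^\star)=\big(1-\eta_b r'(\xi)\big)(b-b^\star).
\end{equation*}
Because $r'(\xi)\in[m,L]$, the factor $1-\eta_b r'(\xi)$ lies in $[1-\eta_bL,\,1-\eta_bm]$. The function $s\mapsto|1-\eta_b s|$ is convex, so its maximum over $[m,L]$ is attained at an endpoint, which gives $|1-\eta_b r'(\xi)|\le q$. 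To see that $q<1$: the condition $0<\eta_b m\le\eta_b L<2$ gives $1-\eta_bm<1$ and $1-\eta_bL>-1$. Also $1-\eta_bm\ge1-\eta_bL>-1$ and $1-\eta_bL\le1-\eta_bm<1$, so both endpoint values lie in $(-1,1)$.

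The projection is then removed with one observation. Projection onto an interval containing $b^\star$ is nonexpansive toward $b^\star$, since $\Pi(b^\star)=b^\star$ and $\Pi$ is 1-Lipschitz. Therefore
\begin{equation*}
    |F(b)-b^\star|\le|\tilde F(b)-b^\star|\le q|b-b^\star|.
\end{equation*}
It follows that $F$ maps $\mathcal I_\delta$ into itself. Induction from any $b_0\in\mathcal I_\delta$ gives \eqref{eq:app-rate-geometric}, so $|b_n-b^\star|\le q^n|b_0-b^\star|$. The mean value bound also yields $|r(b_n)-\rho^\star|\le L|b_n-b^\star|=\mathcal O(q^n)$. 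If the claim is read as a contraction between two arbitrary points, the same mean value argument applied to $\tilde F(b)-\tilde F(b')$ gives the factor $q$ directly.

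I expect the main obstacle to be interpretive rather than computational. The real controller uses the noisy batch rate $\hat\rho$ and a fixed encoder, whereas the lemma concerns only the mean-response map. I will state explicitly that the result is deterministic and local: it covers iterates started in $\mathcal I_\delta$, which invariance keeps there. I will also point out where each hypothesis enters. The plateaus caused by forced cuts are excluded by $m>0$, and $\eta_b<2/L$ rules out oscillatory overshoot. No stochastic-approximation claim is made.
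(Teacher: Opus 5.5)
Your proposal is correct and follows the same route as the paper's proof. Both apply the mean value theorem to the error $e_n=b_n-b^\star$, obtain the factor $1-\eta_b r'(\xi_n)$ bounded by $q<1$, and use non-expansiveness of the projection onto an interval containing $b^\star$. Your explicit choice of $\mathcal I_\delta$ and the invariance check $F(\mathcal I_\delta)\subseteq\mathcal I_\delta$ add a step the paper leaves implicit: that the iterates stay in the region where the slope bounds hold.
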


\begin{proof}
Let $e_n=b_n-b^\star$. By the mean value theorem, there exists $\xi_n$ between $b_n$ and $b^\star$ such that
\begin{equation}
    r(b_n)-r(b^\star)=r'(\xi_n)e_n.
\end{equation}
Ignoring the projection, the mean-response update in Eq.~\eqref{eq:app-rate-map} gives
\begin{equation}
    e_{n+1}
    =\big(1-\eta_br'(\xi_n)\big)e_n.
    \label{eq:app-rate-error}
\end{equation}
From Eq.~\eqref{eq:app-rate-slope-bound} and $0<\eta_b<2/L$, we have $|1-\eta_br'(\xi_n)|\le q<1$. The interval projection $\Pi_{[-c_b,c_b]}$ is non-expansive, and since $b^\star$ lies in the interior of the interval, the projection does not enlarge the error. This yields Eq.~\eqref{eq:app-rate-geometric} and completes the proof.
\end{proof}

In practice, with the encoder fixed, the hard rate on a finite global batch can be written as $\hat\rho_n=r(b_n)+\varepsilon_n$, where $\varepsilon_n$ denotes measurement fluctuation. If the $i$-th sample contains $T_i$ valid tokens, the quantization step of the rate is approximately $1/\sum_iT_i$. Under local stability and controlled noise, a fixed $\eta_b$ typically results in fluctuation around the target; using a decreasing step size with $\sum_n\eta_n=\infty$ and $\sum_n\eta_n^2<\infty$ additionally requires stochastic-approximation conditions such as zero-mean bounded-variance noise and iterate stability before convergence can be discussed.

During joint training, let $\phi_n$ be the encoder parameters at the $n$-th update, $r_n(b)=\E[\hat\rho_n\mid b,\phi_n]$ the actual response, and $b_n^\star$ its equilibrium satisfying $r_n(b_n^\star)=\rho^\star$. If the response at every step has a uniform contraction factor $q<1$ over the visited region, then
\begin{equation}
    |b_{n+1}-b_{n+1}^\star|
    \le q|b_n-b_n^\star|
       +\eta_b|\varepsilon_n|
       +|b_{n+1}^\star-b_n^\star|,
    \label{eq:app-rate-tracking}
\end{equation}
where $\varepsilon_n=\hat\rho_n-r_n(b_n)$. The last term captures the target drift caused by encoder updates. The lemma therefore analyzes local convergence under a fixed response and provides a tracking interpretation for slowly varying responses, rather than a convergence guarantee for fully non-stationary joint training.

Given the assignment map $g(t)$, span pooling uses block-diagonal bidirectional attention:
\begin{equation}
    M^{\mathrm{pool}}_{ij}=\ind[g(i)=g(j)],
    \qquad
    r_k=\frac{1}{L_k}\sum_{t\in\mathcal S_k}\widetilde h_t,
    \label{eq:app-workflow-pool}
\end{equation}
where $\widetilde h=\operatorname{Attn}_{M^{\mathrm{pool}}}(h)$. A causal Transformer along the latent axis then produces $\bar r_k$, and the posterior parameters are
\begin{equation}
    [\mu_k,\log\sigma_k^2]=W_{\mathrm{post}}\bar r_k.
    \label{eq:app-workflow-posterior-head}
\end{equation}
Reparameterized sampling is used during training, and the posterior mean is used at inference.

\subsection{Decoder Duration and Token-Level Conditioning}

Let $g_k=W_{\mathrm{in}}z_k$. The duration head outputs a distribution over $L_{\max}$ classes,
\begin{equation}
    p(L_k\mid z_k)=\operatorname{softmax}(\operatorname{MLP}_{\mathrm{dur}}(g_k)).
    \label{eq:app-workflow-duration}
\end{equation}
Given $L_k$, $g_k$ is repeated $L_k$ times and concatenated with the relative position within the span. If token $t$ has local index $j$ in its span of length $L$, then
\begin{equation}
    h_t^Z
    =W_c\!\left[
      g_{g(t)}\,\middle\|\,
      \operatorname{MLP}_{\mathrm{loc}}\!\left(
        \frac{j}{L},\frac{L-j}{L},\log L
      \right)
    \right].
    \label{eq:app-workflow-expander}
\end{equation}
Before being injected into the token decoder, this condition is further refined by a few block-causal conditioning layers. Consequently, multiple token positions expanded from the same latent remain distinguishable.

\subsection{Block-Causal Masked-Diffusion Decoder}
\label{app:discrete-generator-details}

The main text denotes the token-side module by the discrete generator decoder $D_\psi$, instantiated as a block-causal MDM. It receives the committed token context and the token-aligned latent condition $H^Z$, and realizes continuous semantics as discrete tokens. The decoder is connected to the compression encoder through the duration/expansion interface, uses latent reliability augmentation, and is trained jointly with the detached DiT in Stage~2.

During training, the decoder uses a dual layout $[\text{clean}\mid\text{masked}]$ of length $2T$. The clean half stores the ground-truth tokens, while selected positions in the masked half are replaced with a learnable \texttt{[MASK]} embedding. Let $\beta(i)$ denote the latent block containing token $i$, and let $\operatorname{cl}(i)\in\{0,1\}$ indicate whether the position belongs to the clean half. The attention mask is
\begin{equation}
\operatorname{Allow}(i,j)=
\begin{cases}
\operatorname{cl}(j)\wedge[\beta(j)\le\beta(i)],
&\operatorname{cl}(i)=1,\\[3pt]
\big(\operatorname{cl}(j)\wedge[\beta(j)<\beta(i)]\big)
\vee
\big(\neg\operatorname{cl}(j)\wedge[\beta(j)=\beta(i)]\big),
&\operatorname{cl}(i)=0.
\end{cases}
\label{eq:app-workflow-mdm-mask}
\end{equation}
Hence token blocks are causal with respect to one another, while bidirectional denoising is allowed within the current masked block; a masked query cannot read the clean copy of its current block, which prevents target leakage. Under the \texttt{masked\_only} configuration, the clean half receives no latent condition, while the masked half receives a clean, noisy, or zero latent condition according to the condition state. This design naturally decouples the historical clean-token cache from the latent condition and allows the conditioned and unconditioned logits to share the same prefix cache.

During training, each token block independently samples a masking ratio $\pi_j$, and the cross-entropy of each masked token is weighted by $\pi_j^{-1}$. With $\mathcal M_j$ denoting the masked positions in block $j$,
\begin{equation}
    \Lcal_{\mathrm{MDM}}^{A}
    =\frac{1}{|\mathcal M_A|}
    \sum_{t\in\mathcal M_A}
    \frac{-\log p_\psi(y_t\mid \widetilde Y,H_A^Z)}
         {\pi_{\beta(t)}},
    \qquad
    A\in\{\textsf{cond},\textsf{noise},\textsf{drop}\},
    \label{eq:app-workflow-mdm-loss}
\end{equation}
where the supervised sets of the three states are normalized independently. The implementation also uses fully masked blocks with some probability to directly train the recovery of a complete new block from the latent and the historical blocks.

The decoder uses the encoder-mask generation objective as $\Lcal_{\mathrm{gen}}$. A clean encoder forward pass first fixes the hard spans; a random subset of spans is then selected, their encoder input tokens are replaced with encoder-side masks, and a second encoder forward pass is performed without rerunning the dynamic segmenter. The decoder uses fully masked queries for blocks touched by these spans, but supervises only the tokens actually masked on the encoder side:
\begin{equation}
    \Lcal_{\mathrm{encmask}}
    =-\frac{1}{|\mathcal M_{\mathrm e}|}
      \sum_{t\in\mathcal M_{\mathrm e}}
      \log p_\psi(y_t\mid \widetilde Y,Z^{\mathrm{encmask}}).
    \label{eq:app-workflow-encmask}
\end{equation}
Since the span layout is fixed by stop-gradient, this branch forces the encoder to use context to form decodable latents when local observations are missing, rather than lowering the task difficulty by moving boundaries.

\subsection{Stage 1: Unified Compression VAE / Discrete Generator Decoder Objective}
\label{app:stage1-details}

The block-causal MDM decoder is trained end to end with the encoder in Stage~1; this is the core of the co-adaptation between continuous semantics and discrete realization in this work, as opposed to first fixing the encoder and then fitting the decoder. For each block, a clean latent $Z$ is first constructed, and then a mutually exclusive condition state is sampled. The clean state uses $Z$ directly; the noise state locally perturbs $Z$; the drop state removes only the latent condition of the current block while keeping the same token history. A generic formulation is
\begin{equation}
\widetilde z_k=
\begin{cases}
z_k,&A_k=\textsf{cond},\\
(1-u_k^{\mathrm c})z_k+u_k^{\mathrm c}\epsilon_k,
&A_k=\textsf{noise},\\
\varnothing,&A_k=\textsf{drop},
\end{cases}
\qquad
u_k^{\mathrm c}\in[0,u_{\max}],
\quad \epsilon_k\sim\Normal(0,I).
\label{eq:app-workflow-condition-state}
\end{equation}
The implementation may additionally apply a weaker sample-level latent perturbation before state partitioning to improve adaptation to small latent perturbations; this is input augmentation rather than a separate reconstruction loss, and it does not assume the same distribution as the terminal error of the DiT.

Let $\Lcal_{\mathrm{tok}}^{A}$ denote the MDM masked denoising objective under state $A$. The reconstruction--generation balance is
\begin{equation}
    \Lcal_{\mathrm{bal}}
    =\lambda_{\mathrm c}\Lcal_{\mathrm{tok}}^{\textsf{cond}}
    +\lambda_{\mathrm n}\Lcal_{\mathrm{tok}}^{\textsf{noise}}
    +\lambda_{\mathrm d}\Lcal_{\mathrm{tok}}^{\textsf{drop}},
    \label{eq:app-workflow-dg-unified}
\end{equation}
where $\Lcal_{\mathrm{tok}}^A=\Lcal_{\mathrm{MDM}}^A$ as defined in Eq.~\eqref{eq:app-workflow-mdm-loss}.

Separately from the three-state decoder balance, we denote the objective dedicated to injecting \emph{generative predictive information} into the encoder by
\begin{equation}
    \Lcal_{\mathrm{gen}}=\Lcal_{\mathrm{encmask}}.
    \label{eq:app-workflow-gen-variants}
\end{equation}
The unified objective of the discrete generator decoder is therefore
\begin{equation}
    \Lcal_{\mathrm{DGD}}
    =\Lcal_{\mathrm{bal}}
    +\lambda_{\mathrm{gen}}\Lcal_{\mathrm{gen}}.
    \label{eq:app-workflow-dgd-total}
\end{equation}
The key target of $\Lcal_{\mathrm{gen}}$ is the latent representation: the masked-input encoder must recover the missing semantics from context and receives gradients directly from ground-truth discrete token prediction.

The Gaussian KL of each latent is
\begin{equation}
    \KL_k
    =\frac12\sum_{d=1}^{d_z}
    \left(\mu_{k,d}^2+\sigma_{k,d}^2-1-\log\sigma_{k,d}^2\right),
    \label{eq:app-workflow-klk}
\end{equation}
and the duration loss is
\begin{equation}
    \Lcal_{\mathrm{dur}}
    =-\frac1K\sum_{k=1}^{K}\log p_\psi(L_k\mid z_k).
    \label{eq:app-workflow-dur-loss}
\end{equation}
Combined with Eq.~\eqref{eq:workflow-kl}, the total Stage~1 objective is
\begin{equation}
    \Lcal^{(1)}
    =\Lcal_{\mathrm{DGD}}
    +\lambda_{\mathrm{dur}}\Lcal_{\mathrm{dur}}
    +\beta_{\mathrm{KL}}\Lcal_{\mathrm{KL}}.
    \label{eq:app-workflow-stage1-total}
\end{equation}
This objective remains active in Stage~2; the transition from Stage~1 to Stage~2 is therefore a full-state handoff rather than ``train the VAE, then freeze the VAE.'' The discrete token losses can always shape the encoder latents through the condition path, and only the DiT prior loss is isolated by stop-gradient.

\subsection{Stage 2: Detached Block-Causal Flow Matching}
\label{app:stage2-details}

Stage~2 additionally trains the latent DiT on top of Eq.~\eqref{eq:app-workflow-stage1-total}. Let $Z$ be the sampled posterior latent of the current online encoder. The DiT reads only
\begin{equation}
    \bar Z=\sg(Z),
    \label{eq:app-workflow-detached-z}
\end{equation}
so the DiT objective cannot backpropagate into the encoder; meanwhile, $E_\phi$, the latent condition path, and $D_\psi$ are still updated normally by $\Lcal^{(1)}$.

The latent sequence is grouped into blocks of $B$ latents. Each block independently samples a noise level $u_j$, all latents in a block share the same timestep, and the linear path in Eq.~\eqref{eq:workflow-flow} is used. The training layout consists of clean history and noisy current blocks: every noisy query can read only strictly earlier clean latent blocks and interacts bidirectionally within its current noisy block. The attention mask can be written as
\begin{equation}
    \operatorname{Allow}_{\mathrm{DiT}}(i,j)
    =\ind[\beta(j)<\beta(i)]\operatorname{cl}(j)
     \;\vee\;
     \ind[\beta(j)=\beta(i)]\neg\operatorname{cl}(j),
    \label{eq:app-workflow-dit-mask}
\end{equation}
where clean-history positions act as read-only conditions, and the current noisy block cannot read its corresponding clean target copy.

To cover inference cases in which the prompt is not aligned with latent block boundaries, training may randomly restore the first $c\in\{1,\ldots,B-1\}$ latents of a noisy block to their clean values and remove them from the corresponding Flow Matching supervision. This partial-clean-prefix augmentation matches the handling of the first mixed block at inference. Finally,
\begin{equation}
    \nabla_\phi\Lcal_{\mathrm{FM}}=0,
    \qquad
    \nabla_\theta\Lcal_{\mathrm{FM}}\neq0,
    \label{eq:app-workflow-full-detach}
\end{equation}
and
\begin{equation}
    \Lcal^{(2)}
    =\Lcal^{(1)}+\lambda_{\mathrm{FM}}\Lcal_{\mathrm{FM}}.
    \label{eq:app-workflow-stage2-total}
\end{equation}
Stage~2 can thus be understood as continuously updating the online compression encoder and discrete generator decoder while letting the DiT track their evolving latent distribution, but forbidding the DiT from shaping the encoder through the diffusion loss.

\subsection{Complete Inference Procedure and Latent Feedback}
\label{app:inference-details}

Inference follows \emph{propose--decode--re-encode--commit}. The dynamic encoder runs on the prompt only once at the beginning and uses the posterior mode. Suppose the prompt yields $K_{\mathrm p}$ latents with
\begin{equation}
    K_{\mathrm p}=Bq+r,
    \qquad 0\le r<B.
    \label{eq:app-workflow-prompt-split}
\end{equation}
The first $Bq$ block-aligned latents are prefilled into the persistent caches. If $r>0$, the trailing $r$ prompt latents and their tokens are not written into the persistent cache as a completed block; instead, they are kept as the clean prefix of the first generated block, and the remaining $B-r$ latent slots are initialized from Gaussian noise:
\begin{equation}
    Z_j^{(0)}
    =\left[
      z^{\mathrm p}_{K_{\mathrm p}-r+1:K_{\mathrm p}}
      \,\middle\|\,
      \epsilon_1,\ldots,\epsilon_{B-r}
    \right],
    \qquad \epsilon_i\sim\Normal(0,I).
    \label{eq:app-workflow-first-block}
\end{equation}
This explains the meaning of ``discard trailing remainder'' in Figure~\ref{fig:method_overview}: the trailing remainder is excluded from the \emph{committed cache}, not permanently removed from the prompt semantics. After each DiT Euler update, the clean prompt prefix in the mixed block is restored to its original values.

\paragraph{Latent proposal.}
The DiT performs $S_{\mathrm{gen}}$ denoising steps on top of the committed latent prefix. With CFG,
\begin{equation}
    \hat v
    =v_{\mathrm u}+\gamma_{\mathrm{gen}}(v_{\mathrm c}-v_{\mathrm u}),
    \label{eq:app-workflow-gen-cfg}
\end{equation}
where the conditional branch reads the committed corrected latent cache and the unconditional branch removes this prefix. Importantly, the current proposal remains a read-only temporary state until token generation and feedback correction are complete, and is not written into the persistent DiT/condition caches.

\paragraph{Duration and token decoding.}
The proposal latents first pass through the duration/expansion path. For newly generated latent slots,
\begin{equation}
    \widehat L_k=\arg\max_L p_\psi(L\mid z_k),
    \label{eq:app-workflow-length-infer}
\end{equation}
whereas the slots belonging to the prompt tail in the mixed first block keep the ground-truth span lengths known to the encoder. This yields the effective width and the token-aligned condition of the current token block.

The MDM decoder initializes unknown positions to \texttt{[MASK]}, repeatedly predicts them over several refinement steps, and progressively commits them by confidence; intermediate masked states do not enter the KV cache, and only after the whole block is determined is a single clean forward pass used to commit the final tokens, yielding the finalized clean token block $Y_j$.

The decoder logits use the prefix--latent two-axis CFG in Eq.~\eqref{eq:workflow-prefix-cfg}. The four branches share the current block state and the latent-free clean-token prefix cache; the prefix-off branches mask the historical token keys, and the latent-off branches zero the injection of the current proposal latent condition. Execution details are given below.

\paragraph{Fixed-span re-encoding.}
After the current token block is finalized, the feedback encoder does not rerun the dynamic segmenter. Instead, it directly constructs fixed span boundaries from the span lengths just determined by the duration predictor. It applies to these tokens the same span-local bidirectional pooling as the initial encoder, reads the committed encoder latent-prefix KV, obtains the posterior of the current block through the causal latent encoder, and takes its mode:
\begin{equation}
    Z_j^{\mathrm{corr}}
    =\mu_\phi\!\left(Y_j;
      \widehat L_{\mathcal B_j},
      \mathcal K^{E}_{<j}\right).
    \label{eq:app-workflow-corrected-latent}
\end{equation}
For the first mixed block, the previously saved pooled representations of the prompt tail are concatenated with those of the newly generated spans to form a complete $B$-latent block before running the latent encoder. Since the span boundaries are fixed, this correction changes ``the latent obtained by re-encoding the generated tokens under this span layout,'' rather than re-optimizing the segmentation.

\paragraph{Atomic commit.}
After feedback, the proposal latent is replaced by $Z_j^{\mathrm{corr}}$, and the caches are updated according to the invariants
\begin{equation}
\begin{aligned}
    \mathcal K^{\mathrm{DiT}}_j
    &=Z^{\mathrm{corr}}_{\le j},\\
    \mathcal K^{\mathrm{cond}}_j
    &=\operatorname{Cond}(Z^{\mathrm{corr}}_{\le j}),\\
    \mathcal K^{E}_j
    &=\operatorname{EncState}(Y_{\le j}),\\
    \mathcal K^{\mathrm{tok}}_j
    &=Y^{\mathrm{clean}}_{\le j}.
\end{aligned}
\label{eq:app-workflow-cache-invariant}
\end{equation}
That is, the DiT history, the condition history, and the encoder latent history all correspond to corrected encoder latents, while the token cache contains only the final clean tokens. The proposal latent never enters the persistent history. The current block is not regenerated after correction, so the effect of feedback starts from the next latent block.

This transaction also establishes an important train--inference consistency invariant. During training, the clean history fitted by the DiT consists of online-encoder latents of ground-truth tokens; at inference, after each block is completed, the next history is obtained from the final committed tokens through the same fixed-span encoder path. Hence, for the committed prefix,
\begin{equation}
    \mathcal K^{Z}_{\le j}
    =E_\phi\!\left(\mathcal K^{Y}_{\le j};\widehat{\mathcal S}_{\le j}\right),
    \label{eq:app-workflow-context-alignment}
\end{equation}
rather than $\mathcal K^Z_{\le j}=Z^{\mathrm{prop}}_{\le j}$. The former keeps the continuous context of the DiT, the latent history of the condition stack, and the clean-token context of the discrete generator decoder pointing to the same finalized content. If the proposal were committed directly, any proposal--decode discrepancy would be permanently written into the latent cache and would cause cross-space context bias in subsequent blocks; fixed-span correction is designed precisely to eliminate this cumulative mismatch. Since the correction reuses the spans just predicted, it only aligns the representation content without reintroducing segmentation drift at inference.

Algorithm~\ref{alg:inference} summarizes the unified procedure.

\begin{algorithm}[t]
\caption{Unified block-wise inference of \method}
\label{alg:inference}
\begin{algorithmic}[1]
    \Require Prompt $C$, latent block size $B$, DiT steps $S_{\mathrm{gen}}$
    \State $z^{\mathrm p}_{1:K_{\mathrm p}}\gets\mu(E_\phi(C))$
    \State Prefill the complete block-aligned prompt prefix into the persistent caches, and keep the remainder as the clean prefix of the first block
    \While{stopping criterion not met}
        \State Initialize the current latent block from Gaussian noise (and the optional clean prompt tail)
        \For{$i=0,\ldots,S_{\mathrm{gen}}-1$}
            \State Update the DiT proposal on the committed corrected-latent prefix; restore the clean tail if present
        \EndFor
        \State Predict span lengths from the proposal latents and construct the token-aligned latent condition
        \State Complete the current token block with the block-causal MDM decoder using the two-axis CFG in Eq.~\eqref{eq:workflow-prefix-cfg}
        \State Re-encode the final tokens with the fixed predicted spans to obtain $Z_j^{\mathrm{corr}}$
        \State Atomically commit the finalized tokens, $Z_j^{\mathrm{corr}}$, and the corresponding condition/encoder states
    \EndWhile
    \State \Return the generated token sequence
\end{algorithmic}
\end{algorithm}

\subsubsection{Execution Details of MDM Prefix--Latent CFG}
\label{app:prefix-cfg-implementation}

\paragraph{Condition preparation and scope.}
This path requires \texttt{decoder\_cond\_drop\_scope=masked\_only}, which guarantees that the prefix token KV contains no latent condition. We follow the branch subscripts $a,b$ of Appendix~\ref{app:prefix-cfg}. The refined condition is prepared only once per block:
\begin{equation}
    H_j^Z
    =\operatorname{Cond}\!\left(
      \operatorname{Expand}(Z_j^{\mathrm{prop}},\widehat L_{\mathcal B_j});
      \mathcal K^{\mathrm{cond}}_{<j}\right),
    \qquad H_j^{Z,(b)}=bH_j^Z,
    \label{eq:app-prefix-cfg-condition}
\end{equation}
where $\mathcal K^{\mathrm{cond}}_{<j}$ is the committed condition-stack history and $\widehat L_{\mathcal B_j}$ follows the existing duration rule. All four branches share the same $H_j^Z$; $b=0$ zeros the injected condition after refinement, without recomputing the duration or the condition stack.

\paragraph{Prefix-key masking and position indexing.}
Omitting the batch index, let $m_r\in\{0,1\}$ indicate the validity of the $r$-th key slot, $N_{\mathrm k}$ the total number of key slots, and $N_{\mathrm q}$ the number of current query slots. The current block occupies the trailing index set $\mathcal Q_j=\{N_{\mathrm k}-N_{\mathrm q}+1,\ldots,N_{\mathrm k}\}$. The two prefix visibilities correspond to
\begin{equation}
    m_r^{(a)}=m_r\big(a+(1-a)\ind[r\in\mathcal Q_j]\big),
    \qquad
    \operatorname{pos}^{(a)}_r
    =\max\!\left(\sum_{s=1}^{r}m_s^{(a)}-1,0\right).
    \label{eq:app-prefix-cfg-key-mask}
\end{equation}
For valid current-block queries, $m^{(a)}$ controls both key visibility and position indexing. With $a=0$, only the valid keys of the current block are kept, and their positions are indexed contiguously from $0$. If the first mixed block contains a clean prompt tail, the tail still belongs to $\mathcal Q_j$ and is not masked together with the persistent prefix.

Our implementation caches keys before RoPE and applies RoPE according to $\operatorname{pos}^{(a)}$ after concatenating the cached and current keys. The two prefix branches thus reuse the same persistent KV without an additional cache or a modified RoPE operator, although the effective position indices differ between branches. Masking does not modify the tokens, the DiT proposal, or the condition-stack history. This rule adds no training objective; the first training block provides an input form without historical token keys, but we do not assume from this that all prefix-off inference states are in-distribution with respect to training.

\paragraph{Read-only branches, demasking, and commit.}
Let $M_j$ be the number of refinement steps of the current block, $s\in\{0,\ldots,M_j-1\}$ the iteration index, and $\mathcal M_s$ the valid positions that are still masked. At each step, the branches with nonzero coefficients are computed on the same current token state, all with \texttt{update\_kv=False}, and the logits are then combined according to Eq.~\eqref{eq:app-prefix-cfg-expanded}. Denoting the combined logits at position $t$ by $\ell_t^{(s)}$, the candidate token and confidence under greedy decoding are
\begin{equation}
    \widehat y_t^{(s)}=\arg\max_y\ell_t^{(s)}(y),
    \qquad
    c_t^{(s)}=\log\operatorname{softmax}(\ell_t^{(s)})
                    \big(\widehat y_t^{(s)}\big),
    \quad t\in\mathcal M_s.
    \label{eq:app-prefix-cfg-confidence}
\end{equation}
Under non-greedy decoding, candidates are sampled from the guided logits as configured. At each step, the highest-confidence candidates are kept according to the existing quota and the remaining positions stay masked; the known prompt tail does not participate in generation. Since $c_t^{(s)}$ is also derived from the guided logits, the two-axis weights affect both token selection and potentially the unmasking order, and confidence effects cannot be ruled out merely by the scale invariance of the greedy argmax.

After the block is finalized, the final tokens are written into the cache by a single latent-free clean forward pass (\texttt{update\_kv=True}) without CFG, restoring full prefix visibility and normal position indexing. The original fixed-span feedback and corrected-history commit remain unchanged. The persistent token KV is determined only by finalized tokens; branch hidden states and masked states are temporary.

The two configuration parameters are \texttt{mdm\_decoder\_cfg\_weight} ($\gamma_{\mathrm z}-1$) and \texttt{mdm\_decoder\_prefix\_cfg\_weight} ($\gamma_{\mathrm p}-1$). The prefix parameter defaults to $0$, corresponding to $\gamma_{\mathrm p}=1$. For a fixed single-step state, the four branch logits can be cached and recombined with different weights; however, different weights change subsequent tokens, the unmasking order, and the committed history, so a complete rollout cannot share a single offline logits trajectory.

\section{Detailed Derivation of the Computational Cost}
\label{app:compute-details}

This section re-derives the computational cost according to the current training and inference execution graphs. There are two key differences: Stage~2 continues training the same compression encoder and discrete generator decoder and only additionally adds a detached DiT; and inference performs fixed-span feedback after the discrete tokens of each block are finalized, before committing the corrected latent history. Hence, the previous accounting of ``Stage~1 VAE + Stage~2 DiT/decoder'' or a one-way ``DiT $\rightarrow$ decoder'' no longer applies.

\subsection{Notation}

Below, one multiply--add counts as two FLOPs, and we count backbone linear layers, attention matrix multiplications, output heads, and auxiliary projections, while omitting element-wise operations such as embedding lookup, normalization, and softmax. Training includes the forward and backward passes, but excludes activation recomputation, optimizer updates, and communication. If these additional operations are counted, \method and AR should use the same convention.

For a module $R$, let $N_R$ be the number of its backbone linear-layer parameters (excluding separate heads/interfaces), $n_R$ its number of layers, $d_R^{\mathrm q}$ the total width of all query heads per layer, $Q_R$ the cumulative number of processed positions, and $\mathcal A_R^{\mathrm{exec}}$ the cumulative number of executed query--key pairs per layer. When all layers share the same shape,
\begin{equation}
\begin{aligned}
    \mathcal C_{R,\mathrm{backbone}}^{\mathrm{fwd}}
    &\approx 2N_RQ_R+4n_Rd_R^{\mathrm q}\mathcal A_R^{\mathrm{exec}},\\
    \mathcal C_{R,\mathrm{backbone}}^{\mathrm{train}}
    &\approx 6N_RQ_R+12n_Rd_R^{\mathrm q}\mathcal A_R^{\mathrm{exec}},
\end{aligned}
\label{eq:app-cost-backbone-convention}
\end{equation}
where the attention term includes both $QK^\top$ and the attention--value product; heterogeneous layers are summed layer by layer. Let $\mathcal A_R^{\mathrm{valid}}$ be the number of pairs allowed by the mask; it equals the executed amount only when the kernel actually skips invalid regions, and dense masks, tile boundaries, and padding must be charged according to the actual shapes.

If the vocabulary size is $V$, the output hidden width is $d_{\mathrm{out}}$, and logits are actually computed at $Q_{\mathrm{logit}}$ positions, then
\begin{equation}
    \mathcal C_{\mathrm{head}}^{\mathrm{fwd}}
    \approx 2Vd_{\mathrm{out}}Q_{\mathrm{logit}},
    \qquad
    \mathcal C_{\mathrm{head}}^{\mathrm{train}}
    \approx 6Vd_{\mathrm{out}}Q_{\mathrm{logit}}.
    \label{eq:app-cost-output-head}
\end{equation}
Weight sharing with the embedding does not eliminate this output projection, and supervising only a subset of positions does not automatically reduce the backbone $Q_R$. All of the above are approximations of the dominant matrix-multiplication terms; the total cost accumulates over actual calls rather than being counted by the number of distinct parameters.

A training sequence contains $T$ valid tokens and, after compression,
\begin{equation}
    K=\rho T.
    \label{eq:app-cost-compression}
\end{equation}
At inference, let $T_{\mathrm{new}}$ be the number of newly generated tokens, $K_{\mathrm{new}}$ the number of corresponding latents, and $\rho_{\mathrm{gen}}=K_{\mathrm{new}}/T_{\mathrm{new}}$ the realized ratio. Ignoring truncation at the beginning and end, the average number of tokens $P$ covered by a latent block of size $B$ and the number of blocks $J$ satisfy
\begin{equation}
    P\approx\frac{B}{\rho_{\mathrm{gen}}},
    \qquad
    J\approx\frac{\rho_{\mathrm{gen}}T_{\mathrm{new}}}{B}.
    \label{eq:app-cost-token-per-block}
\end{equation}
Duration prediction does not guarantee $\rho_{\mathrm{gen}}=\rho^\star$. The prompt tail in the first block, truncation in the last block, and batch padding may make the actual processed amount larger than the effective output, so we first give per-call counts and then approximations that ignore boundary effects.

\subsection{Training: Persistent Representation/Decoder Path and an Additional Detached DiT}
\label{app:ar-comparison}

\paragraph{Stage~1 representation/decoder path.}
Denote the training cost of this path by $\mathcal C_{\mathrm{rep}}(T,K)$. Clean, noisy, and drop are mutually exclusive condition states within the same main request, so the backbone forward pass is not multiplied by three; the additional requests of the generation-oriented $\Lcal_{\mathrm{gen}}$, however, must be counted separately. Below, $\mathcal C_E$ includes the encoder and the posterior projection, $\mathcal C_D^{\mathrm{MDM}}$ includes the MDM token backbone, attention, and the corresponding output head, and the remaining operations such as conditioning and duration are listed as separate terms.

The MDM main request uses one clean encoder pass and one dual-layout decoder pass of length $2T$; $\Lcal_{\mathrm{encmask}}$ fixes the original spans and runs an additional masked-input encoder pass and a separate MDM request. Hence, when both requests are fully executed,
\begin{equation}
\begin{aligned}
    \mathcal C_{\mathrm{rep}}^{\mathrm{MDM}}(T,K)
    \approx{}&
    2\,\mathcal C_E(T,K)
    +2\,\mathcal C_D^{\mathrm{MDM}}(2T)\\
    &+\mathcal C_{\mathrm{cond/dur/KL}}^{\mathrm{MDM}},
\end{aligned}
\label{eq:app-cost-mdm-train}
\end{equation}
where each single-pass cost is the average over the two requests. The two decoder requests together process about $4T$ backbone positions, with a dominant linear training term of $24N_DT$, where $N_D$ is the number of MDM backbone parameters. Concatenating the two requests along the batch dimension does not change this count; the output head is charged by the actual logit positions, and repeated calls of the condition path are included in the last term. The two-axis inference CFG does not increase the number of training requests in either stage.

Let $\E_s$ denote the average over the actual training samples or batch layouts of Stage~$s$, and $T$ the corresponding number of valid tokens. Define
\begin{equation}
    c_{\mathrm{rep}}^{(s)}
    :=\frac{\E_s[\mathcal C_{\mathrm{rep}}(T,K)]}{\E_s[T]},
    \qquad s\in\{1,2\}.
    \label{eq:app-cost-rep-per-token}
\end{equation}
This definition normalizes the actual executed amount by valid tokens. The representation path remains fully present in Stage~2, but when the sequence length, compression layout, or request configuration changes, $c_{\mathrm{rep}}^{(1)}=c_{\mathrm{rep}}^{(2)}$ need not hold.

\paragraph{Stage~2 detached DiT.}
The same clean encoder latent participates in the token objectives and also enters the DiT as $\bar Z=\operatorname{sg}(Z)$. Suppose the training sequence has $J_{\mathrm{tr}}=\lceil K/B\rceil$ latent blocks, and the last block has length $r_{\mathrm{last}}=K-B(J_{\mathrm{tr}}-1)\in\{1,\ldots,B\}$. Omitting the clean copy of the last block, $K_{\mathrm c}=K-r_{\mathrm{last}}$, and we define
\begin{equation}
    \chi_G=\frac{K+K_{\mathrm c}}{K},
    \qquad 1\le\chi_G<2.
    \label{eq:app-cost-dit-layout-factor}
\end{equation}
When $K$ is divisible by $B$, $r_{\mathrm{last}}=B$ rather than $0$. The dominant linear training term of the DiT backbone is
\begin{equation}
    \mathcal C_{G,\mathrm{linear}}^{\mathrm{train}}(T,K)
    \approx 6N_G(K+K_{\mathrm c})
    =6\chi_G\rho N_GT.
    \label{eq:app-cost-dit-train-linear}
\end{equation}
Detachment only cuts the gradient to the encoder; it does not skip the backward pass of the DiT itself.

When $K=J_{\mathrm{tr}}B$ and the clean stream is also computed block-causally, the numbers of valid pairs of clean and noisy queries correspond to the first $J_{\mathrm{tr}}-1$ blocks and all $J_{\mathrm{tr}}$ blocks, respectively:
\begin{equation}
    \mathcal A_G^{\mathrm{valid}}
    =B^2\left(\sum_{j=1}^{J_{\mathrm{tr}}-1}j
             +\sum_{j=1}^{J_{\mathrm{tr}}}j\right)
    =K^2.
    \label{eq:app-cost-dit-valid-pairs}
\end{equation}
Layouts with incomplete blocks remain $\mathcal O(K^2)$. The dominant attention training term is
\begin{equation}
    \mathcal C_{G,\mathrm{attn}}^{\mathrm{train}}
    \approx 12n_Gd_G^{\mathrm q}\mathcal A_G^{\mathrm{exec}},
    \qquad
    \mathcal A_G^{\mathrm{valid}}=\mathcal O(\rho^2T^2),
    \label{eq:app-cost-dit-train-attn}
\end{equation}
where $n_G$ and $d_G^{\mathrm q}$ follow the module-level notation. Let $\mathcal C_{G,\mathrm{aux}}^{\mathrm{train}}$ include the latent input/output and time/condition projections not counted in $N_G$. The average per-token cost of the added DiT is then
\begin{equation}
    c_G^{(2)}
    =\frac{\E_2[\mathcal C_{G,\mathrm{linear}}^{\mathrm{train}}
                 +\mathcal C_{G,\mathrm{attn}}^{\mathrm{train}}
                 +\mathcal C_{G,\mathrm{aux}}^{\mathrm{train}}]}{\E_2[T]}.
    \label{eq:app-cost-dit-per-token}
\end{equation}
Hence $\rho$ and $\rho^2$ describe the scaling of the linear length and the attention pairs under the same architecture and a comparable layout, rather than a fixed speedup factor for the complete system or an arbitrary kernel.

\paragraph{Total training cost and AR reference.}
If the two stages process $D_1$ and $D_2$ valid tokens, respectively, then
\begin{equation}
    \mathcal C_{\text{\method}}^{\mathrm{train}}
    \approx D_1c_{\mathrm{rep}}^{(1)}
           +D_2\big(c_{\mathrm{rep}}^{(2)}+c_G^{(2)}\big).
    \label{eq:app-cost-total-train}
\end{equation}
With the same representation configuration, the first part can be merged into $(D_1+D_2)c_{\mathrm{rep}}$, but the DiT attention and auxiliary projections must still be retained.

Taking as reference an AR model with training budget $D_{\mathrm{AR}}$ and average per-token cost $c_{\mathrm{AR}}^{\mathrm{train}}$ under the same convention, define
\begin{equation}
    r_{\mathrm{rep}}^{(s)}
    =\frac{c_{\mathrm{rep}}^{(s)}}{c_{\mathrm{AR}}^{\mathrm{train}}},
    \qquad
    r_G^{\mathrm{train}}
    =\frac{c_G^{(2)}}{c_{\mathrm{AR}}^{\mathrm{train}}}.
    \label{eq:app-cost-train-normalized}
\end{equation}
Then
\begin{equation}
\begin{aligned}
    R_{\mathrm{train}}
    &:=\frac{\mathcal C_{\text{\method}}^{\mathrm{train}}}
             {D_{\mathrm{AR}}c_{\mathrm{AR}}^{\mathrm{train}}}\\
    &\approx\frac{D_1}{D_{\mathrm{AR}}}r_{\mathrm{rep}}^{(1)}
      +\frac{D_2}{D_{\mathrm{AR}}}
       \big(r_{\mathrm{rep}}^{(2)}+r_G^{\mathrm{train}}\big).
\end{aligned}
\label{eq:app-cost-train-ratio}
\end{equation}
The AR cost $c_{\mathrm{AR}}^{\mathrm{train}}$ likewise includes attention and the output head. $6N_{\mathrm{AR}}D_{\mathrm{AR}}$ is only its backbone linear reference and cannot serve as the denominator for a complete cost that includes other operations. Compression mainly reduces the cost of the added prior and does not automatically make the entire training scale with $\rho$.

\subsection{Inference: Proposal, Discrete Decoding, Feedback, and Commit}

Excluding the one-time prompt encoding/prefill, the transaction of each generated block is
\begin{equation}
    \text{DiT proposal}
    \rightarrow\text{DGD token decoding}
    \rightarrow\text{fixed-span re-encoding}
    \rightarrow\text{corrected-cache commit}.
    \label{eq:app-cost-infer-transaction}
\end{equation}
Subsequent attention still reads the prompt KV, so its history length cannot be dropped merely because prefill is excluded. Below, $\mathcal C_G^{\mathrm{infer}}$ includes the DiT proposal/commit, $\mathcal C_D^{\mathrm{infer}}$ includes condition preparation, token decoding, and the token-cache commit, and $\mathcal C_{\mathrm{fb}}$ includes the feedback encoder and the corrected condition commit; the three parts do not overlap.

\paragraph{DiT proposal and corrected DiT commit.}
Let $B_j^{\mathrm{exec}}$ be the number of latent query slots actually processed in the $j$-th block, and $\nu_{G,j,i}$ the number of velocity branches actually executed at its $i$-th denoising step. Then
\begin{equation}
\begin{aligned}
    Q_G^{\mathrm{prop}}
    &=\sum_{j=1}^{J}B_j^{\mathrm{exec}}
      \sum_{i=1}^{S_{\mathrm{gen}}}\nu_{G,j,i}\\
    &\approx\rho_{\mathrm{gen}}\nu_{\mathrm{gen}}S_{\mathrm{gen}}T_{\mathrm{new}}.
\end{aligned}
\label{eq:app-cost-gen-proposal-query}
\end{equation}
The second line assumes a uniform branch count $\nu_{\mathrm{gen}}$ and ignores boundaries and padding. The proposal does not write to the persistent cache; corrected latents are committed only after feedback. To avoid folding attention into the linear coefficient, define
\begin{equation}
    \kappa_G^{\mathrm{commit}}
    :=\frac{\mathcal C_{G,\mathrm{linear}}^{\mathrm{commit}}}
            {2N_GK_{\mathrm{new}}},
    \label{eq:app-cost-gen-commit-factor}
\end{equation}
where the numerator is only the backbone linear cost of the corrected DiT commit. If each new latent requires one additional full clean forward pass, $\kappa_G^{\mathrm{commit}}\approx1$ ignoring boundaries. The total cost is
\begin{equation}
\begin{aligned}
    \mathcal C_G^{\mathrm{infer}}
    \approx{}&2N_GQ_G^{\mathrm{prop}}
       +\mathcal C_{G,\mathrm{linear}}^{\mathrm{commit}}\\
       &+\mathcal C_{G,\mathrm{attn}}^{\mathrm{infer}}
       +\mathcal C_{G,\mathrm{aux}}^{\mathrm{infer}},
\end{aligned}
\label{eq:app-cost-gen-infer}
\end{equation}
where the last two terms accumulate the attention and remaining input/output projections of the proposal and commit. If the current effective block width is $b_j$ and the committed latent history length is $P_j^Z$, the numbers of valid pairs per branch with prefix on and off are $b_j(P_j^Z+b_j)$ and $b_j^2$, respectively; these are accumulated over the actual denoising steps and commits, and the executed pair counts are substituted into Eq.~\eqref{eq:app-cost-backbone-convention}.

\paragraph{MDM discrete generator decoder.}
Let $W_j^{\mathrm{exec}}$ be the query width actually processed in each refinement of the $j$-th block, $W_j^{\mathrm{com}}$ the number of positions processed in the final clean commit, $M_j$ the number of refinement steps, and $\nu_{D,j,i}$ the number of decoder branches actually executed at step $i$. Then
\begin{equation}
\begin{aligned}
    \mathcal C_D^{\mathrm{MDM}}
    \approx{}&2N_D\sum_{j=1}^{J}
       \left(W_j^{\mathrm{exec}}\sum_{i=1}^{M_j}\nu_{D,j,i}
             +W_j^{\mathrm{com}}\right)\\
       &+\mathcal C_{D,\mathrm{attn}}^{\mathrm{MDM}}
        +\mathcal C_{\mathrm{head/interface}}^{\mathrm{MDM}}.
\end{aligned}
\label{eq:app-cost-mdm-infer}
\end{equation}
Ignoring the beginning, end, and padding, if $\nu_{\mathrm{dec}}$ branches are computed at every step, the dominant linear term reduces to $2N_D\sum_j(\nu_{\mathrm{dec}}M_j+1)W_j$, where $W_j$ is the effective block width; with a uniform $M_j=M$, it is approximately $2N_D(\nu_{\mathrm{dec}}M+1)T_{\mathrm{new}}$.

The full two-axis CFG executes four branches, but $\nu_{D,j,i}$ counts the actual forward passes rather than only the nonzero coefficients. The cost is reduced only if the code skips zero-coefficient branches or merges identical branches before the call; multiplying logits by zero after the call saves no backbone FLOPs. The four branches share one proposal condition preparation per block, and only one clean-token commit without CFG is performed at the end.

To expand the effect of prefix visibility, let $P_j^Y$ be the number of tokens committed before the $j$-th block (including the prompt), and $\nu_{j,i}^{\mathrm{on}}$ the number of branches with the prefix kept at that step. With effective width $W_j$, the number of valid pairs over all refinements and the clean commit is
\begin{equation}
\begin{aligned}
    \mathcal A_D^{\mathrm{valid}}
    =\sum_{j=1}^{J}\Bigg[&
       \sum_{i=1}^{M_j}\left(
         \nu_{j,i}^{\mathrm{on}}W_jP_j^Y
         +\nu_{D,j,i}W_j^2\right)\\
       &+W_j(P_j^Y+W_j)\Bigg].
\end{aligned}
\label{eq:app-cost-mdm-attention-pairs}
\end{equation}
The full two-axis CFG has $(\nu^{\mathrm{on}},\nu_{\mathrm{dec}})=(2,4)$. Prefix-off branches only allow keys from the current block, but if the kernel does not skip the masked historical blocks, the actual FLOPs must use $\mathcal A_D^{\mathrm{exec}}$:
\begin{equation}
    \mathcal C_{D,\mathrm{attn}}^{\mathrm{MDM}}
    \approx4n_Dd_D^{\mathrm q}\mathcal A_D^{\mathrm{exec}}.
    \label{eq:app-cost-mdm-attention}
\end{equation}
$\mathcal C_{\mathrm{head/interface}}^{\mathrm{MDM}}$ includes all actual logit projections and one proposal duration/expansion/condition refinement. The recomputation of the corrected condition is attributed to feedback and is not double-counted here.

\paragraph{Fixed-span feedback and corrected condition commit.}
Each finalized block is re-encoded with fixed spans, without rerunning the dynamic segmenter. Let $N_{F,t}$ and $N_{F,z}$ be the numbers of linear-layer parameters of the span-local token pooling and the causal latent encoder, and $Q_{F,t}$ and $Q_{F,z}$ the numbers of positions they actually process. Then
\begin{equation}
\begin{aligned}
    \mathcal C_{\mathrm{fb,enc}}
    \approx{}&2N_{F,t}Q_{F,t}+2N_{F,z}Q_{F,z}\\
       &+\mathcal C_{\mathrm{fb,attn}}
        +\mathcal C_{\mathrm{fb,proj}},
\end{aligned}
\label{eq:app-cost-feedback-encoder}
\end{equation}
where $\mathcal C_{\mathrm{fb,proj}}$ includes projections such as the posterior head that are not counted in the above parameters; ignoring boundaries and padding, $Q_{F,t}\approx T_{\mathrm{new}}$ and $Q_{F,z}\approx K_{\mathrm{new}}$. A token-encoder depth of zero does not eliminate the computation of span-local pooling. Its valid attention pairs accumulate as $\sum_k L_k^2$ over spans, and the latent encoder is charged according to the causal prefix/current layout.

After obtaining $Z_j^{\mathrm{corr}}$, the corrected condition-stack KV must also be rebuilt and committed; denoting its cumulative cost by $\mathcal C_{\mathrm{cond,commit}}$, we have $\mathcal C_{\mathrm{fb}}=\mathcal C_{\mathrm{fb,enc}}+\mathcal C_{\mathrm{cond,commit}}$. The proposal condition preparation and the corrected condition commit are two distinct computations, but neither is multiplied by four for the four CFG branches; the corrected DiT commit is already included in $\mathcal C_G^{\mathrm{infer}}$.

\paragraph{Unified AR comparison convention.}
AR likewise excludes the one-time prefill. Let $Q_{\mathrm{AR}}$ be the number of positions actually processed by its incremental backbone, $\mathcal A_{\mathrm{AR}}^{\mathrm{exec}}$ the executed attention pairs, and $\mathcal C_{\mathrm{head,AR}}^{\mathrm{infer}}$ the output-projection cost. Then
\begin{equation}
\begin{aligned}
    \mathcal C_{\mathrm{AR}}^{\mathrm{infer}}
    \approx{}&2N_{\mathrm{AR}}Q_{\mathrm{AR}}
      +4n_{\mathrm{AR}}d_{\mathrm{AR}}^{\mathrm q}
        \mathcal A_{\mathrm{AR}}^{\mathrm{exec}}\\
      &+\mathcal C_{\mathrm{head,AR}}^{\mathrm{infer}}.
\end{aligned}
\label{eq:app-cost-ar-inference}
\end{equation}
If the initial KV length is $T_{\mathrm p}$, processing $Q_{\mathrm{AR}}$ new queries consecutively yields $Q_{\mathrm{AR}}T_{\mathrm p}+Q_{\mathrm{AR}}(Q_{\mathrm{AR}}+1)/2$ valid pairs; $Q_{\mathrm{AR}}\approx T_{\mathrm{new}}$ only ignores boundary differences in where logits are taken. Using denominators under the same convention, define
\begin{equation}
    r_G^{\mathrm{infer}}=\frac{\mathcal C_G^{\mathrm{infer}}}
                                 {\mathcal C_{\mathrm{AR}}^{\mathrm{infer}}},
    \quad r_{\mathrm{DGD}}=\frac{\mathcal C_D^{\mathrm{infer}}}
                               {\mathcal C_{\mathrm{AR}}^{\mathrm{infer}}},
    \quad r_{\mathrm{fb}}=\frac{\mathcal C_{\mathrm{fb}}}
                              {\mathcal C_{\mathrm{AR}}^{\mathrm{infer}}},
    \label{eq:app-cost-infer-normalized-parts}
\end{equation}
so that
\begin{equation}
    R_{\mathrm{infer}}
    =\frac{\mathcal C_G^{\mathrm{infer}}+\mathcal C_D^{\mathrm{infer}}
           +\mathcal C_{\mathrm{fb}}}{\mathcal C_{\mathrm{AR}}^{\mathrm{infer}}}
    =r_G^{\mathrm{infer}}+r_{\mathrm{DGD}}+r_{\mathrm{fb}}.
    \label{eq:app-infer-flop-ratio-ar}
\end{equation}
Using only the AR backbone linear term as reference and ignoring boundaries, the corresponding DiT contribution is
\begin{equation}
    \frac{2N_GQ_G^{\mathrm{prop}}+\mathcal C_{G,\mathrm{linear}}^{\mathrm{commit}}}
         {2N_{\mathrm{AR}}T_{\mathrm{new}}}
    \approx\rho_{\mathrm{gen}}
      \big(\nu_{\mathrm{gen}}S_{\mathrm{gen}}+\kappa_G^{\mathrm{commit}}\big)
      \frac{N_G}{N_{\mathrm{AR}}}.
    \label{eq:app-cost-gen-linear-reference}
\end{equation}
This is not the total inference ratio. When end-to-end initialization is included, the numerator and denominator of Eq.~\eqref{eq:app-infer-flop-ratio-ar} should add the prefill costs of \method and AR, respectively.

\subsection{Serial Call Rounds, Latency, and Block Parallelism}

We use $\mathcal H$ as a forward-round proxy, not as Transformer layer depth or actual latency. By convention, one module forward pass counts as one round, and independent CFG branches at the same step count as one group under ideal parallelism. Let $H_{D,j}$ be the number of call rounds for condition preparation, token decoding, and clean-token commit of the $j$-th block, and $H_{F,j}$ the number of rounds for the feedback encoder and the corrected condition/DiT commit. Then
\begin{equation}
    \mathcal H_{\text{\method}}
    \approx\sum_{j=1}^{J}\big(S_{\mathrm{gen}}+H_{D,j}+H_{F,j}\big).
    \label{eq:app-cost-depth}
\end{equation}
Ignoring boundaries, AR corresponds to $\mathcal H_{\mathrm{AR}}\approx T_{\mathrm{new}}$. With $H_D$ and $H_F$ denoting the average per-block numbers of rounds,
\begin{equation}
    \frac{\mathcal H_{\text{\method}}}{\mathcal H_{\mathrm{AR}}}
    \approx\frac{\rho_{\mathrm{gen}}}{B}
      \big(S_{\mathrm{gen}}+H_D+H_F\big).
    \label{eq:app-depth-speedup-ar}
\end{equation}
Thus, the condition under which this idealized call-round proxy is smaller is
\begin{equation}
    B>\rho_{\mathrm{gen}}\big(S_{\mathrm{gen}}+H_D+H_F\big),
    \label{eq:app-depth-better-condition}
\end{equation}
which is not a sufficient condition for end-to-end speedup. The decoding part of MDM contains about $M_j$ refinement groups and one clean commit. If the CFG branches are invoked sequentially, every forward pass must be counted, and they cannot be treated as one round merely because the formula is parallelizable.

Let $\tau_{\mathrm{init}}$ be the one-time initialization latency, $\tau_{G,j,i}$ the latency of the $i$-th DiT update of the $j$-th block (including the actual CFG scheduling), and $\tau_{\mathrm{prep},j}$, $\tau_{D,j}$, and $\tau_{F,j}$ the latencies of condition preparation, token decoding/commit, and feedback/corrected-cache commit, respectively. Under the current serial transaction,
\begin{equation}
    \tau_{\text{\method}}
    \approx\tau_{\mathrm{init}}+
    \sum_{j=1}^{J}\left[
      \sum_{i=1}^{S_{\mathrm{gen}}}\tau_{G,j,i}
      +\tau_{\mathrm{prep},j}+\tau_{D,j}+\tau_{F,j}
    \right].
    \label{eq:app-cost-latency}
\end{equation}
These latencies depend on module size, query/prefix length, branch batching, and hardware utilization, and should be measured on the corresponding execution paths; the same time cannot be assigned to every module forward pass.

\subsection{KV Cache and Temporary States}

Let $K_{\mathrm{hist}}$ and $T_{\mathrm{hist}}$ be the lengths of the committed latent/token histories, including the prompt. The DiT and feedback latent-encoder KV lie on the latent axis with length $\mathcal O(K_{\mathrm{hist}})$; the condition-stack and token-decoder KV lie on the token axis with length $\mathcal O(T_{\mathrm{hist}})$. The MDM token cache is bidirectional within a block and causal across blocks.

For $R\in\{G,E,C,D\}$, denoting the DiT, the feedback latent encoder, the condition stack, and the token decoder, respectively, let $n_R$ be the number of KV layers and $d_R^{\mathrm{KV}}=n_{\mathrm{kv},R}d_{\mathrm{head},R}$ the total KV-head width per layer, where $n_{\mathrm{kv},R}$ and $d_{\mathrm{head},R}$ are the number of KV heads and the per-head width. For an equal-length batch of $N_{\mathrm{seq}}$ sequences with $s_{\mathrm{KV}}$ bytes per cached element,
\begin{equation}
\begin{aligned}
    \operatorname{Mem}_{\mathrm{KV}}
    \approx 2s_{\mathrm{KV}}N_{\mathrm{seq}}\Big[&
       (n_Gd_G^{\mathrm{KV}}+n_Ed_E^{\mathrm{KV}})K_{\mathrm{hist}}\\
       &+(n_Cd_C^{\mathrm{KV}}+n_Dd_D^{\mathrm{KV}})T_{\mathrm{hist}}\Big].
\end{aligned}
\label{eq:app-cost-kv}
\end{equation}
The factor $2$ accounts for keys and values; different lengths, KV layer shapes, or precisions are summed according to the actual allocation, and padding must also be counted. Under GQA, $d_R^{\mathrm{KV}}$ should be used instead of the hidden width.

The four MDM CFG branches share one persistent token KV, so Eq.~\eqref{eq:app-cost-kv} is not multiplied by four; temporary logits, branch hidden states, and attention workspaces still affect peak memory. Proposals and masked states do not accumulate as history, but the sizes of cache concatenation, RoPE, and attention temporaries may still depend on the prefix length. This formula therefore counts only persistent KV and does not equal the total peak memory.

In summary, compression shortens the sequence and the latent-axis history of the latent prior, while continuous representation learning, token decoding, CFG, and feedback all retain their own costs. End-to-end comparisons should use a unified FLOP accounting scope and separately report actually processed positions, serial call rounds, latency, and KV capacity, rather than equating any single metric with overall speedup.

\section{Final-Checkpoint Pretraining Recipe}
\label{app:final-pretraining}

This section describes the large-scale recipe used for the final checkpoint in Table~\ref{tab:exp-pretrained-comparison}. It is separate from the matched component and controlled scaling setup in Section~\ref{subsec:exp-setup}.

\subsection{Shared Training Configuration}

The pretraining corpus uses OLMo~2 pretraining data retokenized with the Qwen3-4B tokenizer. The sequence length is $4096$.

All stages use AdamW with $\beta=(0.9,0.95)$ and weight decay $0.01$. The EMA decay in Stages~II and III is $0.9998$. At stage boundaries, model parameters, optimizer state, the learning-rate scheduler, and the data cursor are restored.

\subsection{Stage I: Dynamic-Compression VAE}

Stage~I processes approximately $52.43$B source tokens. Dynamic segmentation compresses about two tokens into one latent on average, with maximum span length $8$ and latent dimension $64$. The decoder is initialized from Qwen3-4B-Base, and the VAE encoder embedding is copied from the same model; the remaining compression modules are initialized from scratch.

The Qwen-initialized parameters remain frozen for the first $5.24$B source tokens and are subsequently updated at $0.1\times$ the main learning rate; the compression encoder also uses a $0.1\times$ multiplier. The learning rate is linearly warmed up from $10^{-6}$ to $10^{-4}$ over the first $5.24$B source tokens and then cosine-decayed to $10^{-6}$.

The decoder uses a causal dual-stream objective. Masked positions are supplied as \texttt{[MASK]} inputs, but the decoder self-attention remains causal rather than becoming bidirectional. Clean, noisy, and dropped latent conditions are sampled with proportions $0.4/0.3/0.3$, and their three token-loss weights are all $1$. The KL, duration, and encoder-mask auxiliary loss weights are $10^{-4}$, $0.1$, and $1$, respectively, and the encoder-mask span ratio is $0.15$.

\subsection{Stage II: Joint VAE and Latent-DiT Training}

Stage~II restores the complete VAE checkpoint, optimizer, scheduler, and data cursor from Stage~I. It introduces an approximately 4B-parameter latent DiT with $34$ Transformer layers, hidden dimension $2560$, $20$ attention heads, and latent block size $8$. This stage processes approximately $125.83$B source tokens.

The VAE continues to use the Stage-I causal dual-stream objective, with the encoder-mask span ratio increased to $0.30$. The DiT reads detached VAE latents, so its diffusion loss does not backpropagate into the VAE; the VAE is updated only by its compression and decoder objectives. The DiT uses velocity-prediction MSE, with timesteps sampled from $\operatorname{LogitNormal}(\mathrm{loc}=2,\mathrm{scale}=1)$.

The DiT learning rate is warmed up from $10^{-6}$ to $1.5\times10^{-4}$ and then cosine-decayed. The schedule is defined over a nominal 2.1T-token horizon, although training terminates before reaching that horizon.

\subsection{Stage III: Frozen VAE and Continued DiT Training}

Stage~III restores the complete DiT parameters, EMA, optimizer, learning-rate scheduler, and data cursor from the Stage-II checkpoint. The Stage-II EMA VAE encoder is frozen, and only the DiT continues to be optimized. The released model uses the matching EMA VAE decoder associated with this encoder.

The timestep distribution remains $\operatorname{LogitNormal}(\mathrm{loc}=2,\mathrm{scale}=1)$. This stage processes approximately $167.77$B source tokens from the retokenized OLMo~2 corpus.

\subsection{Source-Token Accounting}

\begin{table}[H]
    \centering
    \caption{\textbf{Final-checkpoint pretraining budget.} Token counts report source-token positions processed in each stage.}
    \label{tab:final-pretraining-budget}
    \small
    \setlength{\tabcolsep}{10pt}
    \begin{tabular}{lrl}
        \toprule
        \textbf{Stage} & \textbf{Source tokens} & \textbf{Primary update} \\
        \midrule
        Stage I   & 52.43B  & Dynamic-compression VAE \\
        Stage II  & 125.83B & VAE and latent DiT \\
        Stage III & 167.77B & Latent DiT only \\
        \midrule
        \textbf{Total} & \textbf{346.03B} & -- \\
        \bottomrule
    \end{tabular}
\end{table}

\end{document}